\newtheorem{theorem}{Theorem}
\newtheorem{proposition}[theorem]{Proposition}
\newtheorem{lemma}[theorem]{Lemma}
\newtheorem{definition}{Definition}
\newtheoremstyle{named}{}{}{\itshape}{}{\bfseries}{.}{.5em}{\thmnote{#3 }#1} \theoremstyle{named}
\newcommand{\leb}{\lambda}
\newcommand{\R}{{\mathbb R}}
\newcommand{\N}{{\mathbb N}}
\newcommand{\E}[1]{{\mathbb E}\left [#1\right]}
\renewcommand{\P}{{\mathbb P}}
\newcommand{\gives}{\ensuremath{\rightarrow}}
\newcommand{\x}{\ensuremath{\times}}
\newcommand{\e}{\mathbb E}
\newcommand{\abs}[1]{\ensuremath{\left| #1 \right|}}
\newcommand{\lr}[1]{\ensuremath{\left(#1 \right)}}
\newcommand{\norm}[1]{\left\lVert#1\right\rVert}
\newcommand{\inprod}[2]{\ensuremath{\left\langle#1,#2\right\rangle}}
\newcommand{\set}[1]{\ensuremath{\{#1\}}}
\newcommand{\mN}{\mathcal N}
\newcommand{\mM}{\mathcal M}
\newcommand{\mF}{\mathcal F}
\newcommand{\Kw}{K_{\mathrm{w}}}
\newcommand{\Kb}{K_{\mathrm{b}}}
\newcommand{\Dww}{\Delta_{\mathrm{ww}}}
\newcommand{\Dwb}{\Delta_{\mathrm{wb}}}
\newcommand{\Dbb}{\Delta_{\mathrm{bb}}}
\newcommand{\mE}[1]{\mathcal E\left(#1\right)}
\def\XXint#1#2#3{{\setbox0=\hbox{$#1{#2#3}{\int}$} \vcenter{\hbox{$#2#3$}}\kern-.5\wd0}}
\DeclareMathOperator{\Relu}{ReLU}
\DeclareMathOperator{\Var}{Var}
\DeclareMathOperator{\wt}{wt}
\title[NTK at Finite Depth and Width]{Finite Depth and Width Corrections to the Neural Tangent Kernel}
\author[Boris Hanin and Mihai Nica]{Boris Hanin$^{\,*}$ and Mihai Nica}
\address{Department of Mathematics, Texas A\&M University}
\email[B. Hanin]{bhanin@math.tamu.edu}
\address{Department of Mathematics, University of Toronto}
\email[B. Hanin]{mnica@math.utoronto.edu}
\thanks{$\,^{\,*}$ BH is supported by NSF grant DMS-1855684. This work was done partly while BH was visiting Facebook AI Research in NYC and partly while BH was visiting the Simons Institute for the Theory of Computation in Berkeley. He thanks both for their hospitality.}
\begin{document}
\maketitle
\begin{abstract}
     We prove the precise scaling, at finite depth and width, for the mean and variance of the neural tangent kernel (NTK) in a randomly initialized ReLU network. The standard deviation is exponential in the ratio of network depth to width. Thus, even in the limit of infinite overparameterization, the NTK is not deterministic if depth and width simultaneously tend to infinity. Moreover, we prove that for such deep and wide networks, the NTK has a non-trivial evolution during training by showing that the mean of its first SGD update is also exponential in the ratio of network depth to width. This is sharp contrast to the regime where depth is fixed and network width is very large. Our results suggest that, unlike relatively shallow and wide networks, deep and wide ReLU networks are capable of learning data-dependent features even in the so-called lazy training regime. 
\end{abstract}

\section{Introduction}
Modern neural networks typically overparameterized: they have many more parameters than the size of the datasets on which they are trained. That some setting of parameters in such networks can interpolate the data is therefore not surprising. But it is \textit{a priori} unexpected that not only can such interpolating parameter values can be found by stochastic gradient descent (SGD) on the highly non-convex empirical risk but that the resulting network function not only interpolates but also extrapolates to unseen data. In an overparameterized neural network $\mN$ individual parameters can be difficult to interpret, and one way to understand training is to rewrite the SGD updates
\[\Delta \theta_p ~~=~~ -\leb ~\frac{\partial \mathcal L}{ \partial \theta_p},\qquad p=1,\ldots, P\]
of trainable parameters $\theta=\set{\theta_p}_{p=1}^P$ with a loss $\mathcal L$ and learning rate $\leb$ as kernel gradient descent updates for the values $\mN(x)$ of the function computed by the network:
\begin{equation}\label{E:functional-GD}
\Delta \mN(x)~~=~~-\leb \inprod{K_{\mN}(x,\cdot)}{\nabla \mathcal L (\cdot)}~~=~~-\frac{\leb}{\abs{\mathcal B}}\sum_{j=1}^{\abs{\mathcal B}} K_{\mN}(x,x_j) \frac{\partial \mathcal L}{\partial \mN}(x_j,y_j).
\end{equation}
Here $\mathcal B = \set{(x_1,y_1),\ldots, (x_{\abs{\mathcal B}},y_{\abs{\mathcal B}})}$ is the current batch, the inner product is the empirical $\ell_2$ inner product over $\mathcal B$, and $K_{\mN}$ is the \textit{neural tangent kernel} (NTK):
\[K_{\mN}(x,x')~~=~~\sum_{p=1}^P \frac{\partial \mN}{\partial \theta_p}(x)\frac{\partial \mN}{\partial \theta_p}(x').\]
Relation \eqref{E:functional-GD} is valid to first order in $\lambda.$ It translates between two ways of thinking about the difficulty of neural network optimization:
\begin{enumerate}
\item[(i)] The parameter space view where the loss $\mathcal L$, a complicated function of $\theta\in \R^{\#\text{parameters}},$ is minimized using gradient descent with respect to a simple (Euclidean) metric;
\item[(ii)] The function space view where the loss $\mathcal L$, which is a simple function of the network mapping $x\mapsto \mN(x)$, is minimized over the manifold $\mM_{\mN}$ of all functions representable by the architecture of $\mN$ using gradient descent with respect to a potentially complicated Riemannian metric $K_{\mN}$ on $\mM_{\mN}.$
\end{enumerate}
A remarkable observation of Jacot et. al. in \cite{jacot2018neural} is that $K_{\mN}$ simplifies dramatically when the network depth $d$ is fixed and its width $n$ tends to infinity. In this setting, by the universal approximation theorem \cite{cybenko1989approximation,hornik1989multilayer}, the manifold $\mM_{\mN}$ fills out any (reasonable) ambient linear space of functions. The results in \cite{jacot2018neural} then show that the kernel $K_{\mN}$ in this limit is frozen throughout training to the infinite width limit of its average $\E{K_{\mN}}$ at initialization, which depends on the depth and non-linearity of $\mN$ but not on the dataset. 

This reduction of neural network SGD to kernel gradient descent for a fixed kernel can be viewed as two separate statements. First, at initialization, the distribution of $K_{\mN}$ converges in the infinite width limit to the delta function on the infinite width limit of its mean $\E{K_{\mN}}$. Second, the infinite width limit of SGD dynamics in function space is kernel gradient descent for this limiting mean kernel for any fixed number of SGD iterations. This shows that as long as the loss $\mathcal L$ is well-behaved with respect to the network outputs $\mN(x)$ and $\E{K_{\mN}}$ is non-degenerate in the subspace of function space given by values on inputs from the dataset, SGD for infinitely wide networks will  converge with probability $1$ to a minimum of the loss. Further, kernel method-based theorems show that even in this infinitely overparameterized regime neural networks will have non-vacuous guarantees on generalization \cite{wei2018regularization}. 

But replacing neural network training by gradient descent for a fixed kernel in function space is also not completely satisfactory for several reasons. First, it shows that no feature learning occurs during training for infinitely wide networks in the sense that the kernel $\E{K_{\mN}}$ (and hence its associated feature map) is data-independent. In fact, empirically, networks with finite but large width trained with initially large learning rates often outperform NTK predictions at infinite width. One interpretation is that, at finite width, $K_{\mN}$ evolves through training, learning data-dependent features not captured by the infinite width limit of its mean at initialization. In part for such reasons, it is important to study both empirically and theoretically finite width corrections to $K_{\mN}$. Another interpretation is that the specific NTK scaling of  weights at initialization \cite{chizat2018note,chizat2018global, mei2019mean, mei2018mean, rotskoff2018parameters, rotskoff2018neural} and the implicit small learning rate limit \cite{li2019towards} obscure important aspects of SGD dynamics. Second, even in the infinite width limit, although $K_{\mN}$ is deterministic, it has no simple analytical formula for deep networks, since it is defined via a layer by layer recursion. In particular, the exact dependence, even in the infinite width limit, of $K_{\mN}$ on network depth is not well understood.

Moreover, the joint statistical effects of depth and width on $K_{\mN}$ in \textit{finite size networks} remain unclear, and the purpose of this article is to shed light on the simultaneous effects of depth and width on $K_{\mN}$ for finite but large widths $n$ and any depth $d$. Our results apply to fully connected ReLU networks at initialization for which we will show:
\begin{enumerate}
\item In contrast to the regime in which the depth $d$ is fixed but the width $n$ is large, $K_{\mN}$ is \textit{not} approximately deterministic at initialization so long as $d/n$ is bounded away from $0$. Specifically, for a fixed input $x$ the normalized on-diagonal second moment of $K_{\mN}$ satisfies
\[\frac{\E{K_{\mN}(x,x)^2}}{\E{K_{\mN}(x,x)}^2}~\simeq~\exp(5 d/n)\lr{1+O(d/n^2)}.\]
Thus, when $d/n$ is bounded away from $0$, even when both $n,d$ are large, the standard deviation of $K_{\mN}(x,x)$ is at least as large as its mean, showing that its distribution at initialization is not close to a delta function. See Theorem \ref{T:NTK}.\\
\item Moreover, when $\mathcal L$ is the square loss, the average of the SGD update $\Delta K_{\mN}(x,x)$ to $K_{\mN}(x,x)$ from a batch of size one containing $x$ satisfies
\[\frac{\E{\Delta K_{\mN}(x,x)}}{\E{K_{\mN}(x,x)}}~\simeq~\frac{d^2}{n n_0}\exp(5 d/n)\lr{1+O(d/n^2)},\]
where $n_0$ is the input dimension. Therefore, if $d^2/nn_0>0,$ the NTK will have the potential to evolve in a data-dependent way. Moreover, if $n_0$ is comparable to $n$ and $d/n>0$ then it is possible that this evolution will have a well-defined expansion in $d/n.$ See Theorem \ref{T:deriv}. 
\end{enumerate}
In both statements above, $\simeq$ means is bounded above and below by universal constants. We emphasize that our results hold at finite $d,n$ and the implicit constants in both $\simeq$ and in the error terms $O(d/n^2)$ are independent of $d,n.$ Moreover, our precise results, stated in \S \ref{S:formal} below, hold for networks with variable layer widths. We have denoted network width by $n$ only for the sake of exposition. The appropriate generalization of $d/n$ to networks with varying layer widths is the parameter
\[\beta~:=~\sum_{i=1}^d \frac{1}{n_j},\]
which in light of the estimates in (1) and (2) plays the role of an inverse temperature. 

\subsection{Prior Work}
A number of articles \cite{bietti2019inductive,dyer2018asymptotics,lee2019wide,yang2019scaling} have followed up on the original NTK work \cite{jacot2018neural}. Related in spirit to our results is the article \cite{dyer2018asymptotics}, which uses Feynman diagrams to study finite width corrections to general correlations functions (and in particular the NTK). The most complete results obtained in \cite{dyer2018asymptotics} are for deep linear networks but a number of estimates hold general non-linear networks as well. The results there, like in essentially all previous work, fix the depth $d$ and let the layer widths $n$ tend to infinity. The results here and in \cite{hanin2018neural, hanin2018products,hanin2018start}, however, do not treat $d$ as a constant, suggesting that the $1/n$ expansions (e.g. in \cite{dyer2018asymptotics}) can be promoted to $d/n$ expansions. Also, the sum-over-path approach to studying correlation functions in randomly initialized ReLU nets was previously taken up for the foward pass in \cite{hanin2018start} and for the backward pass in \cite{hanin2018neural} and \cite{hanin2018products}. 

\subsection{Implications and Future Work} Taken together (1) and (2) above (as well as Theorems \ref{T:NTK} and \ref{T:deriv}) show that in fully connected ReLU nets that are both deep and wide the neural tangent kernel $K_{\mN}$ is genuinely stochastic and enjoys a non-trivial evolution during training. This suggests that in the overparameterized limit $n,d\gives \infty$ with $d/n\in (0,\infty)$, the kernel $K_{\mN}$ may learn data-dependent features. Moreover, our results show that the fluctuations of both $K_{\mN}$ and its time derivative are exponential in the inverse temperature $\beta = d/n.$ 

It would be interesting to obtain an exact description of its statistics at initialization and to describe the law of its trajectory during training. Assuming this trajectory turns out to be data-dependent, our results suggest that the double descent curve \cite{belkin2018reconciling,belkin2019two,spigler2018jamming} that trades off complexity vs. generalization error may display significantly different behaviors depending on the mode of network overparameterization.

However, it is also important to point out that the results in \cite{hanin2018neural, hanin2018products,hanin2018start} show that, at least for fully connected ReLU nets, gradient-based training is not numerically stable unless $d/n$ is relatively small (but not necessarily zero). Thus, we conjecture that there may exist a ``weak feature learning'' NTK regime in which network depth and width are both large but $0<d/n\ll 1$. In such a regime, the network will be stable enough to train but flexible enough to learn data-dependent features. In the language of \cite{chizat2018note} one might say this regime displays weak lazy training in which the model can still be described by a stochastic positive definite kernel whose fluctuations can interact with data. 

Finally, it is an interesting question to what extent our results hold for non-linearities other than ReLU and for network architectures other than fully connected (e.g. convolutional and residual). Typical ConvNets, for instance, are significantly wider than they are deep, and we leave it to future work to adapt the techniques from the present article to these more general settings.

\section{Formal Statement of Results}\label{S:formal}
Consider a ReLU network $\mN$ with input dimension $n_0$, hidden layer widths $n_1,\ldots, n_{d-1}$, and output dimension $n_{d}=1$. We will assume that the output layer of $\mN$ is linear and initialize the biases in $\mN$ to zero. Therefore, for any input $x\in \R^{n_0},$ the network $\mN$ computes $\mN(x)=x^{(d)}$ given by
\begin{equation}\label{E:N-def}
x^{(0)}=x,\quad y^{(i)}~:=~\widehat{W}^{(i)}x^{(i-1)},\quad x^{(i)}:=\Relu(y^{(i)}),\qquad i=1,\ldots, d,
\end{equation}
where for $i=1,\ldots, d-1$
\begin{equation}\label{E:W-def}
\widehat{W}^{(d)}~:=~(1/n_{i-1})^{-1/2}W^{(i)},\quad \widehat{W}^{(i)}~:=~(2/n_{i-1})^{-1/2}W^{(i)},\qquad W_{\alpha,\beta}^{(i)}~\sim~\mu\,\,\,i.i.d.,
\end{equation}
and $\mu$ is a fixed probability measure on $\R$ that we assume has a density with respect to Lebesgue measure and satisfies:
\begin{equation}\label{E:mu-def}
\mu\text{ is symmetric around }0,\qquad \Var[\mu]~=~1,\qquad \int_{-\infty}^\infty x^4d\mu(x)~=~\mu_4<\infty.
\end{equation}
The three assumptions in \eqref{E:mu-def} hold for vitually all standard network initialization schemes. The on-diagonal NTK is
\begin{equation}\label{E:K-def}
K_{\mN}(x,x)~:=~\sum_{j=1}^d\sum_{\alpha=1}^{n_{j-1}}\sum_{\beta=1}^{n_j}
\bigg(\frac{\partial \mN}{\partial W_{\alpha,\beta}^{(j)}}(x)\bigg)^2+\sum_{j=1}^d \sum_{\beta=1}^{n_j}\bigg(\frac{\partial \mN}{\partial b_{\beta}^{(j)}}(x)\bigg)^2.
\end{equation}
We emphasize that although we have initialized the biases to zero, they are not removed them from the list of trainable parameters. Our first result is the following:
\begin{theorem}[Mean and Variance of NKT on Diagonal at Init]\label{T:NTK}
We have
\[\E{K_{\mN}(x,x)}~=~d\lr{\frac{1}{2}+\frac{\norm{x}_2^2}{n_0}}.\]
Moreover, we have that $\E{K_{\mN}(x,x)^2}$ is bounded above and below by universal constants times
\begin{align*}
\exp\lr{5\beta}\lr{\frac{d^2\norm{x}_2^4}{n_0^2}~+~ \frac{d\norm{x}_2^2}{n_0} \sum_{j=1}^d e^{-5 \sum_{i=1}^{j} \frac{1}{n_i}} ~+~\sum_{\substack{i,j=1\\ i\leq j}}^d e^{-5 \sum_{i=1}^{j} \frac{1}{n_i}}},\qquad \beta~=~\sum_{i=1}^{d}\frac{1}{n_i}
\end{align*}
times a multiplicative error $\lr{1+O\lr{\sum_{i=1}^d \frac{1}{n_i^2}}}$, where $f \simeq g$ means $f$ is bounded above and below by universal constants times $g.$ In particular, if all the hidden layer widths are equal (i.e. $n_i=n$, for $i=1,\ldots, d-1$), we have
\[\frac{\E{K_{\mN}(x,x)^2}}{\E{K_{\mN}(x,x)}^2}~\simeq~\exp\left(5\beta\right)\lr{1+O\lr{\beta/n}},\qquad \beta = d/n.\]
\end{theorem}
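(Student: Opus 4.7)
I would begin by decomposing the NTK into layerwise contributions. Letting $\delta^{(j)}_\beta := \partial \mN / \partial y^{(j)}_\beta$ denote the back-propagated gradient at the preactivation level, the chain rule together with the scaling in \eqref{E:W-def} gives
\[K_{\mN}(x,x) ~=~ \sum_{j=1}^d \left(\frac{\kappa_j}{n_{j-1}} \norm{x^{(j-1)}}_2^2 + 1\right) \norm{\delta^{(j)}}_2^2,\]
with $\kappa_j = 2$ for $j < d$ and $\kappa_d = 1$, reflecting the change in variance at the linear output layer. This reduces the theorem to controlling joint moments of the forward norms $\norm{x^{(j)}}_2^2$ and the backward norms $\norm{\delta^{(j)}}_2^2$.

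For the mean, the key ingredients are $\E{\norm{x^{(j)}}_2^2} = n_j \norm{x}_2^2/n_0$ for $j < d$ and $\E{\norm{\delta^{(j)}}_2^2} = 1/2$ for $j < d$ (with $\norm{\delta^{(d)}}_2^2 = 1$). Both follow from one-layer inductions that exploit the symmetry of $\mu$—which makes each $\Relu$ active with probability exactly $1/2$—and the He scaling, which preserves second moments. Because $\norm{x^{(j-1)}}_2^2$ depends only on weights in layers $1, \ldots, j-1$ while $\norm{\delta^{(j)}}_2^2$ depends (given the forward activation pattern) only on weights in layers $j+1, \ldots, d$, conditioning on the forward pass and integrating over the independent backward weights produces the factorization needed to evaluate each layer's mean in closed form, after which summing gives the stated mean formula.

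The second moment is the heart of the matter. The fundamental recursion is for $\E{M_j^2}$ with $M_j := \norm{x^{(j)}}_2^2/n_j$: conditioning on $x^{(j-1)}$, the random variable $n_j M_j /(2 M_{j-1})$ is a sum of $n_j$ i.i.d.\ copies of $\Relu(\zeta)^2$ for a symmetric unit-variance $\zeta$, whose variance equals $5/4$ in the Gaussian-like regime dictated by \eqref{E:mu-def}. The resulting recursion $\E{M_j^2} = (1 + 5/n_j + O(1/n_j^2))\,\E{M_{j-1}^2}$ iterates to $\E{M_d^2} \simeq M_0^2 \exp(5\beta)(1 + O(\sum_i 1/n_i^2))$. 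An analogous—but more intricate—recursion, obtained via the sum-over-paths representation from \cite{hanin2018start, hanin2018neural, hanin2018products}, governs the cross-layer joint moments $\E{\norm{x^{(j-1)}}_2^2 \norm{\delta^{(j)}}_2^2 \norm{x^{(j'-1)}}_2^2 \norm{\delta^{(j')}}_2^2}$ appearing in $\E{K_{\mN}(x,x)^2}$; the backward factor $\norm{\delta^{(j)}}_2^2$ only accumulates exponential growth over the upper layers $j+1, \ldots, d$, producing the decay factor $e^{-5 \sum_{i=1}^j 1/n_i}$ that appears inside the sums in the statement.

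Finally, I would expand $\E{K_{\mN}(x,x)^2} = \sum_{j,j'} \E{C_j C_{j'}}$, where $C_j$ is the layer summand above, and partition the sum according to which component (weight or bias) dominates each factor. Weight-weight pairs produce the $d^2 \norm{x}_2^4/n_0^2$ term, mixed weight-bias pairs produce the cross term $\tfrac{d\norm{x}_2^2}{n_0} \sum_j e^{-5 \sum_{i=1}^j 1/n_i}$, and bias-bias pairs produce the $\sum_{i \le j} e^{-5 \sum_{i=1}^j 1/n_i}$ term. The main obstacle is expected to be the joint-moment recursion for the quartics $\norm{x^{(j-1)}}_2^2 \norm{\delta^{(j)}}_2^2 \norm{x^{(j'-1)}}_2^2 \norm{\delta^{(j')}}_2^2$ with $j \ne j'$, since the forward and backward quantities at different layers are coupled through the shared $\Relu$ activation masks $D^{(i)}$; taming this coupling via path-sum identities and fourth-moment control on the weight distribution is the technical crux. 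The equal-width corollary then follows by comparing $\prod_i(1 + 5/n_i)$ with $\exp(5\beta)$ and evaluating the resulting geometric sums.
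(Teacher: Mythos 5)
Your layerwise decomposition $K_{\mN}(x,x)=\sum_j\bigl(\tfrac{\kappa_j}{n_{j-1}}\|x^{(j-1)}\|_2^2+1\bigr)\|\delta^{(j)}\|_2^2$ is correct and cleanly reproduces the mean (the paper instead splits globally into weight and bias contributions and computes everything by summing over paths in the computational graph, but the two decompositions are equivalent after expanding). However, two points in your plan for the second moment are genuine gaps rather than deferred routine work. First, your recursion $\E{M_j^2}=(1+5/n_j+O(1/n_j^2))\E{M_{j-1}^2}$ is not correct as stated for a general $\mu$ satisfying \eqref{E:mu-def}: conditioning on $x^{(j-1)}$, one finds $\Var\bigl(\Relu(y^{(j)}_\beta)^2\bigr)=5M_{j-1}^2+\tfrac{2(\mu_4-3)}{n_{j-1}^2}\|x^{(j-1)}\|_4^4$, and since $\|x^{(j-1)}\|_4^4\le\|x^{(j-1)}\|_2^4$ with equality possible, the second term is a priori of the \emph{same} order $1/n_j$ as the main term, not $O(1/n_j^2)$. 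Pinning the constant in the exponent to $5$ therefore requires proving a delocalization estimate $\E{\|x^{(j-1)}\|_4^4}\lesssim \E{\|x^{(j-1)}\|_2^4}/n_{j-1}$, i.e.\ running a coupled recursion for the $\ell^4$ norm as well. This is exactly what the paper's combinatorics encodes: in the two-path expectation the $\mu_4$ factor only appears when the two paths collide at two consecutive layers (probability $\sim 1/(n_{i-1}n_i)$), which is why it is relegated to the $O(\sum_i 1/n_i^2)$ error (see the $\gamma_i=\mu_4/3$ term in Proposition \ref{P:elementary} and the $\mu_4^{{\bf 1}\{|V(i-1)|=|V(i)|=1\}}$ factor in $\widehat F_*$). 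Your plan needs this ingredient made explicit; "the Gaussian-like regime dictated by \eqref{E:mu-def}" is not available, since \eqref{E:mu-def} allows arbitrary $\mu_4<\infty$.

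Second, the cross-layer moments $\E{C_jC_{j'}}$ with $j\ne j'$ — which produce all three terms in the stated bound and are the actual content of the theorem — are not handled by your recursion at all; you defer them to "path-sum identities," which is precisely the machinery of Lemmas \ref{L:paths-raw}--\ref{L:2paths}. Relatedly, the factorization you invoke between forward and backward quantities cannot rest on independence: $\|\delta^{(j)}\|_2^2$ depends on the activation masks ${\bf 1}\{y^{(i)}>0\}$ for $i>j$, which are functions of the \emph{same} weights appearing in the backward product. The paper circumvents this by symmetrizing over $\pm\widehat W^{(i)}$ one layer at a time (using that $\mu$ is symmetric and has a density, so $y^{(i)}_\beta\ne 0$ a.s.), which decouples each mask as an independent Bernoulli$(1/2)$ conditionally on the previous layer. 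Your proposal should either import that argument or replace it; as written, "integrating over the independent backward weights" asserts an independence that does not hold. In short: the skeleton is sound and the mean is fine, but the two places where the theorem is actually hard — the non-Gaussian fourth-moment correction and the coupled cross-layer terms — are exactly the places your plan either states an incorrect error bound or falls back on the paper's own method.
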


\noindent This result shows that in the deep and wide double scaling limit 
\[n_i,d\gives \infty,\qquad 0<\lim_{n_i,d\gives \infty}\sum_{i=1}^d \frac{1}{n_i}<\infty,\]
the NTK does \emph{not} converge to a constant in probability. This is contrast to the wide and shallow regime $n_i\gives \infty$ and $d<\infty.$ is fixed.

Our next result shows that when $\mathcal L$ is the square loss $K_{\mN}(x,x)$ is not frozen during training. To state it, fix an input $x\in \R^{n_0}$ to $\mN$ and define $\Delta K_{\mN}(x,x)$ to be the update from one step of SGD with a batch of size $1$ containing $x$ (and learning rate $\lambda$). 
\begin{theorem}[Mean of Time Derivative of NTK on Diagonal at Init]\label{T:deriv}
 We have that $\E{\lambda^{-1}\Delta K_{\mN}(x,x)}$ is bounded above and below by universal constants times
\[\left[\frac{\norm{x}_2^4}{n_0^2}\sum_{\substack{i_1,i_2=1\\i_i<i_2}}^d \sum_{\ell=i_1}^{i_2-1} \frac{1}{n_\ell} e^{-5/n_\ell-6\sum_{i=i_1}^{\ell}\frac{1}{n_i}}
~+~
\frac{\norm{x}_2^2}{n_0}\sum_{\substack{i_i,i_2=1\\i_1<i_2}}^d e^{-5\sum_{i=1}^{i_1} \frac{1}{n_i}}\sum_{\ell=i_1}^{i_2-1} \frac{1}{n_\ell} e^{-6\sum_{i=i_1+1}^{\ell-1}\frac{1}{n_i}}\right]\exp\lr{5\beta}\]
times a multiplicative error of size $\lr{1+O\lr{\sum_{i=1}^d\frac{1}{n_i^2}}}$, where as in Theorem \ref{T:NTK}, $\beta = \sum_{i=1}^d 1/n_i.$ In particular, if all the hidden layer widths are equal (i.e. $n_i=n$, for $i=1,\ldots, d-1$), we find
\[\frac{\E{\Delta K_{\mN}(x,x)}}{\E{K_{\mN}(x,x)}}~\simeq~ \frac{d\beta}{n_0}\exp\lr{5\beta}\lr{1+O\lr{d/n^2}},\qquad \beta ~=~ d/n.\]
\end{theorem}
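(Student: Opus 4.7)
The plan is to reduce $\E{\lambda^{-1}\Delta K_{\mN}(x,x)}$ to a four-point correlator in the network weights and then estimate it using the sum-over-paths technology behind Theorem \ref{T:NTK}. Writing $\mathcal L=\tfrac12(\mN(x)-y)^2$, a first-order expansion of the SGD update gives
\[
\lambda^{-1}\Delta K_{\mN}(x,x) ~=~ -2\bigl(\mN(x)-y\bigr)\,T(x) + O(\lambda),
\qquad
T(x) := \sum_{p,q} \frac{\partial \mN}{\partial \theta_p}(x)\,\frac{\partial \mN}{\partial \theta_q}(x)\,\frac{\partial^2\mN}{\partial \theta_p\partial\theta_q}(x).
\]
A parity argument in the output weights shows $\E{T(x)}=0$: each nonzero term in $T$ contains exactly three last-layer weights $W^{(d)}_{\alpha,1}$ as a monomial, so at least one of them appears to odd power, and the expectation vanishes by the symmetry of $\mu$. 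The label $y$ thus drops out and it suffices to bound $\E{\mN(x)\,T(x)}$.

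For this I would expand every factor in $\mN\,T$ via the ReLU sum-over-paths representation used in \cite{hanin2018start,hanin2018neural,hanin2018products}: $\mN$, its gradient, and its Hessian become weighted sums over paths $\pi$ from the input layer to the output neuron, with differentiating by a specific weight restricting the sum to paths through the corresponding edge (and removing that weight from the weight product). Parametrizing the Hessian-indexing pair $(p,q)$ by edges at layers $i_1<i_2$ (the case $i_1=i_2$ contributes zero by multilinearity of $\mN$ in a single layer's weights), $\E{\mN(x)\,T(x)}$ becomes a sum over path quadruples $(\pi_0,\pi_1,\pi_2,\pi_3)$ in which $\pi_1,\pi_3$ share their layer-$i_1$ edge and $\pi_2,\pi_3$ share their layer-$i_2$ edge. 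The weight expectation factors across layers and is nonzero only when the at-most-four edges used at each layer form a valid even pairing, while the residual expectation over activation indicators is handled by conditioning on pre-activations exactly as in the proof of Theorem \ref{T:NTK}.

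This layer-by-layer bookkeeping is organized by a transfer matrix on a small space of ``coincidence states'' of the four paths, analogous to the two-path Markov chain that drives Theorem \ref{T:NTK}. Its analysis splits into three regimes — layers $\le i_1$, strictly between $i_1$ and $i_2$, and $\ge i_2$ — and inside the middle regime one further sums over an internal layer $\ell\in[i_1,i_2-1]$ at which a triple coincidence of $\pi_1,\pi_2,\pi_3$ is created and then broken, each transition carrying a variance factor $1/n_\ell$. The main obstacle is the careful classification of these four-path coincidence states and the verification that transitions through the middle regime produce exactly the factors $e^{-5/n_\ell-6\sum_{i=i_1}^{\ell}1/n_i}$ and $e^{-6\sum_{i=i_1+1}^{\ell-1}1/n_i}$ in the statement: the rate $6$ replaces the rate $5$ of Theorem \ref{T:NTK} because the activation-indicator expectation is more strongly suppressed when three, rather than two, paths must coexist at a given layer. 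With the transfer matrix in hand, summing over $i_1<i_2$ and $\ell$ and packaging the layer-wise corrections into the $1+O(\sum 1/n_i^2)$ error reproduces the two-term bound, with the $\norm{x}_2^4/n_0^2$ term coming from configurations where $\pi_0$ and its partner descend to the input layer still paired, and the $\norm{x}_2^2/n_0$ term from those where they decouple before reaching it.
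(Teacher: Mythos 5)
Your treatment of the pure-weight contribution follows essentially the same route as the paper: expand the SGD update to first order in $\lambda$, kill the label term by the odd-degree parity of the output-layer weights (the paper does exactly this at the end of the proof of Lemma \ref{L:paths-raw}), expand the surviving four-point correlator as a sum over quadruples of paths with two marked edges at layers $i_1<i_2$, and evaluate the layerwise expectation by tracking the coincidence pattern of the paths, with the sum over the interior layer $\ell$ at which the pairing switches supplying the $\frac{1}{n_\ell}e^{-\cdots}$ factors. This is how the paper obtains the first summand (Lemmas \ref{L:paths-raw}, \ref{L:2k-paths-avg}, \ref{L:Jac-1} and Proposition \ref{P:elementary}), so that part of your plan is sound, modulo the combinatorial bookkeeping you defer.

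The genuine gap is that you never account for the bias parameters. The biases are initialized to zero but remain trainable, so the sum over $(p,q)$ in your $T(x)$ contains weight--bias pairs; the paper's decomposition is $-\frac{1}{2\lambda}\Delta K_{\mN}=\Dww+2\Dwb+\Dbb$ with $\Dbb=0$ almost surely but $\E{\Dwb}\neq 0$. Your expansion only describes derivatives with respect to weights (``paths from the input layer to the output neuron \ldots restricting the sum to paths through the corresponding edge''), whereas $\partial\mN/\partial b^{(j)}_\beta$ produces paths that start at the interior neuron $z(j,\beta)$ and never touch the input layer. This is not cosmetic: in the pure-weight sector every admissible quadruple contributes $\prod_{k}x_{a_k}$ with the starting neurons paired at the input, so the $x$-dependence is always $\norm{x}_4^4$ or $\norm{x}_2^4-\norm{x}_4^4$, i.e.\ degree four in $x$. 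The entire second summand of the theorem, which is degree two in $x$, comes from $\E{\Dwb}$ (Proposition \ref{P:wb-moments}), where two of the four paths begin at the bias neuron and only two reach the input. Your attribution of that term to configurations ``where $\pi_0$ and its partner decouple before reaching the input layer'' cannot be correct, since decoupled pairs still land on input neurons and contribute $\norm{x}_2^4-\norm{x}_4^4$; as written, your argument recovers only the first summand. A smaller quibble: the rate $6$ in the middle regime is not caused by stronger suppression of the activation indicators under a triple coincidence; it is $5+1$, with $5$ coming from the loss of the usual $e^{5/n_i}$ enhancement on the layers where the reduced pair of paths is forced apart and $1$ from the survival probability of staying apart until layer $\ell$.
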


Observe that when $d$ is fixed and $n_i=n\gives \infty,$ the pre-factor in front of $\exp\lr{5d/n}$ scales like $1/n$. This is in keeping with the results from \cite{dyer2018asymptotics,jacot2018neural}. Moreover, it shows that if  $d,n,n_0$ grow in any way so that $d\beta/n_0 = d^2/n n_0 \gives 0$, the update $\Delta K_{\mN}(x,x)$ to $K_{\mN}(x,x)$ from the batch $\set{x}$ at initialization will have mean $0.$ It is unclear whether this will be true also for larger batches and when the arguments of $K_{\mN}$ are not equal. In contrast, if $n_i\simeq n$ and $\beta= d/n$ is bounded away from $0,\,\infty$, and the $n_0$ is proportional to $d,$ the average update $\E{\Delta K_{\mN}(x)}$ has the same order of magnitude as $\E{K_{\mN}(x)}$. 

\subsection{Organization for the Rest of the Article} The remainder of this article is structured as follows. First, in \S \ref{S:notation} we introduce some notation about paths and edges in the computation graph of $\mN$. This notation will be used in the proofs of Theorems \ref{T:NTK} and \ref{T:deriv}, which are outlined in \S \ref{S:outline} and particularly in \S \ref{S:idea} where give an in-depth but informal explanation of our strategy for computing moments of $K_{\mN}$ and its time derivative. Then, \S \ref{S:w-moments-pf}-\S\ref{S:wb-moments-pf} give the detailed argument. The computations in \S \ref{S:w-moments-pf} explain how to handle the contribution to $K_{\mN}$ and $\Delta K_{\mN}$ coming only from the weights of the network. They are the most technical and we give them in full detail. Then, the discussion in \S \ref{S:b-moments-pf} and \S \ref{S:wb-moments-pf} show how to adapt the method developed in \S \ref{S:w-moments-pf} to treat the contribution of biases and mixed bias-weight terms in $K_{\mN},K_{\mN}^2$ and $\Delta K_{\mN}$. Since the arguments are simpler in these cases, we omit some details and focus only on highlighting the salient differences.

\section{Notation}\label{S:notation} In this section, we introduce some notation, adapted in large part from \cite{hanin2018products}, that will be used in the proofs of Theorems \ref{T:NTK} and \ref{T:deriv}. For $n\in \N$, we will write
\[[n]~:=~\set{1,\ldots, n}.\]
It will also be convenient to denote 
\[[n]_{even}^k~:=~\set{a\in [n]^k~|~\text{every entry in }a\text{ appears an even number of times}}.\]
Given a ReLU network $\mN$ with input dimension $n_0,$ hidden layer widths $n_1,\ldots, n_{d-1}$, and output dimension $n_d=1$, its \textit{computational graph} is a directed multipartite graph whose vertex set is the disjoint union $[n_0]\coprod \cdots \coprod [n_d]$ and in which edges are all possible ways of connecting vertices from $[n_{i-1}]$ with vertices from $[n_i]$ for $i=1,\ldots, d.$ The vertices are the \textit{neurons} in $\mN$, and we will write for $\ell\in \set{0,\ldots, d}$ and $\alpha \in [n_\ell]$
\begin{equation}
    z(\ell, \alpha)~:=~\text{neuron number }\alpha\text{ in layer }\ell. 
\end{equation}
\begin{definition}[Path in the computational graph of $\mN$]
Given $0\leq \ell_1<\ell_2\leq d$ and $\alpha_1\in[n_{\ell_1}], \, \alpha_2\in [n_{\ell_2}]$, a  \textit{path} $\gamma$ in the computational graph of $\mN$ from neuron $z(\ell_1, \alpha_1)$ to neuron $z(\ell_2, \alpha_2)$ is a collection of neurons in layers $\ell_1,\ldots, \ell_2$:
\begin{equation}\label{E:gamma-def}
\gamma~=~\lr{\gamma(\ell_1),\ldots, \gamma(\ell_2)},\qquad \gamma(j)\in [n_j],\quad \gamma(\ell_1)=\alpha_1,\, \gamma(\ell_2)=\alpha_2.
\end{equation}
\end{definition} 
\noindent Further, we will write
\[Z^k~=~\set{(z_1,\ldots, z_k)~|~ z_j\text{ are neurons in }\mN}.\]
Given a collection of neurons 
\[
Z~=~\lr{z(\ell_1, \alpha_1),\ldots, z(\ell_k,\alpha_k)}\in Z^k
\]
we denote by 
\[\Gamma_Z^k~:=~\left\{\lr{\gamma_1,\ldots, \gamma_k}~\big|~\substack{\gamma_j\text{ is a path starting at neuron }z(\ell_j,\alpha_j)\\\text{ ending at the output neuron }z(d,1)}\right\}\]
Note that with this notation, we have $\gamma_i \in \Gamma_{z(\ell_i,\alpha_i)}^1$ for each $i=1,\ldots,k$. For $\Gamma\in \Gamma_Z^k,$ we also set
\[\Gamma(\ell)~=~\set{\alpha\in [n_\ell]~|~\exists j\in [k]\text{ s.t. } \gamma_j(\ell)=k}.\]
Correspondingly, we will write
\begin{equation}\label{E:abs-Gamma}
\abs{\Gamma(\ell)}~:=~\# \text{ distinct elements in }\Gamma(\ell).
\end{equation}
If each edge $e$ in the computational graph of $\mN$ is assigned a weight $\widehat{W}_e$, then associated to a path $\gamma$ is a collection of weights:
\begin{equation}\label{E:W-gamma-def}
\widehat{W}_{\gamma}^{(i)}~:=~\widehat{W}_{\lr{\gamma(i-1),\gamma(i)}}.
\end{equation}
\begin{definition}[Weight of a path in the computational graph of $\mN$]
Fix $0\leq \ell\leq d$, and let $\gamma$ be a path in the computation graph of $\mN$ starting at layer $\ell$ and ending at the output. The weight of a this path at a given input $x$ to $\mN$ is
\begin{equation}\label{E:path-weight-def}
    \wt(\gamma)~:=~\widehat{W}_\gamma^{(d)} \prod_{j=\ell+1}^{d-1} \widehat{W}_\gamma^{(j)}{\bf 1}_{\set{\gamma\text{ open at }x}},
\end{equation}
where
\[{\bf 1}_{\set{\gamma\text{ open at }x}} ~=~\prod_{i=\ell}^d \xi_{\gamma}^{(i)}(x),\qquad \xi_{\gamma}^{(\ell)}(x)~:=~{\bf 1}_{\set{y_{\gamma}^{(\ell)}>0}},\]
is the event that all neurons along $\gamma$ are open for the input $x.$ Here $y^{(\ell)}$ is as in \eqref{E:N-def}. 
\end{definition}
\noindent Next, for an edge $e\in [n_{i-1}]\x [n_i]$ in the computational graph of $\mN$ we will write 
\begin{equation}\label{E:edge-def}
\ell(e)~=~i
\end{equation}
for the layer of $e.$ In the course of proving Theorems \ref{T:NTK} and \ref{T:deriv}, it will be useful to associate to every $\Gamma\in \Gamma^k(\vec{n})$ an \textit{unordered multi-set} of edges $E^\Gamma.$
\begin{definition}[Unordered multisets of edges and their endpoints]\label{D:unordered_multisets}
For $n,n',\ell\in \N$ set
\[\Sigma^k(n,n')~=~\set{(\alpha_1,\beta_1),\ldots, (\alpha_k,\beta_k)~|~(\alpha_j,\beta_j)\in [n]\x [n']}\]
to be the \textit{unordered multiset} of edges in the complete directed bi-paritite graph $K_{n,n'}$ oriented from $[n]$ to $[n'].$ For every $E\in \Sigma^k(n,n')$ define its left and right endpoints to be
\begin{align}
L(E)~&:=~\set{\alpha\in [n]~|~\exists j=1,\ldots, k \text{ s.t. } \alpha=\alpha_j}\\
R(E)~&:=~\set{\beta\in [n']~|~\exists j=1,\ldots, k \text{ s.t. } \beta=\beta_j}, 
\end{align}
where $L(E),R(E)$ are \textit{unordered multi-sets}.
\end{definition}

\noindent Using this notation, for any collection $Z=\lr{z(\ell_1,\alpha_1),\ldots, z(\ell_k,\alpha_k)}$ of neurons and $\Gamma = \lr{\gamma_1,\ldots, \gamma_k}\in \Gamma_Z^k,$ define for each $\ell\in [d]$ the associated unordered multiset
\[E^\Gamma(\ell)~:=~\set{(\alpha, \beta)\in [n_{\ell-1},n_\ell]~|~\exists j=1,\ldots, k\text{ s.t. } \gamma_j(\ell-1)=\alpha,\, \gamma_j(\ell)=\beta}\]
of edges between layers $\ell-1$ and $\ell$ that are present in $\Gamma.$ Similarly, we will write
\begin{equation}\label{E:Sigma-def}
   \Sigma_Z^k~:=~\set{\lr{E(0),\ldots, E(d)}\in \Sigma^k(n_0,n_1)\times \cdots\times \Sigma^k(n_{d-1},n_d)~|~\exists \Gamma\in \Gamma_Z^k\text{ s.t. } E(\ell)=E^\Gamma(\ell),\,\, \ell\in [d] } 
\end{equation}
for the set of all possible edge multisets realized by paths in $\Gamma_Z^k.$ On a number of occasions, we will also write
\[\Sigma_{Z, even}^k~:=~\set{E\in \Sigma_Z^k~|~\text{every edge in }E\text{ appears an even number of times}}\]
and correspondingly
\[\Gamma_{Z,even}^k~:=~\set{\Gamma\in \Gamma_{Z}^k~|~E^{\Gamma}\in \Sigma_{Z,even}^k}.\]
We will moreover say that for a path $\gamma$ an edge $e=(\alpha,\beta)\in [n_{i-1}]\x [n_i]$ in the computational graph of $\mN$ belongs to $\gamma$ (written $e\in \gamma$) if 
\begin{equation}\label{E:edge-in-path}
\gamma(i-1)~=~\alpha,\qquad \gamma(i)~=~\beta.
\end{equation}
Finally, for an edge $e=(\alpha,\beta)\in [n_{i-1}]\x [n_i]$ in the computational graph of $\mN$, we set
\[W_e~=~W_{\alpha,\beta}^{(i)},\qquad \widehat{W}_e~=~\widehat{W}_{\alpha,\beta}^{(i)}\]
for the normalized and unnormalized weights on the edge corresponding to $e$ (see \eqref{E:W-def}). 

\section{Overview of Proof of Theorems \ref{T:NTK} and \ref{T:deriv}}\label{S:outline}
The proofs of Theorems \ref{T:NTK} and \ref{T:deriv} are so similar that we will prove them at the same time. In this section and in \S \ref{S:idea} we present an overview of our argument. Then, we carry out the details in \S \ref{S:w-moments-pf}-\S\ref{S:wb-moments-pf} below. Fix an input $x\in \R^{n_0}$ to $\mN.$ Recall from \eqref{E:K-def} that
\[K_{\mN}(x,x)~=~\Kw~+~\Kb,\]
where we've set
\begin{align}
\label{E:K-decomp}  \Kw~&:=~\sum_{\text{weights } w}
\bigg(\frac{\partial \mN}{\partial w}(x)\bigg)^2,\qquad \Kb~:=~\sum_{\text{biases }b}\bigg(\frac{\partial \mN}{\partial b}(x)\bigg)^2
\end{align}
and have suppressed the dependence on $x,\mN.$ Similarly, we have
\[-\frac{1}{2\lambda}\Delta K_{\mN}(x,x)~=~\Dww~+~2\Dwb~+~\Dbb,\]
where we have introduced 
\begin{align*}
  \Dww~&:=~\sum_{\text{weights }w,w'} \frac{\partial\mN }{\partial w}(x)\frac{\partial^2\mN }{\partial w\partial w'}(x)\frac{\partial\mN }{\partial w'}(x)\lr{\mN(x)-\mN_*(x)}\\
  \Dwb~&:=~\sum_{\text{weight }w,\text{ bias }b} \frac{\partial\mN }{\partial w}(x)\frac{\partial^2\mN }{\partial w\partial b}(x)\frac{\partial\mN }{\partial b}(x)\lr{\mN(x)-\mN_*(x)}\\
  \Dbb~&:=~\sum_{\text{biases }b,b'} \frac{\partial\mN }{\partial b}(x)\frac{\partial^2\mN }{\partial b\partial b}(x)\frac{\partial\mN }{\partial b'}(x)\lr{\mN(x)-\mN_*(x)}
\end{align*}
and have used that the loss on the batch $\set{x}$ is given by $\mathcal{L}(x)=\frac{1}{2}\lr{\mN(x)-\mN_*(x)}^2$ for some target value $\mN_*(x).$ To prove Theorem \ref{T:NTK} we must estimate the following quantities:
\[\E{\Kw},\quad\E{\Kb},\quad \E{\Kw^2},\quad \E{\Kw\Kb},\quad \E{\Kb^2}.\]
To prove Theorem \ref{T:deriv}, we must control in addition
\[\E{\Dww},\quad \E{\Dwb},\quad \E{\Dbb}.\]
The most technically involved computations will turn out to be those involving only weights: namely, the terms $\E{\Kw}, \E{\Kw^2},  \E{\Dww}.$ These terms are controlled by writing each as a sum over certain paths $\gamma$ that traverse the network from the input to the output layers. The corresponding results for terms involving the bias will then turn out to be very similar but with paths that start somewhere in the middle of network (corresponding to which bias term was used to differentiate the network output). The main result about the pure weight contributions to $K_{\mN}$ is the following 
\begin{proposition}[Pure weight moments for $K_{\mN}, \Delta K_{\mN}$]\label{P:w-moments}
  We have
\[\E{\Kw}~=~\frac{d}{n_0}\norm{x}_2^2.\]
Moreover, 
\[\E{\Kw^2}~\simeq~\frac{d^2}{n_0^2}\norm{x}_2^4\exp\lr{5\beta}\lr{1+O\lr{\sum_{i=1}^d \frac{1}{n_i^2}}},\qquad \beta~:=~ \sum_{i=1}^{d}\frac{1}{n_i}.\]
Finally, 
\[\E{\Dww}~\simeq~ \frac{\norm{x}_2^4}{n_0^2}\left[\sum_{\substack{i_1,i_2=1\\i_i<i_2}}^d \sum_{\ell=i_1}^{i_2-1} \frac{1}{n_\ell} e^{-5/n_\ell-6\sum_{i=i_1}^{\ell-1}\frac{1}{n_i}}\right]\exp\lr{5\beta}\lr{1+O\lr{\sum_{i=1}^d\frac{1}{n_i^2}}}.\]
\end{proposition}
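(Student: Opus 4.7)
The plan is to base the proof on the sum-over-paths representation developed in \S\ref{S:notation}: using positive one-homogeneity of $\Relu$, we have $\mN(x)=\sum_\gamma x_{\gamma(0)}\wt(\gamma)$ with $\gamma$ ranging over paths from the input to the output layer. Differentiation with respect to $W^{(j)}_{\alpha,\beta}$ selects those paths that traverse the edge $(\alpha,\beta)$ at layer $j$, so after squaring and summing as in \eqref{E:K-decomp} the quantity $\Kw$ becomes a sum over pairs of paths in $\Gamma_Z^2$ sharing an edge, $\Kw^2$ becomes a sum over 4-tuples in $\Gamma_Z^4$ (two pairs, each sharing an edge at its own layer), and $\Dww$, after inserting the Hessian $\partial^2 \mN/(\partial w\partial w')$ and the factor $\mN(x)-\mN_*(x)$, becomes a sum over 6-tuples in $\Gamma_Z^6$. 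Because $\mu$ is symmetric around zero and weights on distinct edges are independent, only those tuples whose associated edge multiset lies in $\Sigma_{Z,even}^k$ (every edge appears with even multiplicity at each layer) contribute a nonzero expectation; all other configurations vanish.

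For $\E{\Kw}$, the parity constraint forces the two paths to coincide entirely, so the expectation collapses to a sum over single paths. Each single path contributes a product $\prod_i c_i^2$ of second moments and an activation-indicator factor $(1/2)^{\len(\gamma)}$ by a standard sign-flip symmetry (flipping all weights into a neuron reverses the sign of its pre-activation without changing the joint law of the other quantities). Combining this with the He-init identities $\E{\lVert x^{(i)}\rVert^2}=(n_i/n_0)\lVert x\rVert^2$ and $\E{\lVert\nabla_{y^{(j)}}\mN\rVert^2}=1/2$ for $j<d$ immediately yields $\E{\Kw}=d\lVert x\rVert^2/n_0$. For $\E{\Kw^2}$ the 4-tuple analysis proceeds layer by layer: at each layer $\ell$ the parity constraint forces the four paths into one of two states, \emph{merged} (a single edge traversed four times, weight $\mu_4$) or \emph{split} (two distinct edges each traversed twice, weight $1$), and transitions between consecutive layers are encoded by a $2\times 2$ transfer matrix whose diagonal entries are $1+O(1/n_\ell)$ and off-diagonal entries are $O(1/n_\ell)$, with coefficients determined by $\mu_4$, the one-half activation factor, and the combinatorial count of next-layer neurons consistent with the parity constraint. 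Multiplying these transfer matrices over the $d$ layers and extracting the leading eigenvalue via $\prod(1+a_i)=\exp(\sum a_i+O(\sum a_i^2))$ produces the factor $\exp(5\beta)(1+O(\sum 1/n_i^2))$, with the constant $5$ dictated precisely by the coefficient of $1/n_\ell$ in the merged-state diagonal entry plus the split-to-merged transition rate.

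For $\E{\Dww}$ the second derivative $\partial^2 \mN/(\partial w\partial w')$ introduces two ``marked'' layers $i_1<i_2$ into the path sum, and the extra factor of $\mN(x)-\mN_*(x)$ contributes two more paths, giving a 6-tuple in $\Gamma_Z^6$. Enforcing the even-multiplicity constraint, the non-vanishing configurations are indexed by the marked layers $i_1<i_2$ together with a \emph{branch layer} $\ell\in [i_1,i_2-1]$ at which the six paths split into smaller bundles of sizes $6$ and then $4+2$. On the segment where all six paths travel through a common edge they accumulate at exponential rate $6/n_i$ (reflected in the factor $e^{-6\sum 1/n_i}$ in the formula), while on the 4+2 segments they grow at the same rate $5/n_i$ that appeared in $\E{\Kw^2}$. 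Assembling these pieces by a layer-by-layer transfer-matrix recursion analogous to the four-path case, and summing over $i_1,i_2,\ell$, produces the claimed bound. I expect the main technical obstacle to be the careful enumeration and weighting of the 6-tuple branching configurations together with the activation indicators, whose joint law does \emph{not} factor cleanly across merged and split subpaths: the sign-flip symmetries used to extract the correct $1/2$ factors must be applied delicately along shared subpaths, and the $O(1/n_i^2)$ multiplicative error must be tracked uniformly across all $d$ layers to prevent accumulated exponential blowup.
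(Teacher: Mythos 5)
Your treatment of $\E{\Kw}$ and $\E{\Kw^2}$ follows essentially the same route as the paper: the sum-over-paths expansion, the parity constraint from the symmetry of $\mu$ forcing even edge multiplicities, the collapse to a single path for the first moment, and a layer-by-layer merged/split analysis for the second moment (your transfer matrix is, in effect, the paper's Proposition \ref{P:elementary} applied to the functional $\widehat F_*(V)$ after the change of variables $\Gamma \to E \to V$, with the constant $5$ arising as $6-1$ from the weight $2^{{\bf 1}}\cdot 3^{\#\mathrm{loops}}$). You do gloss over the bookkeeping of the two marked edges $e_1,e_2$, which is what produces the $d^2$ prefactor, and over the a.s.\ vanishing of the derivative of the activation indicators (Lemma \ref{L:non-deg}), but these are fixable details.

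The treatment of $\E{\Dww}$, however, contains a genuine error. The quantity $\Dww=\sum_{w,w'}\partial_w\mN\cdot\partial^2_{w,w'}\mN\cdot\partial_{w'}\mN\cdot(\mN-\mN_*)$ expands into a sum over \emph{four} paths, not six: each of $\partial_w\mN$, $\partial^2_{w,w'}\mN$, $\partial_{w'}\mN$, and $\mN(x)$ is a sum over single paths (the Hessian of a multilinear product of edge weights selects single paths through both marked edges), and the $\mN_*(x)$ term yields a three-path sum that vanishes in expectation by symmetry of the output-layer weights. Consequently there is no ``$6$ and then $4+2$'' bundle structure, and your proposed mechanism for the factor $e^{-6\sum 1/n_i}$ (six paths co-traveling at rate $6/n_i$) is not the right one. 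The actual source of the extra suppression is the asymmetric constraint that a single path $\gamma_2$ must contain both marked edges, $e_1\in\gamma_1\cap\gamma_2$ at layer $i_1$ and $e_2\in\gamma_2\cap\gamma_3$ at layer $i_2$, while $\gamma_4$ is unconstrained; together with the parity constraint this forces a full collision of the four paths at some layer $\ell\in[i_1,i_2)$. The probability that the first such collision occurs at layer $\ell$ is $\simeq \frac1{n_\ell}\exp(-\sum_{i=i_1}^{\ell-1}1/n_i)$, and on the interval $[i_1,\ell)$ the rate-$5/n_i$ growth present in $\E{\Kw^2}$ is switched off, which is how $-5-1=-6$ appears in the exponent. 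This constraint is also precisely what makes $\E{\Dww}/\E{\Kw}\to 0$ at fixed depth and large width, a feature your six-path picture would not reproduce. You would need to redo the combinatorics of this term from the correct four-path representation (Lemma \ref{L:paths-raw} and the counting term $C(E,i_1,i_2)$ in Lemma \ref{L:Jac-1}).
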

We prove Proposition \ref{P:w-moments} in \S \ref{S:w-moments-pf} below. The proof already contains all the ideas necessary to treat the remaining moments. In \S \ref{S:b-moments-pf} and \S \ref{S:wb-moments-pf} we explain how to modify the proof of Proposition \ref{P:w-moments} to prove the following two Propositions:
\begin{proposition}[Pure bias moments for $K_{\mN}, \Delta K_{\mN}$]\label{P:b-moments}
  We have
\[\E{\Kb}~=~\frac{d}{2}.\]
Moreover, 
\[\E{\Kb^2}~\simeq~\left[\sum_{\substack{i,j=1\\ i\leq j}}^d e^{-5\sum_{\ell = 1}^j \frac{1}{n_i}}\right]\exp\lr{5\sum_{i=1}^d\frac{1}{n_i}}\lr{1+O\lr{\sum_{i=1}^d \frac{1}{n_i^2}}}.\]
Finally, with probability $1,$
\[\Dbb~=~ 0.\]
\end{proposition}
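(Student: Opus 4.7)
The proposition splits into three separate claims, each requiring a different ingredient.

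For the mean $\E{\Kb}=d/2$, the pathwise identity
\[
\frac{\partial \mN}{\partial b_\beta^{(j)}}(x)~=~\sum_{\gamma\in\Gamma_{z(j,\beta)}^1}\wt(\gamma)
\]
holds because the biases are initialized to zero and enter the pre-activations linearly. Squaring and taking $\E{\cdot}$, the symmetry of $\mu$ from \eqref{E:mu-def} kills every $\Gamma\notin\Gamma_{z(j,\beta),even}^2$, reducing the expectation to the diagonal sum $\sum_\gamma\E{\wt(\gamma)^2}$. The He scaling \eqref{E:W-def} is precisely calibrated so that each edge variance $2/n_{i-1}$ pairs against the probability $1/2$ that the ReLU on each visited neuron is open; combined with the $\prod_{i>j}n_i$ paths emanating from layer $j$ this shows $\E{\sum_\beta(\partial\mN/\partial b_\beta^{(j)})^2}=1/2$ for each $j$, hence $\E{\Kb}=d/2$.

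The estimate on $\E{\Kb^2}$ is the direct analogue of Proposition~\ref{P:w-moments} with two free starting layers instead of the single starting layer at the input. Expanding
\[
\Kb^2~=~\sum_{i_1,i_2=1}^{d}\sum_{\beta_1,\beta_2}\sum_{\Gamma\in\Gamma_Z^4}\wt(\Gamma),\qquad Z=(z(i_1,\beta_1),z(i_1,\beta_1),z(i_2,\beta_2),z(i_2,\beta_2)),
\]
the evenness condition restricts to $\Gamma\in\Gamma_{Z,even}^4$. For $i_1<i_2$, layers $\ell>i_2$ host all four paths, so the four-path layerwise transfer from \S\ref{S:w-moments-pf} applies and generates the full $\exp(5\beta)$ prefactor in that regime. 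Layers $i_1<\ell\le i_2$ host only the two paths rooted at $z(i_1,\beta_1)$, so the recursion collapses to the conservative two-path second-moment recursion and contributes a multiplier $1+O(1/n_\ell^2)$ per layer. Layers $\ell\le i_1$ host no paths at all. The net effect is the damping factor $\exp(-5\sum_{\ell=1}^{i_2}1/n_\ell)$ relative to $\exp(5\beta)$; summing over admissible pairs $(i_1,i_2)$ yields the stated double sum, with the relative error $1+O(\sum 1/n_i^2)$ inherited layer-by-layer from the subleading edge-collision corrections exactly as in Proposition~\ref{P:w-moments}. The only real obstacle here is pure bookkeeping: keeping track of which multi-path transfer governs each layer regime, together with the boundary cases $i_1=i_2$ and $i_2=d$.

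The almost-sure vanishing $\Dbb=0$ follows from the piecewise-linear structure of $\mN$ in the biases. Because $\mu$ has a density with respect to Lebesgue measure, with probability one every pre-activation $y_\alpha^{(i)}(x)$ is nonzero at initialization, so the ReLU pattern at input $x$ is locally constant in the parameters. With this pattern frozen, $\mN(x)$ is an affine function of each individual bias $b_\beta^{(j)}$, so every pure-bias mixed second derivative $\partial^2\mN/\partial b\,\partial b'$ vanishes identically on a neighborhood of the initial weights. Every summand of $\Dbb$ is therefore zero, and $\Dbb=0$ almost surely.
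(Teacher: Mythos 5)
Your proposal is correct and follows essentially the same route as the paper: the sum-over-paths expansion with the evenness reduction for the mean, the two-path/four-path layerwise transfer split according to the two starting layers (reducing the tail segment to the machinery of Proposition \ref{P:w-moments}) for the second moment, and the almost-surely frozen activation pattern for $\Dbb$. One small repair to the last step: being affine in each bias \emph{separately} does not force the mixed partials $\partial^2\mN/\partial b\,\partial b'$ to vanish; you need that $\mN$ is \emph{jointly} affine in the biases once the activation pattern is frozen, which holds because each path through the computational graph carries at most one bias factor.
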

\begin{proposition}[Mixed bias-weight moments for $K_{\mN}, \Delta K_{\mN}$]\label{P:wb-moments}
  We have
\[\E{\Kb\Kw}~\simeq~\frac{d\norm{x}_2^2}{n_0} \left[\sum_{j=1}^d e^{-5 \sum_{i=1}^j \frac{1}{n_i}}\right] \exp\lr{5\sum_{i=1}^d\frac{1}{n_i}}\lr{1+O\lr{\sum_{i=1}^d\frac{1}{n_i^2}}}.\]
Further, 
\[\E{\Dwb}~\simeq~  \frac{\norm{x}_2^2}{n_0}\exp\lr{5\sum_{i=1}^d \frac{1}{n_i}}\left[\sum_{\substack{i,j=1\\j<i}}^d e^{-5\sum_{\alpha=1}^j \frac{1}{n_\alpha}}\sum_{\ell=j}^{i-1} \frac{1}{n_\ell} e^{-6\sum_{\alpha=j+1}^{\ell-1}\frac{1}{n_\alpha}}\right]\lr{1+O\lr{\sum_{i=1}^d \frac{1}{n_i^2}}}.\]
\end{proposition}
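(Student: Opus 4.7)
The plan is to adapt the path-expansion of \S \ref{S:w-moments-pf} and \S \ref{S:b-moments-pf} to configurations mixing input-started and intermediate-started paths. Writing $\partial\mN/\partial W_{\alpha,\beta}^{(i)}$ as a normalization constant times $x_\alpha^{(i-1)}\,\xi_\beta^{(i)}$ times a sum of weights of open paths from $z(i,\beta)$ to the output, and $\partial\mN/\partial b_\gamma^{(j)}$ as $\xi_\gamma^{(j)}$ times a sum of weights of open paths from $z(j,\gamma)$ to the output, the product $\Kw\Kb$ expands as a sum over four-tuples of paths $\Gamma=(\gamma_1,\gamma_2,\gamma_3,\gamma_4)$ in which $\gamma_1,\gamma_2$ run from input to output sharing a common edge at some layer $i$ while $\gamma_3,\gamma_4$ both originate from a common neuron $z(j,\beta)$. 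Taking expectation and invoking the symmetry of $\mu$, only tuples whose edge multiset $E^\Gamma$ is even can contribute (see \eqref{E:Sigma-def}).

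The leading contribution comes from configurations in which $\gamma_1=\gamma_2$ and $\gamma_3=\gamma_4$, with the two doubled paths either disjoint or lightly interacting. Carrying out the layer-by-layer recursion of Proposition \ref{P:w-moments} for this four-path configuration produces the prefactor $d\norm{x}_2^2/n_0$ (summing over the starting edge of the weight path and absorbing $\norm{x}_2^2/n_0$ from the input layer) together with the growth factor $\exp(5\beta)$ coming from the doubled edges. The discount $e^{-5\sum_{i\leq j}1/n_i}$ appears because in layers $1,\ldots,j$ only the doubled weight path is present, whereas the factor $\exp(5\beta)$ tacitly counts two doubled paths in every layer; the correction removes the extra contribution from layers below $j$. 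Summing over the bias-path starting layer $j\in[d]$ produces the bracketed sum, and the subleading configurations in which the two doubled paths touch are controlled by the Cauchy--Schwarz-type estimates of \S \ref{S:w-moments-pf}, producing the multiplicative error $(1+O(\sum 1/n_i^2))$.

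For $\E{\Dwb}$ the same strategy works, with the additional input being the mixed second derivative. The key structural observation is that $\partial^2\mN/\partial W_{\alpha,\beta}^{(i)}\partial b_\gamma^{(j)}$ vanishes unless $j<i$, in which case it factorizes as a constant times a sum of weights of open paths from $z(j,\gamma)$ to $z(i-1,\alpha)$ times $\xi_\alpha^{(i-1)}\xi_\beta^{(i)}$ times a sum of open-path weights from $z(i,\beta)$ to the output. Because an odd number of paths originating at the input cannot form an even edge multiset in layers $\leq j$, the $\mN_*(x)$ portion of $(\mN(x)-\mN_*(x))$ does not contribute in expectation; only the $\mN(x)$ piece does, and it supplies one extra input-to-output path. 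Thus $\Dwb$ expands as a sum over five-tuples of paths, two running from input to output and three running from a common neuron $z(j,\gamma)$ to the output, with two of the latter additionally constrained to share an edge at layer $i$. The dominant collapse pattern pairs the two input-to-output paths and pairs two of the three bias-starting paths, leaving a single unpaired bias-path that merges with the doubled bias-path at some intermediate layer $\ell\in\{j,\ldots,i-1\}$; this free choice of $\ell$ produces the inner sum $\sum_{\ell=j}^{i-1}n_\ell^{-1}e^{-6\sum_{\alpha=j+1}^{\ell-1}1/n_\alpha}$, with the exponent $6$ (rather than $5$) recording that three paths, rather than two, share each neuron in the interval $[j+1,\ell-1]$.

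The main obstacle is the combinatorial bookkeeping: enumerating every collapse pattern of the four- (resp.\ five-) tuple of paths that yields an even edge multiset, identifying the dominant pattern, and controlling all subdominant patterns uniformly. Once this enumeration has been set up exactly as in \S \ref{S:w-moments-pf}, the remaining work is a layer-by-layer product whose careful evaluation recovers the stated prefactors and the $\exp(5\beta)$ growth. No conceptual difficulty appears beyond the pure-weight case, but the mixed bias-weight setting forces an extra summation over the bias-path starting layer $j$, which is the source of both the sum $\sum_j e^{-5\sum_{i\leq j}1/n_i}$ in $\E{\Kw\Kb}$ and the nested layer sum in $\E{\Dwb}$.
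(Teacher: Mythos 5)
Your overall strategy --- expand $\Kw$ and $\Kb$ as sums over paths, use the symmetry of $\mu$ to kill all configurations whose edge multiset is not even, and then run the layer-by-layer machinery of \S\ref{S:w-moments-pf} --- is exactly the paper's route (the paper writes the relevant object as a sum over four-tuples $\Gamma\in\Gamma^4_{(Z,Z,a_1,a_2)}$ with two paths from the bias neuron and two from the input; your ``five pieces'' are the same object with the path through the marked edge split in two). However, two of your quantitative mechanisms are wrong, and following them would not reproduce the stated answers.

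First, for $\E{\Kb\Kw}$ you assert that the configurations in which the two doubled paths touch are subleading and are absorbed into the $\lr{1+O\lr{\sum_i 1/n_i^2}}$ error, while attributing $\exp(5\beta)$ to ``the doubled edges.'' This is backwards. A doubled path with no interactions contributes exactly $1$ per layer after normalization (this is why $\E{\Kw}=d\norm{x}_2^2/n_0$ carries no exponential factor). The factor $\exp(5\beta)$ arises precisely from the collision events: at each layer the two doubled pairs meet at a common neuron with probability $\simeq 1/n_i$ and are rewarded by a factor $6$ (the product of the moment factor $2$, the re-pairing count, and the loop Jacobian), giving $1+5/n_i$ per layer in Proposition~\ref{P:elementary}. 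If the interactions really were an $O\lr{\sum_i 1/n_i^2}$ correction, one would get $\E{\Kb\Kw}\simeq\E{\Kb}\E{\Kw}$ with no exponential enhancement, contradicting the statement being proved (and Theorem~\ref{T:NTK} itself).

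Second, for $\E{\Dwb}$ your ``dominant collapse pattern'' --- two of the three bias-starting pieces paired, plus ``a single unpaired bias-path that merges with the doubled bias-path at layer $\ell$,'' with the exponent $6$ explained by ``three paths sharing each neuron'' --- is not an admissible configuration: any edge traversed by a single unpaired path appears an odd number of times in $E^\Gamma$, so its expectation vanishes by the symmetry of $\mu$. The correct structure (and the one the paper uses, via the collision indicator $C(E,\ell(Z),i)$ in \eqref{E:C-def-E}) is that between layer $j=\ell(Z)$ and the marked layer $i$ one has \emph{two doubled pairs} occupying two distinct neurons per layer; the constraint $e\in\gamma_2\cap\gamma_3$ forces all four paths to collide at a single neuron at some first layer $\ell\in\{j,\dots,i-1\}$ and then re-pair. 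The factor $\frac{1}{n_\ell}e^{-6\sum_{\alpha}1/n_\alpha}$ is the probability that this \emph{first} collision occurs exactly at layer $\ell$ (no collision earlier), combined with the loss of the $6^{{\bf 1}}$ reward on the collision-free stretch --- not a statement about three paths sharing neurons. You would need to redo this part of the enumeration before the inner sum $\sum_{\ell=j}^{i-1}$ comes out with the stated exponents.
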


\noindent The statements in Theorems \ref{T:NTK} and \ref{T:deriv} that hold for general $n_i$ now follow directly from Propositions \ref{P:w-moments}-\ref{P:wb-moments}. To see the asymptotics in Theorem \ref{T:NTK} when $n_i\simeq n,$ we find after some routine algebra that when $n_i=n$, the second moment $\E{K_{\mN}(x,x)^2}$ equals
\[\exp(5d/n)\left[n^2\lr{1-e^{-5d/n}-5\frac{d}{n}e^{-5d/n}}~+~\frac{d n\norm{x}_2^2}{n_0}\lr{1-e^{-5d/n}}+\frac{d^2}{n_0^2}\norm{x}_2^4~+~O\lr{\frac{1}{n}}\right]\]
up to a multiplicative error of $1+O(d/n^2).$ When $d/n$ is small, this expression is bounded above and below by a constant times
\[d^2\exp(5d/n)\lr{1+\frac{\norm{x}_2^2}{n_0}}^2.\]
Thus, since Propositions \ref{P:w-moments} and \ref{P:b-moments} also give
\[\E{K_{\mN}(x,x)}~\simeq~d\lr{1 + \frac{\norm{x_0}_2^2}{n_0}},\]
we find that when $d/n$ is small, 
\[\frac{\E{K_{\mN}(x,x)^2}}{\E{K_{\mN}(x,x)}^2}~\simeq~ \exp(5d/n)\lr{1+O(d^2/n)}.\]
Similarly, if $d/n$ is large but $d/n^2$ is small, then, still assuming $n_i=n,$ we find $\E{K_{\mN}(x,x)^2}$ is well-approximated by
\[\frac{d^2\norm{x}_2^4}{n_0^2}\exp(5d/n)\]
and so 
\[\frac{\E{K_{\mN}(x,x)^2}}{\E{K_{\mN}(x,x)}^2}~\simeq~ \exp(5d/n)\lr{1+O(d^2/n)}\]
in this regime as well, confirming the behavior of $\E{K_{\mN}(x,x)^2}$ when $n_i\simeq n$ in Theorem \ref{T:NTK}. A similar Taylor expansion gives the analogous expression in Theorem \ref{T:deriv}.

\subsection{Idea of Proof of Propositions \ref{P:w-moments}-\ref{P:wb-moments}}\label{S:idea} Before turning to the details of the proof of Propositions \ref{P:w-moments}-\ref{P:wb-moments} below, we give an intuitive explanation of the key steps in our sum-over-path analysis of the moments of $\Kw,\Kb,\Dww,\Dwb,\Dbb.$ Since the proofs of all three Propositions follow a similar structure and Proposition \ref{P:w-moments} is the most complicated, we will focus on explaining how to obtain the first $2$ moments of  $\Kw$. The first moment of $\Dww$ has a similar flavor. Since the biases are initialized to zero and $\Kw$ involves only derivatives with respect to the weights, for the purposes of analyzing $\Kw$ the biases play no role. Without the biases, the output of the neural network, $\mN(x)$ can be express as a weighted sum over paths in the computational graph of the network:

\begin{equation*}
\mN(x)~=~\sum_{a=1}^{n_0}x_a \sum_{\gamma\in \Gamma_{a}^1}\wt(\gamma),
\end{equation*}
where the weight of a path $\wt(\gamma)$ was defined in \eqref{E:path-weight-def} and includes both the product of the weights along $\gamma$ and the condition that every neuron in $\gamma$ is open at $x$. The path $\gamma$ begins at some neuron in the input layer of $\mN$ and passes through a neuron in every subsequent layer until ending up at the unique neuron in the output layer (see \eqref{E:gamma-def}). Being a product over edge weights in a given path, the derivative of $\wt(\gamma)$ with respect to a weight $W_e$ on an edge $e$ of the computational graph of $\mN$ is:

\begin{equation} \label{E:deriv}
\frac{\partial \wt(\gamma)}{\partial W_{e}} = \frac{\wt(\gamma)}{W_{e}} {\bf 1}_{\{e \in \gamma\}}.
\end{equation}

\noindent There is a subtle point here that $\wt(\gamma)$ also involves indicator functions of the events that neurons along $\gamma$ are open at $x.$ However, with probability $1$, the derivative with respect to $W_e$ of these indicator functions is identically $0$ at $x.$ The details are in Lemma \ref{L:non-deg}.

Because $\Kw$ is a sum of derivatives squared (see \eqref{E:K-decomp}), ignoring the dependence on the network input $x$, the kernel $\Kw$ roughly takes the form 
 \[\Kw~\sim~ \sum_{\substack{ \gamma_1,\gamma_2}} \sum_{e\in \gamma_1\cap \gamma_2} \frac{\prod_{k=1}^2 \wt(\gamma_k)}{W_{e}^2},\]
where the sum is over collections $(\gamma_1,\gamma_2)$ of two paths in the computation graph of $\mN$ and edges $e$ in the computational graph of $\mN$ that lie on both (see Lemma \ref{L:paths-raw} for the precise statement).  When computing the mean, $\e[\Kw]$, by the mean zero assumption of the weights $W_e$ (see \eqref{E:mu-def}), the only contribution is when every edge in the computational graph of $\mN$ is traversed by an even number of paths. Since there are exactly two paths, the only contribution is when the two paths are identical, dramatically simplifying the problem. This gives rise to the simple formula for $\e[\Kw]$ (see \eqref{E:K-1st-moment}). The expression 
 \[\Kw^2~\sim~ \sum_{\substack{ \gamma_1,\gamma_2,\gamma_3,\gamma_4}} \sum_{\substack{e_1\in \gamma_1\cap \gamma_2\\e_2\in \gamma_3\cap \gamma_4}} \frac{\prod_{k=1}^4 \wt(\gamma_k)}{W_{e_1}^2W_{e_2}^2},\]
for $\Kw^2$ is more complex. It involves sums over four paths in the computational graph of $\mN$ as in the second statement of Lemma \ref{L:paths-raw}. Again recalling that the moments of the weights have mean $0$, the only collections of paths that contribute to  $\e[\Kw^2]$ are those in which every edge in the computational graph of $\mN$ is covered an even number of times:
\begin{equation}\label{E:Kw2-rough}
\e[\Kw^2]~=\sum_{\substack{ \gamma_1,\gamma_2,\gamma_3,\gamma_4\\\mathrm{even}}} \sum_{\substack{e_1\in \gamma_1\cap \gamma_2\\e_2\in \gamma_3\cap \gamma_4}} \E{\frac{\prod_{k=1}^4 \wt(\gamma_k)}{W_{e_1}^2W_{e_2}^2}}
\end{equation}
However, there are now several ways the four paths can interact to give such a configuration. It is the combinatorics of these interactions, together with the stipulation that the marked edges $e_1,e_2$ belong to particular pairs of paths, which complicates the analysis of $\e[\Kw^2].$ We estimate this expectation in several steps:

\begin{figure}[ht]
\centering
\includegraphics[width=0.8\textwidth]{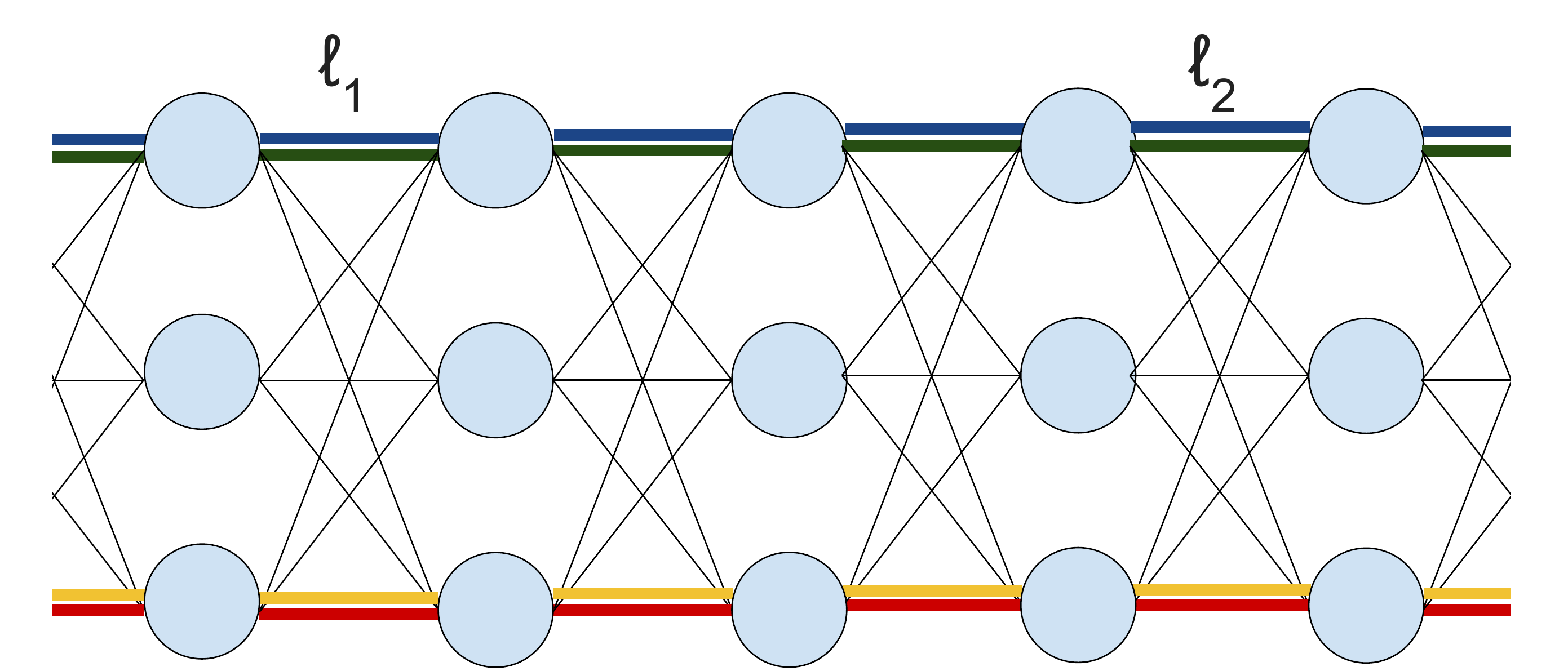}
\caption{Cartoon of the four paths $\gamma_1,\gamma_2,\gamma_3,\gamma_4$ between layers $\ell_1$ and $\ell_2$ in the case where there is no interaction. Paths stay with there original partners $\gamma_1$ with $\gamma_2$ and $\gamma_3$ with $\gamma_4$ at all intermediate layers.\label{fig:no-interaction}} 
\end{figure}


\begin{figure}[ht]
\centering
\includegraphics[width=0.8\textwidth]{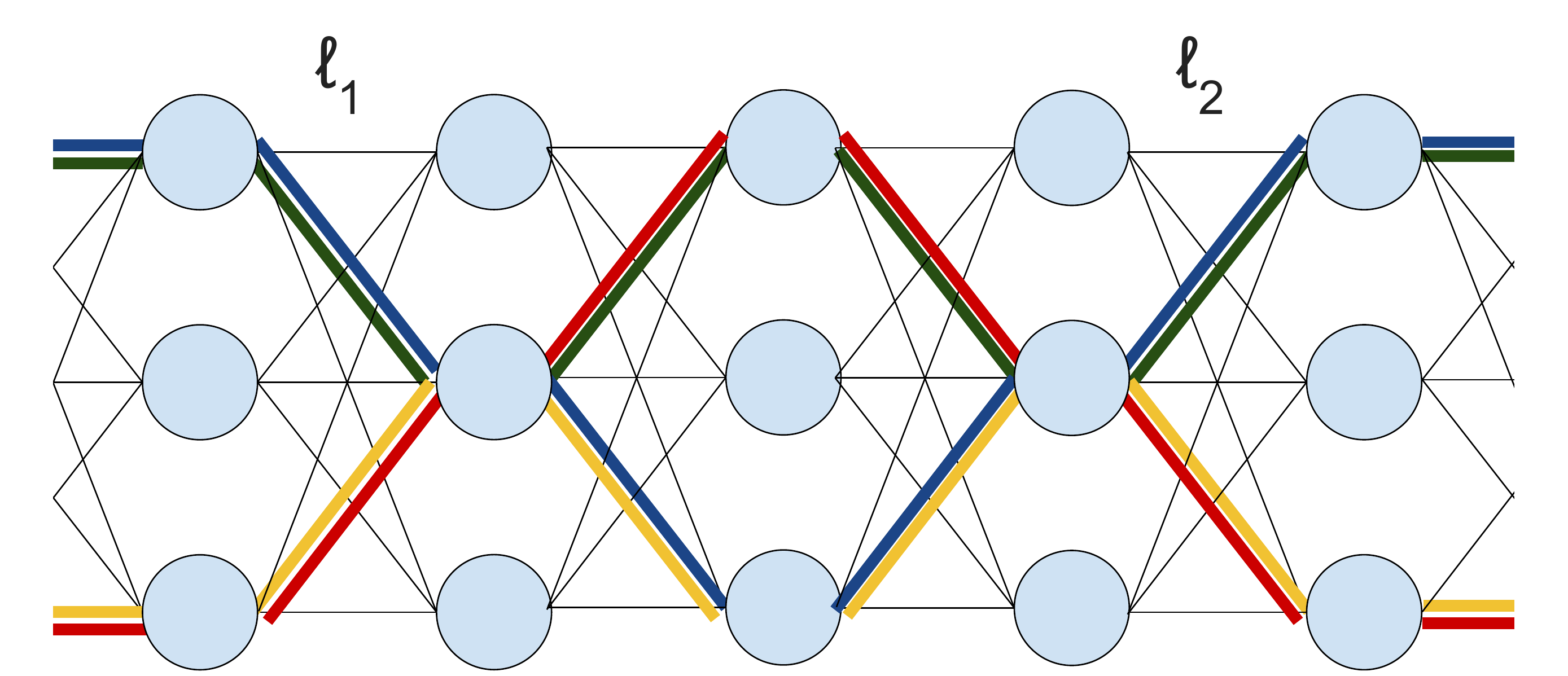}
\caption{Cartoon of the four paths $\gamma_1,\gamma_2,\gamma_3,\gamma_4$ between layers $\ell_1$ and $\ell_2$ in the case where there is exactly one ``loop'' interaction between the marked layers. Paths swap away from their original partners exactly once at some intermediate layer after $\ell_1$, and then swap back to their original partners before $\ell_2$.\label{fig:loop} } 
\end{figure}

\begin{enumerate}
    \item Obtain an exact formula for the expectation in \eqref{E:Kw2-rough}:
    \[\E{\frac{\prod_{k=1}^4 \wt(\gamma_k)}{W_{e_1}^2W_{e_2}^2}}~=~F(\Gamma, e_1,e_2),\]
    where $F(\Gamma,e_1,e_2)$ is the product over the layers $\ell=1,\ldots, d$ in $\mN$ of the ``cost'' of the interactions of $\gamma_1,\ldots,\gamma_4$ between layers $\ell-1$ and $\ell.$ The precise formula is in Lemma \ref{L:2k-paths-avg}.
    \item Observe that the dependence of $F(\Gamma, e_1,e_2)$ on $e_1,e_2$ is only up to a multiplicative constant: 
    \[F(\Gamma,e_1,e_2)~\simeq~ F_*(\Gamma).\]
    The precise relation is \eqref{E:F-star-def}. This shows that, up to universal constants, 
    \[\e[\Kw^2]~\simeq ~ \sum_{\substack{ \gamma_1,\gamma_2,\gamma_3,\gamma_4\\\mathrm{even}}}F_*(\Gamma) \#\left\{\ell_1,\ell_2\in [d]~\big|~\substack{\gamma_1,\gamma_2\mathrm{~togethe~at~layer~}\ell_1\\\gamma_3,\gamma_4\mathrm{~togethe~at~layer~}\ell_2} \right\}.\]
    This is captured precisely by the terms $I_j,II_j$ defined in \eqref{E:I-def},\eqref{E:II-def}.
    \item Notice that $F_*(\Gamma)$ depends only on the un-ordered multiset of edges $E=E^\Gamma\in \Sigma_{even}^4$ determined by $\Gamma$ (see \eqref{E:Sigma-def}). We therefore change variables in the sum from the previous step to find
    \[\e[\Kw^2]~\simeq ~ \sum_{E\in \Sigma_{even}^4}F_*(E) \mathrm{Jacobian}(E, e_1,e_2),\]
    where $\mathrm{Jacobian}(E, e_1,e_2)$ counts how many collections of four paths $\Gamma \in \Gamma_{even}^4$ that have the same $E^\Gamma$ also have paths $\gamma_1,\gamma_2$ pass through $e_1$ and paths $\gamma_3,\gamma_4$ pass through $e_2.$ Lemma \ref{L:Jac-1} gives a precise expression for this Jacobian. It turns outs, as explained just below Lemma \ref{L:Jac-1}, that
    \[\mathrm{Jacobian}(E, e_1,e_2)~\simeq~6^{\#\mathrm{loops}(E)},\]
    where a loop in $E$ occurs when the four paths interact. More precisely, a loop occurs whenever all four paths pass through the same neuron in some layer (see Figures \ref{fig:no-interaction} and \ref{fig:loop}). 
    \item Change variables from unordered multisets of edges $E\in \Sigma_{even}^4$ in which every edge is covered an even number of times to pairs of paths $V\in \Gamma^2$. The Jacobian turns out to be $2^{-\#\mathrm{loops}(E)}$ (Lemma \ref{L:2paths}), giving
    \[\e[\Kw^2]~\simeq ~ \sum_{V\in \Gamma^2} F_*(V)3^{\#\mathrm{loops}(V)}.\]
    \item Just like $F_*(V),$ the term $3^{\#\mathrm{loops}(V)}$ is again a product over layers $\ell$ in the computational graph of $\mN$ of the ``cost'' of interactions between our four paths. Aggregating these two terms into a single functional $\widehat{F}_*(E)$ and factoring out the $1/n_{\ell}$ terms in $F_*(V)$ we find that:
    \[\e[\Kw^2]~\simeq ~ \frac{1}{n_0^2}\mE{\widehat{F}_*(V)},\]
    where the $1/n_\ell$ terms cause the sum to become \textit{an average} over collections $V$ of two independent paths in the computational graph of $\mN,$ with each path sampling neurons uniformly at random in every layer. The precise result, including the dependence on the input $x,$ is in \eqref{E:K2-est}.
    \item Finally, we use Proposition \ref{P:elementary} to obtain for this expectation estimates above and below that match up multiplicative constants. 
\end{enumerate}



\section{Proof of Proposition \ref{P:w-moments}}\label{S:w-moments-pf}
We begin with the well-known formula for the output of a ReLU net $\mN$ with biases set to $0$ and a linear final layer with one neuron:
\begin{equation}\label{E:sum-over-paths}
\mN(x)~=~\sum_{a=1}^{n_0}x_a \sum_{\gamma\in \Gamma_{a}^1}\wt(\gamma).
\end{equation}

The weight of a path $\wt(\gamma)$ was defined in \eqref{E:path-weight-def} and includes both the product of the weights along $\gamma$ and the condition that every neuron in $\gamma$ is open at $x$. As explained in \S \ref{S:notation}, the inner sum in \eqref{E:sum-over-paths} is over paths $\gamma$ in the computational graph of $\mN$ that start at neuron $a$ in the input layer and end at the output neuron and the random variables $\widehat{W}_\gamma^{(i)}$ are the normalized weights on the edge of $\gamma$ between layer $i-1$ and layer $i$ (see \eqref{E:W-gamma-def}). Differentiating this formula gives sum-over-path expressions for the derivatives of $\mN$ with respect to both $x$ and its trainable parameters. For the NTK and its first SGD update, the result is the following: 
\begin{lemma}[weight contribution to $K_{\mN}$ and $\Delta K_{\mN}$ as a sum-over-paths]\label{L:paths-raw}
With probability $1,$
  \[\Kw~=~\sum_{a\in [n_0]^2}\prod_{k=1}^2x_{a_k} \sum_{\substack{\Gamma\in \Gamma_a^2\\\Gamma = \lr{\gamma_1,\gamma_2}}} \sum_{e\in \gamma_1\cap \gamma_2} \frac{\prod_{k=1}^2 \wt(\gamma_k)}{W_{e}^2},\]
  where the sum is over collections $\Gamma$ of two paths in the computation graph of $\mN$ and edges $e$ that lie on both paths. Similarly, almost surely, 
  \[\Kw^2~=~\sum_{a\in [n_0]^4}\prod_{k=1}^4x_{a_k} \sum_{\substack{\Gamma\in \Gamma_a^4(\vec{n})\\\Gamma = \lr{\gamma_1,\ldots, \gamma_4}}} \sum_{\substack{e_1\in \gamma_1\cap \gamma_2\\ e_2\in \gamma_3,\gamma_4}} \frac{\prod_{k=1}^4\wt(\gamma_k)}{W_{e_1}^2W_{e_2}^2},\]
and
  \[\Dww~=~\sum_{a\in [n_0]^4}\prod_{k=1}^4x_{a_k} \sum_{\substack{\Gamma\in \Gamma_a^4(\vec{n})\\\Gamma = \lr{\gamma_1,\ldots,\gamma_4}}} \sum_{\substack{e_1\in \gamma_1\cap \gamma_2\\ e_2\in \gamma_2,\gamma_3\\ e_1\neq e_2}} \frac{\prod_{k=1}^4\wt(\gamma_k)}{W_{e_1}^2W_{e_2}^2}\]
plus a term that has mean $0.$ 
\end{lemma}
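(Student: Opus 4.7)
The plan is to derive all three identities by differentiating the sum-over-paths expression \eqref{E:sum-over-paths} for $\mathcal{N}(x)$, taking some care with the ReLU indicators. A preliminary remark: although $\wt(\gamma)$ contains the gate indicators $\xi_\gamma^{(i)}(x)$, these are almost surely locally constant in the weights at initialization (the referenced Lemma \ref{L:non-deg}), so only the weight-monomial factor of $\wt(\gamma)$ contributes when differentiating. Combined with \eqref{E:deriv}, this gives, with probability one,
\[\frac{\partial \mathcal{N}}{\partial W_e}(x) = \sum_{a=1}^{n_0} x_a \sum_{\gamma\in \Gamma_a^1,\,e\in\gamma}\frac{\wt(\gamma)}{W_e}.\]

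To obtain the formulas for $\Kw$ and $\Kw^2$, I would square (respectively raise to the fourth power) and sum over edges according to \eqref{E:K-decomp}, then interchange summation so that collections of paths come outside and the marked edge $e$ (or edges $e_1,e_2$) range over the intersection $\gamma_1\cap \gamma_2$ (or $\gamma_1\cap\gamma_2$ and $\gamma_3\cap\gamma_4$). This is essentially a bookkeeping step.

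For $\Dww$ the new ingredient is the mixed second partial. When $e_1\neq e_2$, iterating the first-derivative formula gives $\partial^2\mathcal{N}/\partial W_{e_1}\partial W_{e_2} = \sum_{a} x_a\sum_{\gamma\ni e_1,e_2}\wt(\gamma)/(W_{e_1}W_{e_2})$. When $e_1=e_2$, the map $W_e\mapsto\wt(\gamma)/W_e$ is constant in $W_e$ (each edge occurs at most once along any path), and the indicator derivatives again vanish almost surely, so $\partial^2\mathcal{N}/\partial W_e^2 = 0$; this is the origin of the constraint $e_1\neq e_2$ in the statement. Substituting into the definition of $\Dww$ and writing $\mathcal{N}(x)-\mathcal{N}_*(x)=\sum_{a_4}x_{a_4}\sum_{\gamma_4}\wt(\gamma_4)-\mathcal{N}_*(x)$ produces the stated 4-path expression together with a 3-path remainder proportional to $\mathcal{N}_*(x)$, which I would then show has mean zero.

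The mean-zero claim is the main technical point. In the 3-path remainder, the factor $W_{e_1}^2W_{e_2}^2$ in the denominator exactly cancels the $W_{e_1}, W_{e_2}$ contributions from $\wt(\gamma_1)\wt(\gamma_2)\wt(\gamma_3)$, so what remains is a product of other weights on the three paths. At the final layer, exactly three edges are used with multiplicity (one per path), and since $3$ is odd, some last-layer edge $e^{\ast}$ is used an odd number of times; a brief case check (on whether either of $e_1, e_2$ lies in layer $d$) confirms that $e^{\ast}$ can be chosen outside $\{e_1, e_2\}$. Since the output layer is linear, the random variable $W_{e^{\ast}}$ is independent of every ReLU gate and of the remaining weights, so conditioning on everything else and using that $\mu$ is symmetric around $0$ (and its odd moments vanish) gives zero in expectation. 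The only real obstacle is this symmetrization step: the ReLU gates depend on the weights in a complicated way, which is why I would specifically extract $W_{e^{\ast}}$ from the final layer, where no gate indicator depends on it.
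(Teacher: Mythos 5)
Your proposal is correct and follows essentially the same route as the paper: differentiate the sum-over-paths formula \eqref{E:sum-over-paths}, using Lemma \ref{L:non-deg} to kill the indicator derivatives almost surely, and then dispose of the $3$-path remainder by observing that the surviving monomial in the (linear, gate-free) output layer has odd total degree in the layer-$d$ weights, so symmetry of $\mu$ forces its conditional expectation to vanish. Your extra care in locating an odd-multiplicity last-layer edge $e^{\ast}$ is a slightly more explicit version of the paper's ``either $1$ or $3$ weights'' count, but it is the same argument.
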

The notation $[n_0]^k,\, \Gamma_a^k,e\in \gamma,$ etc is defined in \S \ref{S:notation}. We prove Lemma \ref{L:paths-raw} in \S \ref{S:paths-raw-pf} below. Let us emphasize that the expressions for $\Kw^2$ and $\Dww$ are almost identical. The main  difference is that in the expression for $\Dww$, the second path $\gamma_2$ must contain both $e_1$ and $e_2$ while $\gamma_4$ has no restrictions. Hence, while for $\Kw^2$ the contribution from a collection of four paths $\Gamma=\lr{\gamma_1,\gamma_2,\gamma_3,\gamma_4}$ is the same as from the collection $\Gamma'=\lr{\gamma_2,\gamma_1, \gamma_4, \gamma_3},$ for $\Dww$ the contributions are different. This seemingly small discrepancy, as we shall see, causes the normalized expectation $\E{\Dww}/\E{\Kw}$ to converge to zero when $d<\infty$ is fixed and $n_i\gives \infty$ (see the $1/n_\ell$ factors in the statement of Theorem \ref{T:deriv}). In contrast, in the same regime, the normalized second moment $\E{\Kw^2}/\E{\Kw}^2$ remains bounded away from zero as in the statement of Theorem \ref{T:NTK}. Both statements are consistent with prior results in the literature \cite{dyer2018asymptotics,jacot2018neural}. Taking expectations in Lemma \ref{L:paths-raw} yields the following result.  
\begin{lemma}[Expectation of $\Kw,\Kw^2,\Dww$ as sums over $2,4$ paths]\label{L:2k-paths-avg}
We have,
\begin{equation}\label{E:first-moment-2paths}
\E{\Kw}~=~\sum_{a\in [n_0]_{even}^2}^{n_0}\prod_{k=1}^2x_{a_k} \sum_{\substack{\Gamma_a\in \Gamma_{even}^2\\\Gamma=\lr{\gamma_1,\gamma_2}}} \sum_{\substack{e\in \gamma_1\cap \gamma_2}}H(\Gamma, e)
\end{equation}
where
\[H(\Gamma,e)~=~{\bf 1}_{\set{\gamma_1=\gamma_2}}\prod_{i=1}^d\frac{1}{n_{i-1}}\]
Similarly, 
\begin{equation}\label{E:second-moment-4paths}
\E{\Kw^2}~=~\sum_{a\in [n_0]_{even}^4}\prod_{k=1}^4 x_{a_k}\sum_{\substack{\Gamma\in \Gamma_{a, even}^4\\\Gamma=\lr{\gamma_1,\ldots, \gamma_4}}} \sum_{\substack{e_1\in \gamma_1\cap \gamma_2\\e_2\in \gamma_3\cap \gamma_4}}F(\Gamma, e_1,e_2),
\end{equation}
where
\[F(\Gamma,e_1,e_2)~=~\frac{1}{2}\prod_{i=1}^d\frac{2^{2-\abs{\Gamma(i)}}}{n_{i-1}^2} \prod_{i\neq \ell(e_1),\ell(e_2)}\mu_4^{{\bf 1}_{\set{\abs{\Gamma(i-1)}=\abs{\Gamma(i)}=1}}}.\]
Finally, 
\begin{equation}\label{E:SGD-4paths}
\E{\Dww}~=~\sum_{a\in [n_0]_{even}^4}\prod_{k=1}^4x_{a_k}\sum_{\substack{\Gamma\in \Gamma_{a, even}^4\\\Gamma=\lr{\gamma_1,\ldots, \gamma_4}}} \sum_{\substack{e_1\in \gamma_1\cap \gamma_2\\e_2\in \gamma_2, \gamma_3\\ e_1\neq e_2}}F(\Gamma, e_1,e_2).
\end{equation}
\end{lemma}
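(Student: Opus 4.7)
The three identities follow from taking expectations term-by-term in the sum-over-paths expressions of Lemma~\ref{L:paths-raw} and carrying out the resulting computation layer by layer. I outline the argument in three steps, followed by the principal technical difficulty.

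\emph{Step 1: Survival of even configurations.} Each summand in Lemma~\ref{L:paths-raw} is the product of a monomial in the weights $\{W_e\}$ (possibly divided by $W_{e_1}^2 W_{e_2}^2$) with a product of ReLU indicators $\xi_\beta^{(i)}$. Because $\mu$ is symmetric about $0$ and the weights on distinct edges are independent (see~\eqref{E:mu-def}), the expectation vanishes unless every edge carries an even total power of $W$ in the resulting monomial. For $\E{\Kw}$ this parity condition forces $\gamma_1 = \gamma_2$, while for $\E{\Kw^2}$ and $\E{\Dww}$ it restricts the outer sum to $\Gamma \in \Gamma_{a,even}^4$. Tracking evenness through the input-layer edges likewise forces $a \in [n_0]_{even}^k$ with $k \in \{2,4\}$.

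\emph{Step 2: Layerwise weight moments.} For each surviving configuration the expected weight monomial factors across layers by independence. Writing $\widehat{W}_e = c_i W_e$ with $c_i^2 = 2/n_{i-1}$ for $i < d$ and $c_d^2 = 1/n_{d-1}$, a short case analysis shows that under the evenness condition with four paths one always has $(|\Gamma(i-1)|, |\Gamma(i)|) \in \{1,2\}^2$: the pattern $(1,1)$ corresponds to a single edge of multiplicity four, contributing $c_i^4\,\mu_4$; the patterns $(1,2)$, $(2,1)$, $(2,2)$ correspond to two distinct edges each of multiplicity two, contributing $c_i^4 \cdot \E{W^2}^2 = c_i^4$. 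At the two layers $i \in \{\ell(e_1), \ell(e_2)\}$, the cancellations $W_{e_j}^{-2}$ remove two powers of $W$ from the affected edge, replacing any $\mu_4$ at that layer by $\E{W^2} = 1$; this is precisely the restriction $i \neq \ell(e_1), \ell(e_2)$ on the $\mu_4$ exponent in $F$.

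\emph{Step 3: ReLU indicators and assembly.} What remains is the expectation of the product of activation indicators $\xi_\beta^{(i)}$ over the distinct neurons $z(i,\beta)$ traversed by the paths. Since $\mu$ has a density, each weight splits as $|W|\cdot\mathrm{sign}(W)$ with $\mathrm{sign}(W)$ uniform on $\pm 1$ and independent of $|W|$, making the weight-squared factor of Step~2 sign-invariant. Applying the sign-flip of the weights entering each distinct neuron of $\Gamma(i)$, iterated from the input forward, yields a factor $2^{-|\Gamma(i)|}$ per hidden layer $i \in \{1,\ldots,d-1\}$ and no factor at the linear output layer $i = d$. Combining Steps~2 and 3 at a hidden layer produces $c_i^4\cdot 2^{-|\Gamma(i)|} = 2^{2-|\Gamma(i)|}/n_{i-1}^2$, matching $F$; at $i = d$ the true contribution is $c_d^4 = 1/n_{d-1}^2$, whereas the uniform formula $2^{2-|\Gamma(d)|}/n_{d-1}^2$ with $|\Gamma(d)|=1$ gives $2/n_{d-1}^2$, and the spurious factor of $2$ is absorbed by the $\tfrac{1}{2}$ prefactor in $F$. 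The two-path calculation is identical with $|\Gamma(i)|\equiv 1$, collapsing the hidden-layer factor to $c_i^2/2 = 1/n_{i-1}$ and hence producing $H = \mathbf{1}_{\gamma_1=\gamma_2}\prod_{i=1}^d 1/n_{i-1}$. The formula for $\E{\Dww}$ shares the integrand and layer factor $F$ with $\E{\Kw^2}$; only the outer-sum constraints on $(e_1, e_2)$ differ, exactly as in~\eqref{E:SGD-4paths}.

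\emph{Main obstacle.} The principal technical burden lies in Step~2: one must enumerate all even patterns $(|\Gamma(i-1)|, |\Gamma(i)|)$, track distinct edges and their multiplicities, and carefully handle the cancellations at $\ell(e_1), \ell(e_2)$ that demote $\mu_4$ to $\E{W^2}=1$. The sign-flip argument in Step~3 is conceptually clean per neuron but subtle in execution, since flipping the signs of the weights into a neuron at layer $i$ perturbs downstream pre-activations; the argument must therefore be iterated from the input forward, using at each layer the sign-invariance of the weight-moment factor built up so far.
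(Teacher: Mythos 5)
Your overall strategy is the same as the paper's: expand via Lemma \ref{L:paths-raw}, kill all configurations in which some edge has odd multiplicity using the symmetry of $\mu$, factor the surviving weight moments layer by layer (with $\mu_4$ appearing exactly when a single edge carries multiplicity four, demoted to $\E{W^2}=1$ at the layers of the marked edges $e_1,e_2$), and extract a factor $2^{-\abs{\Gamma(i)}}$ per hidden layer from the ReLU indicators, with the $\tfrac12$ prefactor absorbing the spurious factor of $2$ at the linear output layer. Your bookkeeping in Step 2 and the reconciliation of the constants in Step 3 are correct and match $H$ and $F$ exactly.

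The one genuine problem is the direction of the iteration in Step 3 (and, implicitly, the justification of Step 1, since the parity argument also runs through the indicators). You correctly identify that flipping the signs of the weights entering a neuron at layer $i$ perturbs all downstream pre-activations, but your proposed fix --- iterating the sign-flip \emph{from the input forward}, ``using the sign-invariance of the weight-moment factor built up so far'' --- does not resolve this: the layer-$j$ indicators for $j>i$ are not measurable with respect to, nor independent of, the randomness you are symmetrizing at layer $i$, so their presence obstructs the conditional expectation you want to take. The tower property must be applied in the opposite order, as in the paper's proof: condition on $\mF_{d-1}$ and integrate out layer $d$ first (where the linear output layer gives a pure moment computation), then condition on $\mF_{d-2}$ and symmetrize $\pm\widehat{W}^{(d-1)}$ --- at which point everything downstream has already been replaced by a deterministic constant, the product of layer-$(d-1)$ weights is invariant under the flip, the indicator passes to its complement (using Lemma \ref{L:non-deg} and the density of $\mu$ to rule out $y=0$), and conditional independence of distinct neurons given $\mF_{d-2}$ yields the factor $2^{-\abs{\Gamma(d-1)}}$ --- and so on down to layer $1$. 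With the iteration reversed in this way, the rest of your argument goes through and reproduces the paper's proof.
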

\noindent Lemma \ref{L:2k-paths-avg} is proved in \S \ref{S:2k-path-avg-pf}. The expression \eqref{E:first-moment-2paths} is simple to evaluate due to the delta function in $H(\Gamma,e).$ We obtain:
\begin{align}
  \E{\Kw} ~&=~\sum_{a=1}^{n_0}x_a^2\sum_{\gamma\in \Gamma^1(\vec{n})} \sum_{e\in \gamma} \prod_{i=1}^d\frac{1}{n_{i-1}}=~d\prod_{i=1}^d\frac{1}{n_{i-1}}\prod_{i=1}^dn_i\norm{x}_2^2=~\frac{d}{n_0}\norm{x}_2^2,\label{E:K-1st-moment}
\end{align}
where in the second-to-last equality we used that the number of paths in the comutational graph of $\mN$ from a given neuron in the input to the output neuron equals $\prod_{i=1,\ldots,d} n_i$ and in the last equality we used that $n_d=1.$ This proves the first equality in Theorem \ref{T:NTK}.

It therefore remains to evaluate \eqref{E:second-moment-4paths} and \eqref{E:SGD-4paths}. Since they are so similar, we will continue to discuss them in parallel. To start, notice that the expression $F(\Gamma, e_1,e_2)$ appearing in \eqref{E:second-moment-4paths} and \eqref{E:SGD-4paths} satisfies
\[\frac{1}{2\mu_4^2}F_*(\Gamma)~\leq ~ F(\Gamma,e_1,e_2)~\leq ~\frac{1}{2}F_*(\Gamma),\]
where
\begin{align}\label{E:F-star-def}
  F_*(\Gamma)~:=~\prod_{i=1}^d \frac{2^{2-\abs{\Gamma(i)}}}{n_{i-1}^2}\mu_4^{{\bf 1}_{\set{\abs{\Gamma(i-1)}=\abs{\Gamma(i)}=1}}}.
\end{align}
For the remainder of the proof we will write
\[f~\simeq~g\quad \Longleftrightarrow \quad \exists \text{ constants }C,c>0\text{ depending only on }\mu\text{ s.t. }\quad cg~\leq ~f~\leq~Cg.\]
Thus, in particular, 
\[F(\Gamma,e_1,e_2)~\simeq~ F_*(\Gamma).\]
The advantage of $F_*(\Gamma)$ is that it does not depend on $e_1,e_2.$ Observe that for every $a=(\alpha_1,\alpha_2,\alpha_3,\alpha_4)\in [n_0]_{even}^4$, we have that either $\alpha_1=\alpha_2$, $\alpha_1=\alpha_3$, or $\alpha_1=\alpha_4$. Thus, by symmetry, the sum over $\Gamma_{even}^4(\vec{n})$ in \eqref{E:second-moment-4paths} and \eqref{E:SGD-4paths} takes only four distinct values, represented by the following possibilities:
\[a_j\in [n_0]_{even}^4~:=~
 \begin{cases}
   (1,1,1,1),&\quad j=1\\
   (1,2,1,2),&\quad j=2\\
   (1,1,2,2),&\quad j=3\\
   (1,2,2,1),&\quad j=4
 \end{cases},\]
keeping track of which paths $\gamma_1,\ldots, \gamma_4$ begin at the same neuron in the input layer to $\mN.$ Hence, since
\[\sum_{\substack{a=(a_1,\ldots, a_4)\in [n_0]_{even}^4\\ a_1=a_2,\, a_3=a_4,\, a_1\neq a_3}}\prod_{k=1}^4x_{a_k}~=~\norm{x}_2^4-\norm{x}_4^4 \]
we find
\begin{equation}\label{E:K-I}
\E{\Kw^2}~\simeq~\norm{x}_4^4 I_{1}+(\norm{x}_2^4-\norm{x}_4^4)(I_{2}+I_{3}+I_{4}),
\end{equation}
and similarly, 
\begin{equation}\label{E:Delta-K-I}
\E{ \Dww}~\simeq~\norm{x}_4^4 II_{1}+(\norm{x}_2^4-\norm{x}_4^4)(II_{2}+II_{3}+II_{4}),
\end{equation}
where
\begin{align}
\label{E:I-def}  I_{j}~&=~\sum_{\substack{\Gamma\in \Gamma_{a_j, even}^4\lr{\vec{n}}\\\Gamma=\lr{\gamma_1,\ldots, \gamma_4}}} F_*(\Gamma) \#\left\{\text{edges }e_1,e_2~|~e_1\in \gamma_1\cap \gamma_2,\,e_2\in \gamma_3\cap \gamma_4\right\}\\
\label{E:II-def}  II_{j}~&=~\sum_{\substack{\Gamma\in \Gamma_{a_j, even}^4\lr{\vec{n}}\\\Gamma=\lr{\gamma_1,\ldots, \gamma_4}}} F_*(\Gamma) \#\left\{\text{edges }e_1,e_2~|~e_1\in \gamma_1\cap \gamma_2,\,e_2\in \gamma_2, \gamma_3,\, e_1\neq e_2\right\}.
\end{align}
To evaluate $I_{j},\, II_{j}$ let us write
\begin{equation}\label{E:T-def}
T_{i}^{\alpha,\beta}(\Gamma)~:=~{\bf 1}_{\left\{\substack{\gamma_\alpha(i-1)=\gamma_\beta(i-1)\\\gamma_\alpha(i)=\gamma_\beta(i)}\right\}},\qquad \Gamma = \lr{\gamma_1,\ldots, \gamma_4},\qquad \alpha,\beta=1,\ldots,4
\end{equation}
for the indicator function of the event that paths $\gamma_\alpha,\gamma_\beta$ pass through the same edge between layers $i-1,i$ in the computational graph of $\mN$. Observe that
\[\#\left\{\text{edges }e_1,e_2~|~e_1\in \gamma_1\cap \gamma_2,\,e_2\in \gamma_3\cap \gamma_4\right\}~=~\sum_{i_1,i_2=1}^d T_{i_1}^{1,2}T_{i_2}^{3,4}\]
and
\[\#\left\{\text{edges }e_1,e_2~|~e_1\in \gamma_1\cap \gamma_2,\,e_2\in \gamma_2, \gamma_3,\,e_1\neq e_2\right\}~=~\sum_{\substack{i_1,i_2=1\\ i_1\neq i_2}}^d T_{i_1}^{1,2}T_{i_2}^{2,3}.\]
Thus, we have
\[I_{j}~=~\sum_{i_1,i_2=1}^d I_{j,i_1,i_2},\qquad II_{j}~=~\sum_{\substack{i_1,i_2=1\\i_1\neq i_2}}^d II_{j,i_1,i_2},\]
where
\begin{align*}
  I_{j,i_1,i_2}~&=~\sum_{\substack{\Gamma\in \Gamma_{a_j, even}^4\lr{\vec{n}}\\\Gamma=\lr{\gamma_1,\ldots, \gamma_4}}} F_*(\Gamma) T_{i_1}^{1,2}T_{i_2}^{3,4},\qquad  II_{j,i_1,i_2}~=~\sum_{\substack{\Gamma\in \Gamma_{a_j, even}^4\lr{\vec{n}}\\\Gamma=\lr{\gamma_1,\ldots, \gamma_4}}} F_*(\Gamma) T_{i_1}^{1,2}T_{i_2}^{2,3}.
\end{align*}
To simplify $I_{j,i_1,i_2}$ and $II_{j,i_1,i_2}$ observe that $F_*(\Gamma)$ depends only on $\Gamma$ only via  the unordered edge multi-set (i.e. only which edges are covered matters; not their labelling)
\[E^\Gamma=\lr{E^\Gamma(1),\ldots, E^\Gamma(d)}\in \Sigma_{even}^4\lr{\vec{n}}\]
defined in Definition \ref{D:unordered_multisets}. Hence, we find that for $j=1,2,3,4,\,i_1,i_2=1,\ldots, d,$ 
\begin{align}
\label{E:I-star-2}I_{j,i_1,i_2}~& =~\sum_{\substack{E\in \Sigma_{a_j, even}^4(\vec{n})}} F_*(E)\#\left\{\Gamma\in \Gamma_{a_j, even}^4(\vec{n})~\big|~\substack{E^\Gamma=E,\,\,\Gamma(0)=a_j,t=1,2\\\gamma_1(i_t-1)=\gamma_2(i_t-1),\, \gamma_1(i_t)=\gamma_2(i_t)}\right\}\\
\label{E:II-star-2}II_{j,i_1,i_2}~& =~\sum_{\substack{E\in \Sigma_{a_j, even}^4(\vec{n})}} F_*(E)\#\left\{\Gamma\in \Gamma_{a_j, even}^4(\vec{n})~\bigg|~\substack{E^\Gamma=E,\,\Gamma(0)=a_j\\\gamma_1(i_1-1)=\gamma_2(i_1-1),\, \gamma_1(i_1)=\gamma_2(i_1)\\\gamma_2(i_2-1)=\gamma_3(i_2-1),\,\gamma_2(i_2)=\gamma_3(i_2)}\right\}
\end{align}
The counts in $I_{j,*,i_1,i_2}$ and $II_{j,*,i_1,i_2}$ have a convenient representation in terms of   
\begin{align}\label{E:C-def-E}
  C(E,i_1,i_2)~&:=~{\bf 1}_{\left\{\exists ~\ell=\min(i_1,i_2),\ldots, \max(i_1,i_2-1)\text{ s.t. } \abs{R(E(\ell))}=1\right\}}\\
  \widehat{C}(E,i_1,i_2)~&:=~{\bf 1}_{\left\{\exists ~\ell=0,\ldots, \min(i_1,i_2)-1\text{ s.t. } \abs{R(E(\ell))}=1\right\}}.\label{E:C-hat-def-E}
\end{align}
Informally, the event $\widehat{C}(E,i_1,i_2)$ indicates the presence of a ``collision'' of the four paths in $\Gamma$ before the earlier of the layers $i_1,i_2$, while $C(E,i_1,i_2)$ gives a ``collision'' between layers $i_1,i_2$; see Section \ref{S:idea} for the intuition behind calling these collisions. We also write
\begin{align}\label{E:A-def-2-E}
\notag A(E,i_1,i_2)~:&=~  {\bf 1}_{\left\{\substack{\abs{L(E(i_1))}=\abs{R(E(i_1))}=1 \\\abs{L(E(i_2))}=\abs{R(E(i_2))}=1}\right\}}~+~\frac{1}{6}{\bf 1}_{\left\{\substack{\abs{L(E(i_1))}=\abs{R(E(i_1))}=1,\, \abs{R(E(i_2))}=2\text{ or }\\\abs{L(E(i_2))}=\abs{R(E(i_2))}=1,\, \abs{R(E(i_1))}=2}\right\}}\\
~&+~\frac{1}{6}{\bf 1}_{\left\{\substack{\abs{R(E(i_1))}=\abs{R(E(i_2))}=2\\ \not\exists\,  \min(i_1,i_2)\leq \ell<\max(i_1,i_2)\\ \text{s.t. } \abs{R(E(\ell))}=1} \right\}}~+~\frac{1}{36}{\bf 1}_{\left\{\substack{E(i_1),E(i_2)\in U\\ \exists\, \min(i_1,i_2)\leq \ell<\max(i_1,i_2)\\ \text{s.t. } \abs{R(E(\ell))}=1} \right\}}.
\end{align}
Finally, for $E\in \Sigma_{a, even}^4(\vec{n})$, we will define 
\begin{equation}\label{E:loop-def}
\#\text{loops}(E)~=~\#\set{i\in [d]~|~\abs{L(E(i))}=1,\, \abs{R(E(i))}=2}.
\end{equation}
That is, a loop is created at layer $i$ if the four edges in $E$ all begin at occupy the same vertex in layer $i-1$ but occupy two different vertices in layer $i.$ We have the following Lemma. 
\begin{lemma}[Evaluation of Counting Terms in \eqref{E:I-star-2} and \eqref{E:II-star-2}] \label{L:Jac-1}
Suppose $E \in \Sigma_{a_j, even}^4$ for some $j=1,2,3,4.$ For each $i_1,i_2\in \set{1,\ldots, d},$ 
\[\#\left\{\Gamma=\lr{\gamma_1,\ldots, \gamma_4}\in \Gamma_{a_j, even}^4~\big|~\substack{E^\Gamma=E,\,\Gamma(0)=a_j,t=1,2\\\gamma_1(i_t-1)=\gamma_2(i_t-1),\,\gamma_1(i_t)=\gamma_2(i_t)}\right\}\]
equals 
\begin{equation}\label{E:E-Jac-1}
6^{\#\text{loops}(E)}A(E,i_1,i_2)~\cdot~
\begin{cases}
  1,&\quad j=1,2\\
  \widehat{C}(E,i_1,i_2),&\quad j=3,4
\end{cases}.
\end{equation}
Similarly, 
\[\#\left\{\Gamma\in \Gamma_{a_j, even}^4~\big|~\substack{E^\Gamma=E,\,\Gamma(0)=a_j,\,t=1,2\\\gamma_1(i_1-1)=\gamma_2(i_1-1),\, \gamma_1(i_1)=\gamma_2(i_1)\\\gamma_2(i_2-1)=\gamma_3(i_2-1),\,\gamma_2(i_2)=\gamma_3(i_2)}\right\}\]
equals
\begin{equation}\label{E:E-Jac-2}
6^{\#\text{loops}(E)}A(E,i_1,i_2)C(E,i_1,i_2)~\cdot~
\begin{cases}
  1,&\quad j=1,2\\
  \widehat{C}(E,i_1,i_2),&\quad j=3,4
\end{cases}.
\end{equation}
\end{lemma}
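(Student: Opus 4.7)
The plan is a layer-by-layer combinatorial enumeration of label assignments on the edges of $E$. Given $E \in \Sigma_{a_j,\text{even}}^4$, any $\Gamma = (\gamma_1,\ldots,\gamma_4)$ with $E^\Gamma = E$ is specified by, at every layer $\ell \in \{1, \ldots, d\}$, an assignment of the ordered labels $\{1, 2, 3, 4\}$ to the edges of $E(\ell)$ (each edge of multiplicity $m$ receiving $m$ labels), consistent at every internal vertex and starting from $a_j$. Since every edge of $E$ has even multiplicity, the partition of $\{1, 2, 3, 4\}$ by shared vertex at each layer has blocks only of size $4$ (the \emph{trivial} state) or $2+2$ (the \emph{split} state).

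First, I would classify each layer $\ell$ by the pair $(|L(E(\ell))|, |R(E(\ell))|) \in \{1,2\}^2$. The four cases correspond to transitions between the trivial and split states: $(1, 1)$ and $(2, 2)$ preserve the current state (the $2+2$ partition propagates automatically in the latter); $(2, 1)$ merges the split state into the trivial one; and $(1, 2)$, a \emph{loop} in the sense of \eqref{E:loop-def}, splits the trivial state into a $2+2$ partition by choosing which two of the four labels travel along each of the two distinct outgoing edges, giving $\binom{4}{2} = 6$ choices. Composing these transitions from a trivial starting state, and imposing no further constraints, yields a base count of $6^{\#\text{loops}(E)}$.

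Next, I would handle the starting condition $\Gamma(0) = a_j$. For $j = 1$ the starting state is already trivial. For $j = 3, 4$ the starting state is the specific $2+2$ partition determined by $a_j$, which differs from the partition $\{\{1,2\},\{3,4\}\}$ required by the coincidence constraints; consequently the starting partition must be erased by a merge before $\min(i_1, i_2)$, yielding the factor $\widehat{C}(E, i_1, i_2)$. The case $j = 2$ admits a symmetry-based reduction: using that $a_2 = (1, 2, 1, 2)$ is invariant under the transposition $\gamma_1 \leftrightarrow \gamma_3,\, \gamma_2 \leftrightarrow \gamma_4$, the required compatibility is absorbed into $A(E, i_1, i_2)$ and no additional $\widehat{C}$ factor appears.

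Finally, I would impose the coincidence constraints at $i_1, i_2$. The constraint $\gamma_1 = \gamma_2$ at layers $i_1 - 1, i_1$ forces labels $1, 2$ to traverse a common edge at layer $i_1$, restricting the free choices at the loop governing that $2+2$ partition. The four terms in $A(E, i_1, i_2)$ enumerate the joint cases by the types of $E(i_1), E(i_2)$ and the presence of a merge between them: both of type $(1,1)$ (constraints automatic, factor $1$); one $(1,1)$ and one with $|R|=2$ (one constrained loop, factor $1/6$); both with $|R|=2$ without an intervening merge (coupled constraints sharing a partition, factor $1/6$); and both with $|R|=2$ with an intervening merge (decoupled, independent restrictions, factor $1/36$). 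The analogous count in \eqref{E:E-Jac-2}, with constraints $\gamma_1 = \gamma_2$ at $i_1$ and $\gamma_2 = \gamma_3$ at $i_2$, forces the set partitions at the two layers to differ (label $2$ lies in different blocks), so a merge between $i_1$ and $i_2$ is required, introducing the factor $C(E, i_1, i_2)$. The main technical obstacle will be the careful case analysis in $A(E, i_1, i_2)$, especially the boundary cases when $i_1 = i_2$, when $i_1, i_2$ are adjacent, or when they coincide with loops or merges in $E$, together with verifying that the $j=2$ symmetry reduction truly avoids introducing a $\widehat{C}$ factor.
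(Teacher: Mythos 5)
Your overall strategy is the same as the paper's: a layer-by-layer enumeration of labelings of the unordered edge multiset $E$, tracking the partition of $\set{1,2,3,4}$ into a trivial state or a $2+2$ split, with deterministic propagation on $(1,1)$ and $(2,2)$ layers, a merge on $(2,1)$ layers, and $\binom{4}{2}=6$ free choices at each unconstrained loop; the factors $A$, $C$, $\widehat{C}$ then record how the marked-edge and starting-point constraints eliminate or restrict those choices. This is exactly the paper's argument (phrased there via the decomposition of $E$ into loops $i_k(E)$), and your identification of why $C(E,i_1,i_2)$ must appear in \eqref{E:E-Jac-2} --- label $2$ cannot be paired with both $1$ and $3$ in a single $2+2$ partition, so a merge must intervene --- is correct and matches the paper.

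There is, however, a genuine inconsistency in your treatment of the starting condition, and it is the one place where your write-up contradicts its own (correct) framework. By your criterion, the factor $\widehat{C}$ should attach precisely to those $j$ for which the initial pairing-by-starting-neuron disagrees with the pairing $\set{\set{1,2},\set{3,4}}$ forced by the marked-edge constraints. Computing these pairings from the definitions: $a_2=(1,2,1,2)$ gives $\set{\set{1,3},\set{2,4}}$, $a_3=(1,1,2,2)$ gives $\set{\set{1,2},\set{3,4}}$, and $a_4=(1,2,2,1)$ gives $\set{\set{1,4},\set{2,3}}$. So $j=3$ already matches the required pairing and needs no merge, while $j=2$ and $j=4$ both disagree and do need one. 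Your claim that ``for $j=3,4$ the starting state \ldots differs from $\set{\set{1,2},\set{3,4}}$'' is false for $j=3$, and the ``symmetry-based reduction'' you invoke to exempt $j=2$ is not an argument --- relabeling $\gamma_1\leftrightarrow\gamma_3$, $\gamma_2\leftrightarrow\gamma_4$ does not preserve the constraint that $e_1$ lies on $\gamma_1\cap\gamma_2$, and your own transition analysis shows $j=2$ requires a merge before $\min(i_1,i_2)$. Note that the paper's own proof agrees with the corrected version (it attaches the extra condition to the cases with $\gamma_1(0)\neq\gamma_2(0)$, i.e.\ $j=2,4$), so the case labels in the lemma statement itself appear to be transposed; since only the sum $I_2+I_3+I_4$ (giving $1+2\widehat{C}$) enters \eqref{E:K-bounds}, this has no effect downstream. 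You should state the criterion once, correctly, rather than force-fit the cases to the printed labels.
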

\noindent We prove Lemma \ref{L:Jac-1} in \S \ref{S:Jac-1-pf} below. Assuming it for now, observe that 
\[\frac{1}{36}~\leq~ A(E,i_1,i_2)~\leq 1\]
and that the conditions $L(E(1))=a_j$ are the same for $j=2,3,4$ since the equality it is in the sense of unordered multi-sets. Thus, we find that $\E{\Kw^2}$ is bounded above/below by a constant times
\begin{equation}\label{E:K-bounds}
\norm{x}_4^4\sum_{i_1,i_2=1}^d \sum_{\substack{E\in \Sigma_{a_1, even}^4}} F_*(E) + (\norm{x}_2^4-\norm{x}_4^4)\sum_{\substack{E\in \Sigma_{a_2,even}^4}} F_*(E)(1+2\widehat{C}(E,i_1,i_2)).
\end{equation}
Similarly, $\E{\Dww}$ is bounded above/below by a constant times
\begin{align}
\label{E:deltaK-bounds}&\sum_{\substack{i_1,i_2=1\\i_1\neq i_2}}^d \left[\norm{x}_4^4\sum_{\substack{E\in \Sigma_{a_1,even}^4}} F_*(E)6^{\#\text{loops}(E)}C(E,i_1,i_2) \right.\\
\notag &\quad\left.+ (\norm{x}_2^4-\norm{x}_4^4)\sum_{\substack{E\in \Sigma_{a_2,even}^4}} F_*(E)6^{\#\text{loops}(E)}C(E,i_1,i_2)(1+2\widehat{C}(E,i_2,i_2))\right].
\end{align}
Observe that every unordered multi-set four edge multiset $E\in \Sigma_{even}^4$ can be obtained by starting from some $V\in \Gamma^2$, considering its unordered edge multi-set $E^V$ and doubling all its  edges. This map from $\Gamma^2$ to $\Sigma_{even}^4$ is surjective but not injective. The sizes of the fibers is computed by the following Lemma.
\begin{lemma}\label{L:2paths}
Fix  $E\in \Sigma_{even}^4$. The number of $V\in \Gamma_Z^2$ so that $E=2\cdot E^V$ is $2^{\#\text{loops}(V)+{\bf 1}_{\set{\abs{V(0)}=2}}},$ where as in \eqref{E:loop-def}, 
\[\#\text{loops}(V)~=~\#\set{i\in [d]~|~ \abs{V(i-1)}=1,\abs{V(i)}=2}.\]
\end{lemma}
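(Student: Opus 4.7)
The plan is to observe that for any $V = (\gamma_1, \gamma_2) \in \Gamma^2$ with $2 E^V = E$, the sequence of unordered vertex sets $V(\ell) = \{\gamma_1(\ell), \gamma_2(\ell)\}$ is determined by $E$, and then to count the preimages of the doubling map $V \mapsto 2 E^V$ by choosing ordered labelings layer by layer. First I would note that for any such $V$ the support of $R(E^V(\ell))$ equals $\{\gamma_1(\ell), \gamma_2(\ell)\}$, and doubling preserves support, so $V(\ell)$ coincides with the support of $R(E(\ell))$ for every $\ell$. Consequently both $|V(0)|$ and $\#\text{loops}(V) = \#\{i \in [d] : |V(i-1)|=1,\, |V(i)|=2\}$ are functionals of $E$ alone, so the exponent in the claim is well-defined and independent of the choice of preimage.

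Next, to count preimages I would assign ordered labels successively on layers $0, 1, \ldots, d$. At layer $0$: if $|V(0)|=2$ there are two ways to decide which of the two starting vertices is $\gamma_1(0)$ and which is $\gamma_2(0)$, whereas if $|V(0)|=1$ the assignment is forced. This yields the overall factor $2^{\mathbf{1}_{|V(0)|=2}}$. To extend the labeling from layer $\ell-1$ to $\ell$ I would analyze the four cases indexed by $(|V(\ell-1)|,|V(\ell)|)\in\{1,2\}^2$. In the $(1,1)$ case, $E(\ell)$ is a single edge repeated four times and the propagation is trivial. In the $(2,1)$ case, $E(\ell)$ consists of two distinct edges of multiplicity $2$ sharing their right endpoint, so each labeled path takes the unique edge leaving its current vertex. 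In the $(2,2)$ case, the parity-of-multiplicities constraint forces $E(\ell)$ to be a perfect matching between $L(E(\ell))$ and $R(E(\ell))$ (either parallel or crossed) with each edge of multiplicity $2$, and the matching transports the labeling uniquely. Finally, in the loop case $(1,2)$, $E(\ell)$ consists of two distinct edges of multiplicity $2$ leaving a common vertex, and one must decide which of the two outgoing edges is traversed by $\gamma_1$ and which by $\gamma_2$, contributing a factor of $2$. Multiplying these local contributions yields $2^{\mathbf{1}_{|V(0)|=2}} \cdot 2^{\#\text{loops}(V)}$, as claimed.

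The main obstacle is the structural classification of $E(\ell)$ in the case $(|V(\ell-1)|,|V(\ell)|)=(2,2)$: one must rule out the a priori configuration in which each of the four edges between $L(E(\ell))$ and $R(E(\ell))$ appears with multiplicity $1$, which would introduce spurious combinatorial freedom at non-loop layers. The evenness requirement for $E \in \Sigma_{even}^4$, together with $|E(\ell)|=4$ and $|L(E(\ell))|=|R(E(\ell))|=2$, forces exactly two distinct edges each of multiplicity $2$, and the constraint on endpoints then forces these two edges to form a matching. With this structural fact in hand, the labeling propagates uniquely across every non-loop layer and bifurcates in exactly two ways across every loop layer, and the multiplicative count follows.
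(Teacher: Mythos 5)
Your argument is correct and is essentially the paper's own: the paper omits the details of this lemma, stating that the proof is identical to (and simpler than) that of Lemma \ref{L:Jac-1}, which is exactly the layer-by-layer propagation you carry out — non-loop layers transport the labeling uniquely, each loop layer $(\abs{V(i-1)},\abs{V(i)})=(1,2)$ bifurcates in $2$ ways, and the unconstrained starting layer contributes the extra factor $2^{{\bf 1}_{\set{\abs{V(0)}=2}}}$. Your explicit verification that evenness rules out the four-distinct-edges configuration in the $(2,2)$ case is a worthwhile detail that the paper leaves implicit.
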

Lemma \ref{L:2paths} is proved in \S \ref{S:2path-pf}. Using it and that $0\leq \widehat{C}(E,i_1,i_2)\leq 1$, the relation \eqref{E:K-bounds} shows that $\E{\Kw^2}$ is bounded above/below by a constant times
\begin{equation}\label{E:K-bounds-2}
d^2\sum_{\substack{V\in \Gamma^2}} F_*(V) 3^{\#\text{loops}(V)}\lr{\norm{x}_4^4 {\bf 1}_{\set{\abs{V(0)}=1}}+ (\norm{x}_2^4-\norm{x}_4^4) {\bf 1}_{\set{\abs{V(0)}=2}}}.
\end{equation}
Similarly, $\E{\Dww}$ is bounded above/below by a constant times
\begin{equation}\label{E:D-bounds-2}
\sum_{\substack{i_1,i_2=1\\ i_1\neq i_2}}^d\sum_{\substack{V\in \Gamma^2}} F_*(V) 3^{\#\text{loops}(V)}C(V,i_1,i_2)\lr{\norm{x}_4^4 {\bf 1}_{\set{\abs{V(0)}=1}}+ (\norm{x}_2^4-\norm{x}_4^4) {\bf 1}_{\set{\abs{V(0)}=2}}},
\end{equation}
where, in analogy to \eqref{E:C-def-E}, we have
\[C(V,i_1,i_2)~:=~{\bf 1}_{\set{\exists \ell=i_1,\ldots, i_2-1\text{ s.t. } \abs{V(\ell)}=1}}.\]
Let us introduce 
\begin{align*}
  \widehat{F}_*(V)~&:=~ F_*(V)\cdot 3^{\#\text{loops}(V)}\prod_{i=0}^d n_{i}^2\\
~&=~ 2^{\#\set{i\in [d]~|~\abs{V(i)}=1}}3^{\#\text{loops}(V)}  \mu_4^{\#\set{i\in [d]~|~\abs{V(i-1)}=\abs{V(i)}=1}}.
\end{align*}
Since the number of $V$ in $\Gamma^2(\vec{n})$ with specified $V(0)$ equals $\prod_{i=1}^d n_i^2,$ we find that so that for each $x\neq 0,$ we have 
\begin{equation}\label{E:K2-est}
\frac{\E{\Kw^2}}{\norm{x}_2^4}~\simeq~\frac{d^2}{n_0^2}\mathcal E_x\left[\widehat{F}_*(V)\right],
\end{equation}
and similarly, 
\[\frac{\E{\Dww}}{\norm{x}_2^4}~\simeq~\frac{1}{n_0^2}\sum_{\substack{i_1,i_2=1\\ i_1\neq i_2}}^d\mathcal E_x\left[\widehat{F}_*(V)C(V,i_1,i_2)\right].\]
Here, $\mathcal E_x$ is the expectation with respect to the probability measure on $V=(v_1,v_2)\in\Gamma^2$ obtained by taking $v_1,v_2$ independent, each drawn from the products of the measure $\lr{x_1^2/\norm{x}_2^2,\ldots, x_{n_0}^2/\norm{x}_2^2}$ on $[n_0]$ and the uniform measure on $[n_i],\, i=1,\ldots, d.$

We are now in a position to complete the proof of Theorems \ref{T:NTK} and \ref{T:deriv}. To do this, we will evaluate the expectations $\mathcal E_x$ above to leading order in $\sum_i 1/n_i$ with the help of the following elementary result which is proven as Lemma 18 in \cite{hanin2018products}. 
\begin{proposition}\label{P:elementary}
\label{lem:classic_probability}Let $A_{0},A_{1},\ldots,A_{d}$ be independent
events with probabilities $p_{0},\ldots,p_{d}$ and $B_0,\ldots, B_d$ be independent events with probabilities $q_0,\ldots,q_d$ such that
\[A_j\cap B_j=\emptyset,\qquad \forall j=0,\ldots,d.\]
Denote by $X_{i}$ the indicator that the event $A_{i}$ happens, $X_{i}:={\bf 1}_{\left\{ A_{i}\right\}} $, and by $Y_i$ the indicator that $B_i$ happens, $Y_i={\bf 1}_{\{B_i\}}$. Further, fix for every $i\in 1,\ldots,d$ some $\alpha_i \geq 1,K_i\geq 1$ as well as $\gamma_i>0$. Define
\[Z~=~ \prod_{i=1}^d \alpha_i^{X_i}\gamma_i^{X_{i-1}X_i}K_i^{Y_i}.\]
Then, if $\gamma_i\geq1$ for every $i$, we have:
\begin{align}
\mathbb  E\left[Z\right]~\leq~\prod_{i=1}^d\left(1+p_i(\alpha_i-1)+q_i(K_i-1)+p_ip_{i-1}\alpha_i\alpha_{i-1}\gamma_{i-1}(\gamma_{i}-1)\right)\label{eq:ineq_1},
\end{align}
where by convention $\alpha_0=\gamma_0=1.$ In contrast, if $\gamma_i\leq 1$ for every $i$, we have:
\begin{equation}
\mathbb E[Z]~\geq~  \prod_{i=1}^d\left(1+p_i(\alpha_i-1)+p_ip_{i-1}\alpha_{i-1} \alpha_i(\gamma_i-1)\right)\label{eq:ineq_2}
\end{equation}
\end{proposition}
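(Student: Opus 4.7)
The plan is to prove both bounds by induction on $d$, exploiting the Markov structure of $Z = \prod_{i=1}^d F_i$ with $F_i := \alpha_i^{X_i}\gamma_i^{X_{i-1}X_i}K_i^{Y_i}$. The disjointness $A_j \cap B_j = \emptyset$ together with independence across $j$ forces $(X_i, Y_i)$ to be independent of $\mathcal F_{i-1} := \sigma(X_j, Y_j : j < i)$ and makes $F_i$ depend on $\mathcal F_{i-1}$ only through $X_{i-1}$. Summing over the three admissible values $(X_i, Y_i) \in \{(0,0),(0,1),(1,0)\}$ and using the Bernoulli identity $\gamma_i^{X_{i-1}} = 1 + (\gamma_i - 1) X_{i-1}$ yields
\[\E{F_i \mid \mathcal F_{i-1}} ~=~ 1 + p_i(\alpha_i - 1) + q_i(K_i - 1) + p_i\alpha_i(\gamma_i - 1) X_{i-1}.\]

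Setting $T_k := \E{\prod_{i=1}^k F_i}$ and $R_k := \E{X_k \prod_{i=1}^k F_i}$, a second round of conditioning (using that $X_k = 1$ forces $Y_k = 0$, so $X_k F_k = X_k \alpha_k \gamma_k^{X_{k-1}}$) gives $\E{X_k F_k \mid \mathcal F_{k-1}} = p_k\alpha_k \gamma_k^{X_{k-1}}$, so the tower property yields the coupled recursions
\[T_k = \lr{1 + p_k(\alpha_k - 1) + q_k(K_k - 1)} T_{k-1} + p_k\alpha_k(\gamma_k - 1) R_{k-1},\]
\[R_k = p_k\alpha_k T_{k-1} + p_k\alpha_k(\gamma_k - 1) R_{k-1}.\]
For the upper bound (case $\gamma_i \geq 1$), the trivial inequality $R_{k-2} \leq T_{k-2}$ in the $R$-recursion produces $R_{k-1} \leq p_{k-1}\alpha_{k-1}\gamma_{k-1} T_{k-2}$, and since every $F_j \geq 1$ almost surely in this regime the sequence $T$ is non-decreasing, so $R_{k-1} \leq p_{k-1}\alpha_{k-1}\gamma_{k-1} T_{k-1}$. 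Substituting back into the $T$-recursion gives
\[T_k \leq \lr{1 + p_k(\alpha_k - 1) + q_k(K_k - 1) + p_k p_{k-1}\alpha_k\alpha_{k-1}\gamma_{k-1}(\gamma_k - 1)} T_{k-1},\]
and iterating from $T_0 = 1$ yields \eqref{eq:ineq_1}.

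For the lower bound (case $\gamma_i \leq 1$), dropping the nonpositive term $p_{k-1}\alpha_{k-1}(\gamma_{k-1}-1) R_{k-2}$ in the $R$-recursion gives $R_{k-1} \leq p_{k-1}\alpha_{k-1} T_{k-2}$; discarding the nonnegative summand $q_k(K_k - 1) T_{k-1}$ and noting that multiplication by $p_k\alpha_k(\gamma_k - 1) \leq 0$ reverses inequalities yields
\[T_k \geq \lr{1 + p_k(\alpha_k - 1)} T_{k-1} + p_k p_{k-1}\alpha_k\alpha_{k-1}(\gamma_k - 1) T_{k-2}.\]
The main obstacle is then closing the induction by replacing $T_{k-2}$ with $T_{k-1}$ on the right, which requires the monotonicity $T_{k-2} \leq T_{k-1}$. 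Unlike in the upper bound case this is not automatic, since when $\gamma_{k-1} < 1$ the conditional expectation $\E{F_{k-1} \mid X_{k-2}}$ can dip below $1$; I would handle this by carrying a strengthened inductive hypothesis that tracks $T_j \geq L_j$ together with monotonicity of $T$, verifying that in the regime where the target factor $1 + p_k(\alpha_k - 1) + p_k p_{k-1}\alpha_k\alpha_{k-1}(\gamma_k - 1)$ is itself $\geq 1$ the same induction step establishes both, while in the complementary regime the desired bound is either vacuous (since $T_k \geq 0$) or follows from further use of the $R$-recursion.
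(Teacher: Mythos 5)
First, note that the paper does not actually prove this proposition: it is imported verbatim as Lemma 18 of \cite{hanin2018products}, so there is no in-paper argument to compare against. Judged on its own terms, your setup is right and your upper-bound argument is complete: the conditional expectation $\mathbb{E}[F_i\mid\mathcal F_{i-1}]=1+p_i(\alpha_i-1)+q_i(K_i-1)+p_i\alpha_i(\gamma_i-1)X_{i-1}$ is correct, the coupled recursions for $T_k$ and $R_k$ are correct, and when $\gamma_i\ge 1$ every $F_j\ge 1$ so $T$ is genuinely non-decreasing, which legitimizes the substitution $R_{k-1}\le p_{k-1}\alpha_{k-1}\gamma_{k-1}T_{k-1}$ and closes the induction for \eqref{eq:ineq_1}.

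The lower bound, however, has a genuine gap exactly where you flagged it, and the repair you sketch does not work as stated. The monotonicity $T_{k-2}\le T_{k-1}$ is simply false in the regime $\gamma_i\le 1$: take $q_i=0$, $K_i=1$, $p_0=p_1=1$, $\alpha_1=1$, $\gamma_1=1/10$; then $T_0=1$ but $T_1=\mathbb{E}[\gamma_1^{X_0X_1}]=1/10$. In that same example with $p_2=1$, $\alpha_2=1$, $\gamma_2=1/10$ one has $T_2=1/100$, which \emph{equals} the claimed lower bound $c_1c_2=(1/10)^2$, so the bound is tight and cannot be recovered by discarding terms or by appealing to $T_k\ge 0$; your ``complementary regime'' clause would have to carry the entire argument and is not carried out. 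The fix is not more bookkeeping on $T$ but a sharper comparison of $R_{k-1}$ with $T_{k-1}$ rather than with $T_{k-2}$: what the induction actually needs is $R_{k-1}\le p_{k-1}\alpha_{k-1}T_{k-1}$, since substituting this into $T_k\ge(1+p_k(\alpha_k-1))T_{k-1}+p_k\alpha_k(\gamma_k-1)R_{k-1}$ produces precisely the factor $1+p_k(\alpha_k-1)+p_kp_{k-1}\alpha_{k-1}\alpha_k(\gamma_k-1)$. This inequality does hold: writing out $p_{k-1}\alpha_{k-1}T_{k-1}-R_{k-1}=p_{k-1}\alpha_{k-1}\left[\bigl(p_{k-1}(\alpha_{k-1}-1)+q_{k-1}(K_{k-1}-1)\bigr)T_{k-2}+(p_{k-1}\alpha_{k-1}-1)(\gamma_{k-1}-1)R_{k-2}\right]$, the bracket is nonnegative when $p_{k-1}\alpha_{k-1}\le1$ for sign reasons, and when $p_{k-1}\alpha_{k-1}>1$ one bounds $R_{k-2}\le T_{k-2}$ and checks $p(\alpha-1)-(p\alpha-1)(1-\gamma)\ge 1-p\ge0$. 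You would still need to address the (admittedly degenerate) possibility that some factor $c_i$ is negative before multiplying the one-step inequalities $T_k\ge c_kT_{k-1}$ together, but this is the missing ingredient; as written, your proof of \eqref{eq:ineq_2} is incomplete.
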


\noindent We first apply Proposition \ref{P:elementary} to the estimates above for $\E{\Kw^2}$. To do this, recall that 
\[3^{\#\text{loops}(V)}~=~\prod_{i=1}^d 3^{{\bf 1}_{\set{\abs{V(i-1)}=1,\, \abs{V(i)}=2}}}.\]
Since $\abs{V(d)}=1$, we may also write
\[3^{\#\text{loops}(V)}~=~\frac{1}{3}\prod_{i=1}^d 3^{{\bf 1}_{\set{\abs{V(i-1)}=2,\, \abs{V(i)}=1}}}~=~\frac{1}{3}\prod_{i=1}^d \lr{\frac{1}{3}}^{{\bf 1}_{\set{\abs{V(i-1)}=\abs{V(i)}=1}}}3^{{\bf 1}_{\set{\abs{V(i)}=1}}}.\]
Putting this together with \eqref{E:K2-est} and noting that
\[\prod_{i=1}^d 2^{2-\abs{V(i)}}=\prod_{i=1}^d 2^{{\bf 1}_{\set{\abs{V(i)}=1}}},\]
we find that 
\[\E{\Kw^2} / \norm{x}_2^4~\simeq ~\frac{1}{n_0^2}\mathcal E_x\left[\prod_{i=1}^d \lr{\frac{\mu_4}{3}}^{{\bf 1}_{\set{\abs{V(i-1)}=\abs{V(i)}=1}}}6^{{\bf 1}_{\set{\abs{V(i)}=1}}}\right].\]
Since the contribution for each layer in the product is bounded above and below by constants, we have that $\E{\Kw^2} / \norm{x}_2^4$ is bounded below by a constant times
\begin{equation}\label{E:K2-LB}
\frac{d^2}{n_0^2}\mathcal E_x\left[\prod_{i=2}^{d-1} \lr{1\land \frac{\mu_4}{3}}^{{\bf 1}_{\set{\abs{V(i-1)}=\abs{V(i)}=1}}}6^{{\bf 1}_{\set{\abs{V(i)}=1}}}\right]
\end{equation}
and above by a constant times
\begin{equation}\label{E:K2-UB}
\frac{d^2}{n_0^2}\mathcal E_x\left[\prod_{i=2}^{d-1} \lr{1\lor \frac{\mu_4}{3}}^{{\bf 1}_{\set{\abs{V(i-1)}=\abs{V(i)}=1}}}6^{{\bf 1}_{\set{\abs{V(i)}=1}}}\right].
\end{equation}
Here, note that the initial condition given by $x$ and the terminal condition that all paths end at one neuron in the final layer are irrelevant. The expression \eqref{E:K2-LB} is there precisely $\E{Z_{d-1}/n_0^2}$ from Proposition \ref{P:elementary} where $X_i$ is the event that $\abs{V(i)}=1,$ $Y_i=\emptyset,$ $\alpha_i=6,$ $\gamma_i =1\land \frac{\mu_4}{3}\leq 1$, and $K_i=1$. Thus, since for $i=1,\ldots, d-1,$ the probability of $X_i$ is $1/n_i + O(1/n_i^2)$, we find that 
\[\E{\Kw^2} / \norm{x}_2^4~\geq~\frac{d^2}{n_0^2}\prod_{i=2}^{d-1}\lr{1+\frac{5}{n_i}+O\lr{\frac{1}{n_i^2}+\frac{1}{n_{i-1}^2}}}\geq \frac{d^2}{n_0^2}\exp\lr{5\sum_{i=2}^{d-1} \frac{1}{n_i} + O\lr{\sum_{i=2}^{d-1}\frac{1}{n_i^2}}},\]
where in the last inequality we used that $1+x\geq e^{x-x^2/2}$ for $x\geq 0.$ Since $e^{-1/n_1+1/n_d}\simeq 1,$ we conclude
\[\E{\Kw^2} / \norm{x}_2^4~\geq~\frac{d^2}{n_0^2}\exp\lr{5\beta}\lr{1 + O\lr{\beta^{-1}\sum_{i=1}^{d}\frac{1}{n_i^2}}},\qquad \beta = \sum_{i=1}^{d} \frac{1}{n_i}.\]
When combined with \eqref{E:K-1st-moment} this gives the lower bound in Proposition \ref{P:w-moments}. The matching upper bound is obtained from \eqref{E:K2-UB} in the same way using the opposite inequality from Proposition \ref{P:elementary}.\\

 To complete the proof of Proposition \ref{P:w-moments}, we prove the analogous bounds for $\E{\Dww}$ in a similar fashion. Namely, we fix $1\leq i_1<i_2\leq d$ and write
\[C(V,i_1,i_2)~=~\sum_{\ell=i_1}^{i_2-1} {\bf 1}_{A_\ell},\qquad A_\ell~:=~\left\{\substack{\abs{V(i)}=2,\, i=i_1,\ldots, \ell-1\\ \text{and }\abs{V(\ell)}=1}\right\}.\]
The set $A_{\ell}$ is the event that the first collision between layers $i_1,i_2$ occurs at layer $\ell.$ We then have
\[\mathcal E_x\left[\widehat{F}_{\ast}(V)C(V,i_1,i_2)\right]~=~\sum_{\ell=i_1}^{i_2-1}\mathcal E_x\left[\widehat{F}_{\ast}(V) {\bf 1}_{ \{A_\ell\}}\right],\]
On the event $A_\ell$, notice that $\widehat{F}_{\ast}(V)$ only depends on the layers $1\leq i \leq i_1$ and layers $\ell < i \leq d$ because the event $A_\ell$ fixes what happens in layers $i_1 < i \leq \ell$. Mimicking the estimates \eqref{E:K2-LB}, \eqref{E:K2-UB} and the application of Proposition \ref{P:elementary} and using independence, we get that:
\[\mathcal E_x\left[\widehat{F}_{\ast}(V) 1 \{A_\ell\}\right] ~\simeq~ \exp\lr{\sum_{\substack {i=1 \\ i \notin [i_1,\ell) }}^{d} \frac{1}{n_i}} \mathcal E_x\lr{ {\bf 1}_{\{ A_\ell\}}}\]
Finally, we compute:
\[\mathcal E_x\lr{ {\bf 1}_{\set{ A_\ell}}}~=~\P\lr{A_\ell}~=~ ~=~\frac{1}{n_\ell}\prod_{i=i_1}^{\ell-1}\lr{1-\frac{1}{n_i}}~\simeq~\frac{1}{n_\ell}\exp\lr{-\sum_{i=i_1}^{\ell-1}\frac{1}{n_i}},\]
Combining this we  obtain that  $\E{\Dww} / \norm{x}_2^4$ is bounded above and below by constants times
\[\frac{1}{n_0^2}\left[\sum_{\substack{i_1,i_2=1\\i_i<i_2}}^d \sum_{\ell=i_1}^{i_2-1} \frac{1}{n_\ell} e^{-5/n_\ell -6\sum_{i=i_1}^{\ell-1}\frac{1}{n_i}}\right]\exp\lr{5\sum_{i=1}^d\frac{1}{n_i}}\lr{1+O\lr{\sum_{i=1}^d \frac{1}{n_i^2}}}.\]
This completes the proof of Proposition \ref{P:w-moments}, modulo the proofs of Lemmas \ref{L:paths-raw}-\ref{L:2paths}, which we supply below. \hfill $\square$

\subsection{Proof of Lemma \ref{L:paths-raw}}\label{S:paths-raw-pf} 
Fix an input $x\in \R^{n_0}$ to $\mN$. We will continue to write as in \eqref{E:N-def} $y^{(i)}$ for the vector of pre-activations as layer $i$ corresponding to $x.$ We need the following simple Lemma. 
\begin{lemma}\label{L:non-deg}
  With probability $1,$ either there exists $i$ so that $y^{(i)}=0$ or, for every $i\in [d],j\in [n_i]$ we have $y_j^{(i)}\neq 0.$
\end{lemma}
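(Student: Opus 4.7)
The plan is to proceed layer by layer, using the absolute continuity of $\mu$ to rule out the ``mixed'' scenario in which some individual pre-activation $y^{(i)}_j$ vanishes without the entire vector $y^{(i)}$ being identically zero. Define the bad events
\[ B_{i,j} \;:=\; \bigl\{\, y^{(i)}_j = 0 \ \text{ and } \ x^{(i-1)} \neq 0 \,\bigr\}, \qquad i\in [d],\ j\in [n_i]. \]
I would show $\P(B_{i,j}) = 0$ for each pair $(i,j)$ separately, and then conclude by a union bound over the finite index set.

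To bound $\P(B_{i,j})$, note first that $x^{(i-1)}$ is measurable with respect to the weights in layers $1,\ldots,i-1$ and is therefore independent of the layer-$i$ weight row $\{W^{(i)}_{j,\beta}\}_{\beta\in[n_{i-1}]}$. On $\{x^{(i-1)}\neq 0\}$ there is at least one coordinate $\beta_0$ with $x^{(i-1)}_{\beta_0}\neq 0$, and conditional on $x^{(i-1)}$ the quantity
\[ y^{(i)}_j \;=\; \sum_{\beta=1}^{n_{i-1}} \widehat{W}^{(i)}_{j,\beta}\, x^{(i-1)}_\beta \]
is a linear combination of independent $\mu$-distributed variables with at least one non-zero coefficient. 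Since $\mu$ has a density with respect to Lebesgue measure, the conditional law of $y^{(i)}_j$ given $x^{(i-1)}$ also has a density, and in particular the single point $\{0\}$ has conditional probability zero. Integrating over $x^{(i-1)}$ gives $\P(B_{i,j})=0$.

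On the almost-sure event $\Omega_0$ on which none of the $B_{i,j}$ occurs I would then read off the dichotomy of the lemma: if $x^{(i-1)}\neq 0$ for every $i\in[d]$, then the failure of every $B_{i,j}$ forces $y^{(i)}_j\neq 0$ for every $i,j$, which is the second alternative; otherwise, let $i_0$ be the smallest index with $x^{(i_0-1)}=0$, in which case $y^{(i_0)}=\widehat{W}^{(i_0)}x^{(i_0-1)}=0$ as a vector, giving the first alternative. There is no real obstacle in this argument; the only mild subtlety is to frame the correct ``mixed'' bad event $B_{i,j}$ and to invoke the independence of $x^{(i-1)}$ from the layer-$i$ weights so that the absolute continuity of $\mu$ transfers cleanly to the conditional law of $y^{(i)}_j$.
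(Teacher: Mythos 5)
Your proof is correct, but it takes a genuinely different route from the paper's. You condition on the post-activations $x^{(i-1)}$ of the previous layer and exploit the fact that the layer-$i$ weights are fresh and independent of them, so that on $\{x^{(i-1)}\neq 0\}$ the pre-activation $y^{(i)}_j$ is a linear form with at least one non-zero coefficient in variables possessing a joint density, hence avoids $0$ almost surely; the dichotomy then falls out of the observation that $x^{(i_0-1)}=0$ forces $y^{(i_0)}=0$ identically. The paper instead expands $y_j^{(i)}$ as a sum over paths in the computational graph, fixes the finite collection of possible sets of open paths $\Gamma$, and argues that for each such set the event $\sum_{\gamma\in\Gamma}\prod_\ell \widehat{W}^{(\ell)}_\gamma = 0$ is a co-dimension-one subset of weight space, hence Lebesgue-null --- a global polynomial-zero-set argument in all the weights rather than a one-layer conditioning argument. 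Your version is more elementary (it needs only that a non-degenerate linear combination of independent absolutely continuous variables is absolutely continuous) and it sidesteps the paper's slightly delicate step of producing an open path from the non-vanishing of the $y^{(\ell)}$'s; the paper's version has the advantage of introducing the sum-over-paths formalism that the rest of the argument is built on. Both hinge on the same hypothesis, namely that $\mu$ has a density with respect to Lebesgue measure.
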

\begin{proof}
  The argument is similar to Lemma 8 in  \cite{hanin2019deep}. Namely, fix $i\in [d],j\in[n_i].$ If $y^{(\ell)}\neq 0$ for every $\ell$, then there exists at least one path $\gamma$ in the computational graph of the map $x\mapsto y_j^{(i)}$ so that,  $y_\gamma^{(\ell)}>0$ for each $\ell = 1,\ldots, i-1.$ For event that $y_j^{(i)}=0$ is therefore contained in the union over all \textit{non-empty} subsets $\Gamma$ of the collection of all paths in the computational graph of $x\mapsto y_j^{(i)}$ of the event that 
\[\sum_{\gamma \in \Gamma} \prod_{\ell=1}^i \widehat{W}_{\gamma}^{(\ell)}=0.\]
For each fixed $\Gamma$ this event defines a co-dimension $1$ set in the space of all the weights. Hence, since the joint distribution of the weights has a density with respect to Lebesgue measure (see just before \eqref{E:mu-def}), the union of this (finite number) of events has measure $0$. This shows that on the even that $y^{(\ell)}\neq 0$ for every $\ell$, $y_j^{(i)}\neq 0$ with probability $1.$ Taking the union over $i,j$ completes the proof. 
\end{proof}
Lemma \ref{L:non-deg} shows that for our fixed $x$, with probability $1,$ the derivative of each $\xi_j^{(i)}$ in \eqref{E:sum-over-paths} vanishes. Hence, almost surely, for any edge $e$ in the computational graph of $\mN:$
\begin{equation}\label{E:sum-over-paths-deriv}
\frac{\partial \mN}{\partial W_{e}^{(j)}}(x)~=~\sum_{a=1}^{n_0} x_a \sum_{\substack{\gamma\in \Gamma_a^1\\e\in \gamma}} \frac{\wt(\gamma)}{W_e}.
\end{equation}
This proves the formulas for $K_{\mN}, K_{\mN}^2.$ To derive the result for $\Delta K_{\mN},$ we write
\[\Delta K_{\mN}~=~ -\lambda \sum_{\text{edges }e} \lr{\frac{\partial }{\partial W_e} K_{\mN}} \frac{\partial \mathcal L}{\partial W_e },\]
where the loss $\mathcal L$ on a single batch containing only $x$ is $\frac{1}{2}\lr{\mN(x)-\mN_*(x)}^2.$ We therefore find 
\[\Delta K_{\mN}~=~ -2\leb \sum_{\text{edges }e_1,e_2} \frac{\partial \mN}{\partial W_{e_1}}\frac{\partial^2 \mN}{\partial W_{e_1}\partial W_{e_2}}\frac{\partial \mN}{\partial W_{e_2}}\lr{\mN(x)-\mN_*(x)}.\]
Using \eqref{E:sum-over-paths-deriv} and again applying Lemma \ref{L:non-deg}, we find that with probability $1$
\[\frac{\partial^2 \mN}{\partial W_{e_1} \partial W_{e_2} }~=~  \sum_{a=1}^{n_0} x_a \sum_{\substack{\gamma\in \Gamma_a^1\\ e_1,e_2\in \gamma,\, e_1\neq e_2}} \frac{\wt(\gamma_1)\wt(\gamma_2)}{W_{e_1}W_{e_2}}.\]
Thus, almost surely
\begin{align*}
  -\frac{1}{2\leb}\Delta K_{\mN}~=&~  \sum_{a\in [n_0]^4}\prod_{k=1}^4 x_{a_k} \sum_{\substack{\Gamma_a \in \Gamma^4(\vec{n})}} \sum_{\substack{e_1\in \gamma_1,\gamma_2\\ e_2\in \gamma_2,\gamma_3\\e_1\neq e_2}} \frac{\prod_{k=1}^4 \wt(\gamma_k)}{W_{e_1}^2W_{e_2}^2}\\
  &- \mN_*(x)\sum_{a\in [n_0]^3}\prod_{k=1}^3 x_{a_k} \sum_{\substack{\Gamma \in \Gamma_a^3(\vec{n})}} \sum_{\substack{e_1\in \gamma_1,\gamma_2\\ e_2\in \gamma_2,\gamma_3\\ e_1\neq e_2}} \frac{\prod_{k=1}^4 \wt(\gamma_k)}{W_{e_1}^2W_{e_2}^2}.
\end{align*}
To complete the proof of Lemma \ref{L:paths-raw} it therefore remains to check that this last term has mean $0.$ To do this, recall that the output layer of $\mN$ is assumed to be linear and that the distribution of each weight is symmetric around $0$ (and hence has vanishing odd moments). Thus, the expectation over the weights in layer $d$ has either $1$ or $3$ weights in it and so vanishes. \hfill $\square$

\subsection{Proof of Lemma \ref{L:2k-paths-avg}}\label{S:2k-path-avg-pf}
Lemma \ref{L:2k-paths-avg} is almost a corollary of of Theorem 3 in \cite{hanin2018neural} and Proposition 2 in \cite{hanin2018products}. The difference is that, in \cite{hanin2018neural,hanin2018products}, the biases in $\mN$ were assumed to have a non-degenerate distribution, whereas here we've set them to zero. The non-degeneracy assumption is not really necessary, so we repeat here the proof from \cite{hanin2018neural} with the necessary modifications.

If $x=0,$ then $\mathcal N(x)=0$ for any configuration of weights since the network biases all vanish. Will therefore suppose that $x\neq 0.$ Let us first show \eqref{E:first-moment-2paths}. We have from Lemma \ref{L:paths-raw} that
\begin{equation}\label{E:e-val}
\E{K_{\mN}(x)}~=~\sum_{a\in [n_0]^2}x_{a_1}x_{a_2} \sum_{\substack{\Gamma\in \Gamma_{a, even}^2}} \sum_{e\in \gamma_1\cap \gamma_2} \E{\frac{\prod_{k=1}^2 \wt(\gamma_k)}{W_{e}^2}}.
\end{equation}
To compute the inner expectation, write $\mF_{j}$ for the sigma algebra generated by the weight in layers up to and including $j$. Let us also define the events:
\[S_j:=\set{x^{(j)}\neq 0},\]
where we recall from \eqref{E:N-def} that $x^{(j)}$ are the post-activations in layer $j.$ Supposing first that $e$ is not in layer $d$, the expectation becomes
\[\E{\frac{\prod_{i=1}^{d-1} \widehat{W}_{\gamma_1}^{(i)}\widehat{W}_{\gamma_2}^{(i)}{\bf 1}_{\set{y_{\gamma_1}^{(i)}>0}}{\bf 1}_{\set{y_{\gamma_2}^{(i)}>0}}}{W_{e}^2}\E{\widehat{W}_{\gamma_1}^{(d)}\widehat{W}_{\gamma_2}^{(d)}~\bigg|~\mathcal F_{d-1}}}.\]
We have
\[\E{\widehat{W}_{\gamma_1}^{(d)}\widehat{W}_{\gamma_2}^{(d)}~\bigg|~\mathcal F_{d-1}}~=~\frac{1}{n_{d-1}}{\bf 1}_{\left\{\substack{\gamma_1(d-1)=\gamma_2(d-1)\\\gamma_1(d)=\gamma_2(d)}\right\}}\]
Thus, the expectation in \eqref{E:e-val} becomes $\frac{1}{n_{d-1}}{\bf 1}_{\left\{\substack{\gamma_1(d-1)=\gamma_2(d-1)\\\gamma_1(d)=\gamma_2(d)}\right\}}$ times
\[ \E{\frac{\prod_{k=1}^2\prod_{i=1}^{d-2} \widehat{W}_{\gamma_k}^{(i)}{\bf 1}_{\set{y_{\gamma_k}^{(i)}>0}}}{W_{e}^2}\E{\prod_{k=1}^2\widehat{W}_{\gamma_k}^{(d-1)}{\bf 1}_{\set{y_{\gamma_k}^{(d-1)}>0}}~\bigg|~\mathcal F_{d-2}}}.\]
Note that given $\mathcal F_{d-2},$ the pre-activations $y_j^{(d-1)}$ of different neurons in layer $d-1$ are independent. Hence, 
\[
\E{\prod_{k=1}^2\widehat{W}_{\gamma_k}^{(d-1)}{\bf 1}_{\set{y_{\gamma_k}^{(d-1)}>0}}~\bigg|~\mathcal F_{d-2}}~=~\begin{cases}
\prod_{k=1}^2  \E{\widehat{W}_{\gamma_k}^{(d-1)}{\bf 1}_{\set{y_{\gamma_k}^{(d-1)}>0}}~\bigg|~\mathcal F_{d-2}}, &~ \gamma_1(d-1)\neq \gamma_2(d-1)\\
\E{{\bf 1}_{\set{y_{\gamma_1}^{(d-1)}>0}}\prod_{k=1}^2\widehat{W}_{\gamma_k}^{(d-1)}~\bigg|~\mathcal F_{d-2}},&~ \gamma_1(d-1)=\gamma_2(d-1)
\end{cases}.\]
Recall that by assumption, the weight matrix $\widehat{W}^{(d-1)}$ in layer $d-1$ is equal in distribution to $-\widehat{W}^{(d-1)}.$ This replacement leaves the product $\prod_{k=1}^2\widehat{W}_{\gamma_k}^{(d-1)}$ unchanged but changes ${\bf 1}_{\set{y_{\gamma_1}^{(d-1)}>0}}$ to ${\bf 1}_{\set{y_{\gamma_1}^{(d-1)}\leq 0}}$. On the event $S_{d-1}$ (which occurs whenever $y_{\gamma_k}^{(d-2)}>0$) we have that $y_{\gamma_1}^{(d-1)}\neq 0$ with probability $1$ since we assumed that the distribution of each weight has a density relative to Lebesgue measure. Hence, symmetrizing over $\pm\widehat{W}^{(d)}$, we find that
\[\E{\prod_{k=1}^2\widehat{W}_{\gamma_k}^{(d-1)}{\bf 1}_{\set{y_{\gamma_k}^{(d-1)}>0}}~\bigg|~\mathcal F_{d-2}}~=~ \frac{1}{n_{d-2}}{\bf 1}_{\left\{\substack{\gamma_1(d-1)=\gamma_2(d-1)\\ \gamma_1(d-2)=\gamma_2(d-2)}\right\}}.\]
Similarly, if $e$ is in layer $i,$ then we automatically find that $\gamma_1(i-1)=\gamma_2(i-1)$ and $ \gamma_1(i)=\gamma_2(i)$, giving an expectation of $1/n_{i-1}{\bf 1}_{\left\{\substack{\gamma_1(i)=\gamma_2(i)\\ \gamma_1(i-1)=\gamma_2(i-1)}\right\}}$. Proceeding in this way yields
\begin{align*}
\E{K_{\mN}(x)}~&=~\sum_{a\in [n_0]^2}x_{a_1}x_{a_2} \prod_{i=1}^d\frac{1}{n_{i-1}}\sum_{\substack{\Gamma\in \Gamma_{a, even}^2(\vec{n}}}\sum_{e\in \gamma_1\cap \gamma_2} \delta_{\gamma_1=\gamma_2}\\
&=~\sum_{a\in [n_0]^2}x_{a_1}x_{a_2} \sum_{\substack{\Gamma\in \Gamma_{a, even}^2(\vec{n})}} \delta_{\gamma_1=\gamma_2} \prod_{i=1}^d\frac{1}{n_{i-1}},
\end{align*}
which is precisely \eqref{E:first-moment-2paths}. The proofs of \eqref{E:second-moment-4paths} and \eqref{E:SGD-4paths} are similar. We have
\[\E{K_{\mN}(x)^2}~=~\sum_{a\in [n_0]^4}^{n_0}\prod_{k=1}^4x_{a_k} \sum_{\substack{\Gamma\in \Gamma_a^4}} \sum_{\substack{e_1\in \gamma_1,\gamma_2\\ e_2\in \gamma_3,\gamma_4}} \E{\frac{ \prod_{k=1}^4\wt(\gamma_k)}{W_{e_1}^2W_{e_2}^2}}.\]
As before let us first assume that edges $e_1,e_2$ are not in layer $d$. Then, 
\[\E{\prod_{k=1}^4 \wt(\gamma_k)}~=~\E{ \prod_{k=1}^4\prod_{i=1}^{d-1}\widehat{W}_{\gamma_k}^{(i)}{\bf 1}_{\set{y_{\gamma_k}^{(i)}>0}}\E{\prod_{k=1}^4\widehat{W}_{\gamma_k}^{(d)}~\bigg|~\mathcal F_{d-1}}}.\]
The the inner expectation is
\[{\bf 1}_{\left\{\substack{\text{each weight appears an}\\\text{even number of times}}\right\}}\cdot \frac{1}{n_{d-1}^2} \mu_4^{{\bf 1}_{\set{\abs{\Gamma(d-1)}=\abs{\Gamma(d)}=1}}}.\]
In contrast, if $d=\ell(e_1)$ or $d=\ell(e_2)$, then the inner expectation is
\[
{\bf 1}_{\left\{\substack{\text{each weight appears an}\\\text{even number of times}}\right\}}\frac{1}{n_{d-1}^2}.
\]
Again symmetrizing with respect to $\pm\widehat{W}^{(d)}$ and using that the pre-activation of different neurons are independent given the activations in the previous layer we find that, on the event $\set{y_{\gamma_k}^{(d-2)}>0}$, 
\[\E{\prod_{k=1}^4\widehat{W}_{\gamma_k}^{(d-1)}{\bf 1}_{\set{y_{\gamma_k}^{(d-1)}>0}}~\bigg|~\mathcal F_{d-2}}~=~{\bf 1}_{\left\{\substack{\text{each weight appears an}\\\text{even number of times}}\right\}} \frac{2^{2-\abs{\Gamma(d-1)}}}{n_{d-1}^2} \mu_4^{{\bf 1}_{L}},\]
where $L$ is the event that $\abs{\Gamma(d-1)}=\abs{\Gamma(d)}=1$ and $e_1,e_2$ are not in layer $d-1$. Proceeding in this way one layer at a time completes the proofs of \eqref{E:second-moment-4paths} and \eqref{E:SGD-4paths}. \hfill $\square$

\subsection{Proof of Lemma \ref{L:Jac-1}}\label{S:Jac-1-pf}
Fix $j=1,\ldots, 4$, edges $e_1,e_2$ with $\ell(e_1)\leq \ell(e_2)$ in the computational graph of $\mN$ and $E\in \Sigma_{a_j, even}^4$. The key idea is to decompose $E$ into loops. To do this, define 
\[i_0 =-1,\qquad i_k(E)~:=~\min\set{i>i_{i-1}~|~\abs{L(E(i))}=1,\, \abs{R(E(i))}=2},\, k~\geq~ 1,\ldots, \#\text{loops}(E).\]
For each $i=1,\ldots, d$ there exists unique $k=1,\ldots, \#\text{loops}(E)$ so that
\[i_{k-1}(E)~\leq~i~<~i_k(E).\]
We will say that two layers $i,j=1,\ldots, d$ belong to the same loop of $E$ if exists $k=1,\ldots, \#\text{loops}(E)$ so that
\[i_{k-1}(E)~\leq~i,j~<~i_k(E).\]


We proceed layer by layer to count the number of $\Gamma\in \Gamma_{a_j, even}^4$ satisfying $\Gamma(0)=a_j$ and $E^\Gamma=E.$ To do this, suppose we are given $\Gamma(i-1)\in [n_{i-1}]^4$ and we have $L(E(i))=2$. Then $\Gamma(i-1)$ is some permutation of $(\alpha_1,\alpha_1, \alpha_2,\alpha_2)$ with $\alpha_1\neq \alpha_2.$ Moreover, for $j=1,2$ there is a unique edge (with multiplicity $2$) in $E(i)$ whose left endpoint is $\alpha_j.$ Therefore, $\Gamma(i-1)$ determines $\Gamma(i)$ when $L(E(i))=2.$ In contrast, suppose $L(E(i))=1.$ If $R(E(i))=1,$ then $E(i)$ consists of a single edge with multiplicity $4,$ which again determines $\Gamma(i-1),\Gamma(i)$. In short, $\Gamma(i)$ determines $\Gamma(j)$ for all $j$ belonging to the same loop of $E$ as $i.$ Therefore, the initial condition $\Gamma(0)=a_j$ determines $\Gamma(i)$ for all $i\leq i_1$ and the conditions $e_1\in \gamma_1,e_2\in \gamma_2$ determine $\Gamma$ in the loops of $E$ containing the layers of $e_1,e_2.$ 

Finally, suppose $L(E(i))=1$ and $R(E(i))=2$ (i.e. $i=i_k(E)$ for some $k=1,\ldots,d$) and that $e_1,e_2$ are not contained in the same loop of $E$ layer $i$. Then all $\binom{4}{2}=6$ choices of $\Gamma(i)$  satisfy $\Gamma(i)=R(E(i))$, accounting for the factor of $6^{\#\text{loops}(E)}$. The concludes the proof in the case $j=1.$ the only difference in the cases $j=2,3,4$ is that if $\gamma_1(0)\neq \gamma_2(0)$ (and hence $\gamma_3(0)\neq \gamma_4(0)$), then since $\ell(e_1)\leq \ell(e_2)$ in order to satisfy $e_1\in \gamma_1,\gamma_2$ we must have that $i_1(E)<\ell(e_1).$ \hfill $\square$

\subsection{Proof of Lemma \ref{L:2paths}}\label{S:2path-pf} The proof of Lemma \ref{L:2paths} is essentially identical to the proof of Lemma \ref{L:Jac-1}. In fact it is slightly simpler since there are no distinguished edges $e_1,e_2$ to consider. We omit the details. \hfill $\square$

\section{Proof of Proposition \ref{P:b-moments}}\label{S:b-moments-pf}
In this section, we seek to estimate $\E{\Kb},\E{\Kb^2},\E{\Dbb}$. The approach is essentially identical to but somewhat simpler than our proof of Proposition \ref{P:w-moments} in \S \ref{S:w-moments-pf}. We will therefore focus here on explaining the salient differences. Our starting point is the following analog of Lemma \ref{L:paths-raw}, which gives a sum-over-paths expression for the bias contribution $\Kb$ to the neural tangent kernel. To state it, let us define, for any collection $Z=\lr{z_1,\ldots,z_k}\in Z^k$ of $k$ neurons in $\mN$
\[{\bf 1}_{\set{y_Z>0}} ~:=~\prod_{j=1}^k{\bf 1}_{\set{y_{z_j}>0}},\]
to be the event that the pre-activations of the neurons $z_k$ are positive.
\begin{lemma}[$\Kb$ as a sum over paths]\label{L:bias-paths-raw}
With probability $1$, 
  \begin{equation}\label{E:bias-paths}
\Kb~=~\sum_{Z\in Z^1}  ~{\bf 1}_{\set{Z>0}}\sum_{\Gamma\in \Gamma_{(Z,Z)}^2} \prod_{k=1}^2\wt(\gamma_k),
\end{equation}
where $Z^1, \, \Gamma_{(Z,Z)}^2,\, \wt(\gamma)$ are defined in \S \ref{S:notation}. Further, almost surely,
  \begin{equation}\label{E:delta-b-paths}
\Dbb~=~0.
\end{equation}
\end{lemma}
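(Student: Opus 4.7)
The plan is to derive a sum-over-paths expression for $\partial \mN / \partial b^{(\ell)}_\beta$ analogous to \eqref{E:sum-over-paths-deriv}, square and sum over biases to obtain \eqref{E:bias-paths}, and then use piecewise linearity in the biases to show $\Dbb = 0$ almost surely.

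For the first statement, I would expand $\partial \mN / \partial b^{(\ell)}_\beta$ by chain rule. Adding a small perturbation $b^{(\ell)}_\beta$ shifts $y^{(\ell)}_\beta$ to $y^{(\ell)}_\beta + b^{(\ell)}_\beta$, so $\partial x^{(\ell)}_\beta / \partial b^{(\ell)}_\beta = {\bf 1}_{y^{(\ell)}_\beta > 0}$ except on the event $\{y^{(\ell)}_\beta = 0\}$, which has probability zero by Lemma \ref{L:non-deg}. The perturbation then propagates through the subsequent layers exactly as a signal emanating from $z(\ell,\beta)$; differentiating through the remaining ReLUs introduces no extra terms, again by Lemma \ref{L:non-deg}. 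Iterating the chain rule gives, with probability one,
\[
\frac{\partial \mN}{\partial b^{(\ell)}_\beta}(x) ~=~ \sum_{\gamma \in \Gamma^1_{z(\ell,\beta)}} \wt(\gamma).
\]
Squaring this and summing over $\ell \in [d]$ and $\beta \in [n_\ell]$ (recalling from \eqref{E:K-def} that biases at every layer including the output are retained as trainable parameters) yields a double sum over ordered pairs of paths $(\gamma_1,\gamma_2) \in \Gamma^2_{(Z,Z)}$ starting at the common neuron $Z = z(\ell,\beta)$. Each factor $\wt(\gamma_k)$ contains the same indicator $\xi^{(\ell)}_\gamma(x) = {\bf 1}_{y_Z > 0}$ at its starting neuron, so this factor can be pulled out in front of the inner sum, giving \eqref{E:bias-paths}.

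For \eqref{E:delta-b-paths}, the key observation is that $\mN(x)$ is a piecewise-affine function of the entire bias vector $b=(b^{(i)}_j)$ when the weights and the input $x$ are held fixed. By Lemma \ref{L:non-deg}, at $b=0$ every pre-activation $y^{(i)}_j$ is nonzero with probability one. Thus there is an open neighborhood $U$ of $b=0$ on which no pre-activation changes sign, so on $U$ every ReLU activation pattern $\xi^{(i)}_j$ is constant. A straightforward induction on $i$ using $y^{(i)} = \widehat{W}^{(i)} x^{(i-1)} + b^{(i)}$ and $x^{(i)}_j = \xi^{(i)}_j y^{(i)}_j$ then shows that each $y^{(i)}_j$ and $x^{(i)}_j$, and hence $\mN$ itself, is affine in $b$ throughout $U$. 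Consequently every second partial $\partial^2 \mN / (\partial b \, \partial b')$ vanishes almost surely, forcing $\Dbb = 0$.

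The only delicate point is the differentiation-through-ReLU in both parts, which is exactly the technical issue handled by Lemma \ref{L:non-deg}; with that lemma in hand, both statements are essentially direct computations, and I do not expect any genuinely new obstacle.
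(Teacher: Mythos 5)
Your proof is correct and is essentially the argument the paper has in mind: the paper omits the proof of this lemma, stating only that it is a small modification of Lemma \ref{L:paths-raw}, and your derivation of $\partial\mN/\partial b^{(\ell)}_\beta=\sum_{\gamma\in\Gamma^1_{z(\ell,\beta)}}\wt(\gamma)$ via the chain rule and Lemma \ref{L:non-deg} is exactly that modification. Your local-affineness-in-$b$ argument for $\Dbb=0$ is an equivalent (and arguably cleaner) packaging of the paper's implicit route, namely that differentiating the path expansion a second time in the biases only hits the activation indicators, whose derivatives vanish almost surely.
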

The proof of this result is a small modification of the proof of Lemma \ref{L:paths-raw} and hence is omitted. Taking expectations, we therefore obtain the following analog to Lemma \ref{L:2k-paths-avg}.
\begin{lemma}[Expectation of $\Kb, \Kb^2$ as a sum over paths]\label{L:K-bias-moments}
  We have
  \begin{equation}\label{E:first-moment-bias}
\E{\Kb}~=~\frac{1}{2}\sum_{\substack{Z\in Z^1}}  \sum_{\substack{\Gamma\in \Gamma_{(Z,Z), even}^2\\ \Gamma = \lr{\gamma_1,\gamma_2}}} H(\Gamma),\qquad H(\Gamma)~=~{\bf 1}_{\set{\gamma_1=\gamma_2}} \prod_{i=\ell(Z)+1}^d\frac{1}{n_{i-1}}.
\end{equation}
Moreover, 
\[\E{\Kb^2}~=~\frac{1}{2}\sum_{\substack{Z=(z_1,z_2)\in Z^2\\\ell(z_1)\leq \ell(z_2)}}  \sum_{\Gamma\in \Gamma_{(Z,Z), even}^{4}} \widehat{H}(\Gamma),\]
where for $\Gamma=(\gamma_1,\ldots, \gamma_4)\in  \Gamma_{(Z,Z),even}^{4}$ we have
\begin{equation}\label{E:H-def}
\widehat{H}(\Gamma)=\prod_{i'=\ell(z_1)+1}^{ \ell(z_2)} {\bf 1}_{\left\{\substack{\gamma_1(i')=\gamma_2(i')\\\gamma_1(i'-1)=\gamma_2(i'-1)}\right\}} \frac{1}{n_{i'-1}} \prod_{i=\ell(z_2)+1}^{d} \frac{2^{\abs{\Gamma(i)}-2}}{n_{i-1}^2} \mu_4^{{\bf 1}_{\set{\abs{\Gamma(i)}=\abs{\Gamma(i-1)}}}}.
\end{equation}
\end{lemma}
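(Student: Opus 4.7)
The approach parallels the proof of Lemma \ref{L:2k-paths-avg}: starting from the sum-over-paths identity in Lemma \ref{L:bias-paths-raw}, I would take expectations and evaluate them layer by layer via the tower property, conditioning on the sigma-algebra $\mF_i$ generated by the weights in layers $1,\ldots,i$. The only structural change is that paths in the bias-derivative expansion originate at internal neurons (those associated with the biases being differentiated) rather than at the input layer, so the conditioning runs only from layer $d$ down to layer $\ell(Z)+1$.

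For $\E{\Kb}$, I would fix $Z\in Z^1$ at layer $\ell=\ell(Z)$, take the expectation in \eqref{E:bias-paths}, and peel off one layer at a time from the top. At each intermediate layer $i\in\{\ell+1,\ldots,d\}$ the conditional expectation of $\widehat W_{\gamma_1}^{(i)}\widehat W_{\gamma_2}^{(i)}$ with respect to $\mF_{i-1}$ vanishes unless $\gamma_1$ and $\gamma_2$ traverse the same edge between layers $i-1$ and $i$, in which case it equals $1/n_{i-1}$. The indicators ${\bf 1}_{\{y_{\gamma_k}^{(i)}>0\}}$ are handled by the $\pm$-symmetrization trick from \S\ref{S:2k-path-avg-pf}: replacing the layer-$i$ weight matrix by its negative is law-preserving (by the symmetry of $\mu$), leaves the even-degree weight products invariant, and swaps ${\bf 1}_{\{y>0\}}$ with ${\bf 1}_{\{y\le 0\}}$. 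Iterating this layer-by-layer reduction forces $\gamma_1=\gamma_2$ and yields $\prod_{i=\ell+1}^d 1/n_{i-1}$. The prefactor $1/2$ in \eqref{E:first-moment-bias} comes from applying the same symmetrization to the base indicator ${\bf 1}_{\{y_Z>0\}}$, which by Lemma \ref{L:non-deg} averages to $1/2$ almost surely.

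For $\E{\Kb^2}$, the starting point is the squared form of \eqref{E:bias-paths}, which gives four paths grouped into pairs originating at $z_1$ and $z_2$; restricting to ordered pairs with $\ell(z_1)\le\ell(z_2)$ absorbs a factor of $2$ into the $1/2$ prefactor. The iterative conditioning now splits into two regimes. In the upper regime $\ell(z_2)<i\le d$, all four paths are present and the calculation reproduces the one used for \eqref{E:second-moment-4paths}: each layer contributes $2^{|\Gamma(i)|-2}/n_{i-1}^2$, with an additional $\mu_4$ factor exactly when $|\Gamma(i-1)|=|\Gamma(i)|=1$, i.e.\ when all four paths share a single edge. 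In the lower regime $\ell(z_1)<i\le\ell(z_2)$ only $\gamma_1,\gamma_2$ are active, so the first-moment analysis applies, giving $1/n_{i-1}$ per layer together with the constraint $\gamma_1(i)=\gamma_2(i)$. Multiplying the contributions from the two regimes reproduces \eqref{E:H-def}.

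The main technical point to verify is the transition at layer $\ell(z_2)$, where the number of active paths jumps from two to four. One has to check that the $\pm$-symmetrization applied to the newly introduced indicator ${\bf 1}_{\{y_{z_2}^{(\ell(z_2))}>0\}}$ composes correctly with the four-path weight moment just above, using Lemma \ref{L:non-deg} to disregard the measure-zero event that pre-activations vanish. Once this layer-crossing bookkeeping is in place, the argument is a mechanical adaptation of the techniques already developed in the proof of Lemma \ref{L:2k-paths-avg}.
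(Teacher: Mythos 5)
Your proposal is correct and follows essentially the same route as the paper, which in fact omits this proof on the grounds that it is the layer-by-layer conditioning and $\pm\widehat{W}$ symmetrization argument of Lemma \ref{L:2k-paths-avg} rerun with paths launched from internal neurons; your two-regime split at $\ell(z_2)$ and the treatment of the base indicators ${\bf 1}_{\set{y_{z_j}>0}}$ (each contributing a factor $1/2$ via Lemma \ref{L:non-deg} and the symmetry of $\mu$) are exactly the intended bookkeeping. You also correctly read the $\mu_4$ exponent as the indicator of $\abs{\Gamma(i-1)}=\abs{\Gamma(i)}=1$, matching the four-path computation of Lemma \ref{L:2k-paths-avg}.
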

\noindent The proof is identical to the argument used in \S \ref{S:2k-path-avg-pf} to establish Lemma \ref{L:2k-paths-avg}, so we omit the details. The relation \eqref{E:first-moment-bias} is easy to simplify:
\[\E{\Kb}~=~\frac{1}{2}\sum_{Z\in Z^1}  \sum_{\gamma\in \Gamma_{Z}}\prod_{i=\ell(Z)+1}^d\frac{1}{n_{i-1}} = \frac{1}{2}\sum_{z\in Z^1}  \frac{1}{n_{\ell(z)}}~=~\frac{d}{2},\]
where we used that the number paths from a neuron in layer $\ell$ to the output of $\mN$ equals $\prod_{i=\ell+1}^d n_i$. This proves the first statement in Proposition \ref{P:b-moments}. Next, let us explain how to simplify $\E{\Kb^2}.$ The key computation is the following
\begin{lemma}\label{L:bias-bias-avg}
Fix two neurons $z_1,z_2$ with $\ell(z_1)\leq \ell(z_2)$ and write $Z=(z_1,z_1,z_2,z_2).$ Then, 
\begin{equation}\label{E:H-hat-def}
\sum_{\Gamma\in \Gamma_{Z,even}^{4}} \widehat{H}(\Gamma)~\simeq~\frac{1}{n_{\ell(z_1)}n_{\ell(z_2)}}\exp\lr{5\sum_{i=\ell(z_2)+1}^d\frac{1}{n_i}}\lr{1+O\lr{\sum_{i=\ell(z_2)+1}^d \frac{1}{n_i^2}}}. 
\end{equation}
\end{lemma}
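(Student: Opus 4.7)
\textbf{Proof proposal for Lemma \ref{L:bias-bias-avg}.} The plan is to split the sum over $\Gamma \in \Gamma_{Z,even}^4$ according to the two segments of the computation graph singled out by the two products in the definition of $\widehat H(\Gamma)$: the ``shared'' segment running from layer $\ell(z_1)$ up to layer $\ell(z_2)$, and the ``four-path'' segment from layer $\ell(z_2)$ to the output. In the first segment, only the paths $\gamma_1, \gamma_2$ have been born, so the indicator in \eqref{E:H-def} collapses them onto a single common path from $z_1$ to some intermediate neuron $z' = \gamma_1(\ell(z_2)) \in [n_{\ell(z_2)}]$. In the second segment, all four paths coexist, and the functional $\widehat H$ restricted to layers $\ell(z_2)+1, \ldots, d$ has exactly the form of the weight functional $F_*(\Gamma)$ analyzed in \S \ref{S:w-moments-pf}. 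Our strategy is to handle the two segments independently and then combine.

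First I would perform the $\gamma_1 = \gamma_2$ reduction on the initial segment. A path from $z_1$ to a specified $z'$ at layer $\ell(z_2)$ is determined by its neurons in each intermediate layer, giving $\prod_{i=\ell(z_1)+1}^{\ell(z_2)-1} n_i$ choices. Multiplying by the normalizing product $\prod_{i'=\ell(z_1)+1}^{\ell(z_2)} \frac{1}{n_{i'-1}}$ yields an overall prefactor of $\frac{1}{n_{\ell(z_1)}}$, independent of $z'$. At this stage we are left with
\[
\sum_{\Gamma \in \Gamma_{Z,even}^4} \widehat H(\Gamma) ~=~ \frac{1}{n_{\ell(z_1)}} \sum_{z' \in [n_{\ell(z_2)}]} \sum_{\Gamma' \in \Gamma_{Z', even}^4} F_*^{(\ell(z_2))}(\Gamma'),
\]
where $Z' = (z', z', z_2, z_2)$ lives at layer $\ell(z_2)$, $\Gamma'$ ranges over quadruples of paths from layer $\ell(z_2)$ to the output, and $F_*^{(\ell(z_2))}$ is the obvious truncation of the functional $F_*$ in \eqref{E:F-star-def}.

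Next I would run the same path-doubling and two-walk reduction used in steps (3)--(5) of the outline in \S \ref{S:idea}: convert the sum over even four-path multisets $\Gamma' \in \Gamma_{Z',even}^4$ into a sum over pairs $V = (v_1, v_2)$ of two paths starting at $z'$ and $z_2$ respectively (via the analog of Lemma \ref{L:2paths} applied on the truncated network), absorb the $3^{\#\text{loops}(V)}$ factor, and rewrite the result as $\frac{1}{n_{\ell(z_2)}^2} \prod_{i > \ell(z_2)} n_i^2 \cdot \mathcal E[\widehat F_*(V)]$ where $V$ is sampled uniformly over such pairs. The $\prod n_i^2$ cancels against the $\prod n_{i-1}^{-2}$ in $F_*$, leaving an honest expectation over two independent random walks. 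Applying Proposition \ref{P:elementary} with $X_i = \mathbf{1}_{\{|V(i)|=1\}}$, $\alpha_i = 6$, $\gamma_i \in [1 \wedge \mu_4/3, 1 \vee \mu_4/3]$ exactly as in \eqref{E:K2-LB}--\eqref{E:K2-UB} gives
\[
\sum_{\Gamma' \in \Gamma_{Z',even}^4} F_*^{(\ell(z_2))}(\Gamma') ~\simeq~ \frac{1}{n_{\ell(z_2)}^2} \exp\!\Bigl(5 \sum_{i = \ell(z_2)+1}^d \frac{1}{n_i}\Bigr) \Bigl(1 + O\bigl(\sum_{i = \ell(z_2)+1}^d \tfrac{1}{n_i^2}\bigr)\Bigr).
\]
Summing this over $z' \in [n_{\ell(z_2)}]$ contributes a factor $n_{\ell(z_2)}$, which combined with the $\frac{1}{n_{\ell(z_1)}}$ from the first segment produces the claimed $\frac{1}{n_{\ell(z_1)} n_{\ell(z_2)}}$.

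The main obstacle I anticipate is bookkeeping the initial configuration at layer $\ell(z_2)$: when $z' = z_2$ the two walks $v_1, v_2$ start together ($|V(\ell(z_2))| = 1$), otherwise they start apart ($|V(\ell(z_2))| = 2$), and this affects both the loop count and the entrance factor $2^{2-|\Gamma'(\ell(z_2))|}$. One has to check that summing over $z' \in [n_{\ell(z_2)}]$ gives an $O(1)$ fraction of configurations with $z' = z_2$ and $O(1)$ with $z' \neq z_2$ after normalization by $n_{\ell(z_2)}$, so that the contribution of the starting condition is uniformly bounded above and below by constants and can be absorbed into the $\simeq$. Once this boundary effect is controlled, the exponential asymptotics follow verbatim from the argument already carried out for $\E{\Kw^2}$, so no new analytic input is required.
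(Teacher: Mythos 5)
Your proposal is correct and follows essentially the same route as the paper: collapse the $\gamma_1=\gamma_2$ segment to extract the $1/n_{\ell(z_1)}$ prefactor, condition on the common neuron at layer $\ell(z_2)$, and reduce the remaining four-path sum to the $\E{\Kw^2}$ machinery (the $\Gamma\to E\to V$ changes of variables and Proposition \ref{P:elementary}), yielding $\frac{1}{n_{\ell(z_2)}^2}\exp(5\sum_{i>\ell(z_2)}1/n_i)$ per starting neuron and hence the claimed bound after summing the $n_{\ell(z_2)}$ choices. The boundary issue you flag at layer $\ell(z_2)$ is real but, as you say, contributes only a bounded multiplicative factor and is absorbed into the $\simeq$, exactly as in the paper's treatment.
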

\begin{proof}
The proof of Lemma \ref{L:bias-bias-avg} is a simplified version of the computation of $\E{\Kw^2}$ (starting around \eqref{E:F-star-def} and ending at the end of the proof of Proposition \ref{P:w-moments}). Specifically, note that for $\Gamma=\lr{\gamma_1,\ldots, \gamma_4}\in \Gamma_{Z,even}^{4}$ with $\ell(z_1)\leq \ell(z_2),$ the delta functions ${\bf 1}_{\left\{\substack{\gamma_1(i')=\gamma_2(i')}\right\}}{\bf 1}_{\left\{\substack{\gamma_1(i'-1)=\gamma_2(i'-1)}\right\}}$ in the definition \eqref{E:H-def} of $\widehat{H}(\Gamma)$ ensures that $\gamma_1,\gamma_2$ go through the same neuron in layer $\ell(z_2)$. To condition on the index of this neuron, we recall that we denote by $z(j,\beta)$ neuron number $\beta$ in layer $j.$  We have
\begin{align}
\notag \sum_{\Gamma\in \Gamma_{Z,even}^{4}}  \widehat{H}(\Gamma)~&=~\sum_{\beta=1}^{n_{\ell(z_2)}}\sum_{\substack{\gamma_1,\gamma_2:z_1\gives z(\ell(z_2),\beta)}} \prod_{i'=\ell(z_1)+1}^{\ell(z_2)} {\bf 1}_{\left\{\substack{\gamma_1(i')=\gamma_2(i')\\ \gamma_1(i'-1)=\gamma_2(i'-1)}\right\}} \frac{1}{n_{i'-1}}  \sum_{\Gamma\in \Gamma_{Z',even}^{4}} H(\Gamma)\\
&=~\frac{1}{n_{\ell(z_1)-1}} \sum_{\beta=1}^{n_{\ell(z_2)}}\sum_{\Gamma\in \Gamma_{Z',even}^{4}}H(\Gamma),\label{E:bias-cond}
\end{align}
where $Z'=(z(\ell(z_2), \beta), z(\ell(z_2), \beta), z_2, z_2)$ and
\[H(\Gamma)~=~\prod_{i=\ell(z_2)+1}^{d} \frac{2^{\abs{\Gamma(i)}-2}}{n_{i-1}^2} \mu_4^{{\bf 1}_{\set{\abs{\Gamma(i)}=\abs{\Gamma(i-1)}}}}.\]
Since the inner sum in \eqref{E:bias-cond} is independent of $\beta$ by symmetry, we find
\begin{align}
\sum_{\Gamma\in \Gamma_{Z,even}^{4}}  \widehat{H}(\Gamma)~&=~\frac{n_{\ell(z_2)}}{n_{\ell(z_1)-1}} \sum_{\Gamma\in \Gamma_{Z'',even}^{4}}H(\Gamma),\label{E:bias-cond2}
\end{align}
where $Z''=(1,1, z_2, z_2)$. The inner sum in \eqref{E:bias-cond2} is now precisely one of the terms $I_j$ from \eqref{E:I-def} without counting terms involving edges $e_1,e_2$, except that the paths start at neuron $1$ in layer $\ell(z_2)$. The changes of variables from $\Gamma\in \Gamma_{even}^4$ to $E\in \Sigma_{even}^4$ to $V\in \Gamma^2$ that we used to estimate the $I_j$'s are  no far simpler. In particular, Lemma \ref{L:Jac-1} still holds but without any of the $A(E,i_1,i_2),$ $C(E,i_1,i_2), \widehat{C}(E,i_1,i_2)$ terms. Thus, we find that 
\[\sum_{\Gamma\in \Gamma_{Z''}^{4,even}} H(\Gamma)~\simeq~\sum_{E\in \Sigma_{Z''}^{4,even}} H(E)6^{\#\text{loops}(E)}~\simeq~\sum_{V\in \Gamma_{Z'''}^{2}} H(V)3^{\#\text{loops}(V)},\]
where for the second estimate we applied Lemma \ref{L:2paths} and have written $Z''' = (1,z_2)$. Thus, as in the derivation of \eqref{E:K2-est}, we find that 
\[\sum_{\Gamma\in \Gamma_{Z''}^{4,even}} H(\Gamma)~\simeq~\frac{1}{n_{\ell(z_2)}^2} \mathcal E\left[H_*(V)\right],\]
where
\[H_*(V)~=~2^{\#\set{i\in [d]~|~\abs{V(i)}=1}}3^{\#\text{loops}(V)}\mu_4^{\#\set{i\in [d]~|~\abs{V(i-1)}=\abs{V(i)}=1}}\]
and $\mathcal E$ is the expectation over pairs of paths starting from neurons $1, z_2$ in layer $\ell(z_2)$ to the output of the network for which neurons in subsequent layers are chosen independently and uniformly among all neurons in that layer. This is precisely the expectation we evaluated in the end of the proof for Proposition \ref{P:w-moments}. Thus, applying Proposition \ref{P:elementary} exactly as in that case, we find that
\[\sum_{\Gamma\in \Gamma_{Z}^{4,even}} F(\Gamma)~\simeq~ \frac{1}{n_{\ell(z_2)}^2}\exp\lr{5\sum_{i=\ell(z_2)+1}^d \frac{1}{n_i}+O\lr{\sum_{i=\ell(z_2)+1}^d\frac{1}{n_i^2}}}.\]
Putting this together with \eqref{E:bias-cond} completes the proof of Lemma \ref{L:bias-bias-avg}. 
\end{proof}
\noindent Lemma \ref{L:bias-bias-avg} combined with Lemma \ref{L:K-bias-moments} yields 
\[\E{\Kb^2}~\simeq~\sum_{\substack{i,j=1\\i\leq j}}^d \exp\lr{5\sum_{i=j+1}^d \frac{1}{n_i}}\lr{1+O\lr{\sum_{i=1}^d\frac{1}{n_i^2}}},\]
as claimed in the statement Proposition \ref{P:b-moments}. \hfill $\square$


\section{Proof of Proposition \ref{P:wb-moments}}\label{S:wb-moments-pf}
\noindent We begin by computing $\E{\Kb\Kw}$. We will use a hybrid of the procedures for computing $\E{\Kb^2}$ and $\E{\Kw^2}.$ Recall from Lemmas \ref{L:paths-raw} and \ref{L:bias-paths-raw} that
\[\Kb~=~\sum_{Z\in Z^1} {\bf 1}_{\set{y_Z>0}} \sum_{\Gamma\in \Gamma_{(Z,Z)}^2} \prod_{k=1}^2 \wt(\gamma_k),\qquad \Kw~=~\sum_{a\in [n_0]^2}\prod_{k=1}^2x_{a_k} \sum_{\substack{\Gamma\in \Gamma_a^2\\\Gamma = \lr{\gamma_1,\gamma_2}}}\sum_{e\in\gamma_1,\gamma_2} \frac{\prod_{k=1}^2 \wt(\gamma_k)}{W_e^2}.\]
Therefore, the expectation of the product $\E{\Kb\Kw}$ has the following form 
\[\sum_{a\in [n_0]^2}\prod_{k=1}^2 x_{a_k} \sum_{Z\in Z^1} \sum_{\Gamma\in \Gamma_{(Z,Z,a)}}^4\sum_{e\in \gamma_3,\gamma_4} \E{{\bf 1}_{\set{y_Z>0}} \frac{\prod_{k=1}^4 \wt(\gamma_k)}{W_e^2}}.\]
Here, for a neuron $Z$ and $a=(a_1,a_2)\in [n_0]^2$ we've denoted by $\Gamma_{(Z,Z,a)}^4$ the set of four tuples $(\gamma_1,\ldots, \gamma_4)$ of paths in the computational graph of $\mN$ where $\gamma_1,\gamma_2$ start from $Z$ and $\gamma_3,\gamma_4$ start at neurons $a_1,a_2$ respectively. The analog of Lemmas \ref{L:2k-paths-avg} and \ref{L:K-bias-moments} (with essentially the same proof), gives that the expectation in the previous line equals
\[\frac{\norm{x}^2}{2}\prod_{i=1}^{\ell(Z)} {\bf 1}_{\set{\gamma_3(i)=\gamma_4(i)}}\frac{1}{n_{i-1}}\prod_{i=\ell(Z)+1}^{d}\frac{2^{2-\abs{\Gamma(i)}}}{n_{i-1}^2} \mu_4^{{\bf 1}_{\set{\abs{\Gamma(i-1)}=\abs{\Gamma(i)}=1,\,\, i\neq \ell(e)}}},\]
which, up to a multiplicative constant equals
\begin{equation}\label{E:G-def}
G_Z(\Gamma)~:=~\norm{x}^2\prod_{i=1}^{\ell(Z)} {\bf 1}_{\set{\gamma_3(i)=\gamma_4(i)}}\frac{1}{n_{i-1}}\prod_{i=\ell(Z)+1}^{d}\frac{2^{2-\abs{\Gamma(i)}}}{n_{i-1}^2} \mu_4^{{\bf 1}_{\set{\abs{\Gamma(i-1)}=\abs{\Gamma(i)}=1}}},
\end{equation}
which is independent of $e.$ Thus, we find
\[\E{\Kb\Kw}~\simeq~\norm{x}^2\sum_{i=1}^d\sum_{Z\in Z^1} \sum_{\Gamma\in \Gamma_{(Z,Z,1,1),{even}}^4}G_Z(\Gamma)T_{3,4}^{i}(\Gamma),\]
where if $\Gamma=\lr{\gamma_1,\ldots, \gamma_4}$ we recall that $T_{3,4}^i(\Gamma)$ is the indicator function of the event that paths $\gamma_3,\gamma_4$ pass through the same edge in the computational graph of $\mN$ at layer $i$ (see \eqref{E:T-def}).

As before, note that the delta functions ${\bf 1}_{\set{\gamma_3(i)=\gamma_4(i)}}$ ensure that $\gamma_3,\gamma_4$ pass through the same neuron in layer $\ell(Z).$ Thus, we may condition on the common neuron through which $\gamma_3,\gamma_4$ must pass at layer $\ell(Z)$ to obtain that $\E{\Kb\Kw}$ is bounded above and below by a constant times
\begin{align*}
\norm{x}^2\sum_{i=1}^d\sum_{Z\in Z^1} \sum_{\substack{\Gamma\in \Gamma_{(Z,Z,1,1),{even}}^4}} G_Z(\Gamma)T_{3,4}^i(\Gamma)
 ~&=~ \frac{\norm{x}^2}{n_0}\sum_{i=1}^d\sum_{Z\in Z^1} \sum_{\beta=1}^{n_{\ell(Z)}} \sum_{\substack{\Gamma\in \Gamma_{Z',{even}}^4}} \widehat{G}_{Z}(\Gamma)T_{3,4}^i(\Gamma),
\end{align*}
where $Z'=(Z,Z,z(\ell(Z),\beta),z(\ell(Z),\beta))$ and we have set
\[ \widehat{G}_{Z}(\Gamma)~=~\prod_{i=\ell(Z)+1}^{d}\frac{2^{2-\abs{\Gamma(i)}}}{n_{i-1}^2} \mu_4^{{\bf 1}_{\set{\abs{\Gamma(i-1)}=\abs{\Gamma(i)}=1}}}.\]
Notice that $T_{3,4}^i=1$ if $i\leq \ell(Z).$ Moreover, for $i\geq \ell(Z)+1,$ the same argument as in the proof of Lemma \ref{L:Jac-1} shows that the number of $\Gamma\in \Gamma_{Z',even}^4$ for which $\gamma_3,\gamma_4$ pass through the same edge at layer $i$ and correspond to the same unordered multiset of edges $E$ equals
\[6^{\#\text{loops}(E)-{\bf 1}_{\set{\abs{R(E(i))}\cdot\abs{L(E(i))}\neq1}}}~\simeq~6^{\#\text{loops}(E)}.\]
As in the proof of Proposition \ref{P:w-moments}, observe that $\widehat{G}_{Z}(\Gamma)T_{3,4}^i(\Gamma)$ depends only on the unordered multiset of edges $E^\Gamma$ in $\Gamma$. 
Thus, we find that 
\[\E{\Kb\Kw}~\simeq~\frac{d\norm{x}^2}{n_0}\sum_{Z\in Z^1} n_{\ell(Z)}\sum_{E\in \Sigma_{Z',{even}}^4}G_Z(E) 6^{\#\text{loops}(E)}.\]
Applying Proposition \ref{P:elementary} as in the end of the proof of Propositions \ref{P:w-moments} and \ref{P:b-moments} we conclude
\[\sum_{E\in \Sigma_{Z',{even}}^4}G_Z(E) 6^{\#\text{loops}(E)}~=~\frac{1}{n_{\ell(Z)}^2}\exp\lr{5\sum_{i=\ell(Z)+1}^d \frac{1}{n_i}}\lr{1+O\lr{\sum_{i=1}^d \frac{1}{n_i^2}}}.\]
Hence, 
\[\E{\Kb\Kw}~\simeq~\frac{d\norm{x}^2}{n_0} \left[\sum_{j=1}^d \exp\lr{-5 \sum_{i=1}^j \frac{1}{n_i}}\right] \exp\lr{5\sum_{i=1}^d\frac{1}{n_i}}\lr{1+O\lr{\sum_{i=1}^d\frac{1}{n_i^2}}}.\]
To complete the proof of Proposition \ref{P:wb-moments} it remains to evaluate $\E{\Dwb}.$ To do this, we note that, as in the proof of Lemma \ref{L:paths-raw}, we have
\[\Dwb~=~\sum_{\substack{a\in [n_0]^2\\ a=(a_1,a_2)}}\prod_{k=1}^2x_{a_k} \sum_{Z\in Z^1}{\bf 1}_{\set{y_Z>0}}\sum_{\Gamma\in \Gamma_{(Z,Z,a_1,a_2)}^4}\sum_{e\in \gamma_2,\gamma_3} \frac{\prod_{k=1}^4 \wt(\gamma_k)}{\widehat{W}_e^2}\]
plus a term that has mean $0.$ Therefore, as in Lemma \ref{L:2k-paths-avg}, we find
\[\E{\Dwb}~\simeq ~\frac{\norm{x}_2^2}{n_0}\sum_{Z\in Z^1}\sum_{\Gamma\in \Gamma_{(Z,Z,1,1), even}^4} P(\Gamma) \#\set{\text{edges } e \text{ belonging to both } \gamma_2,\gamma_3},\]
where
\[P(\Gamma)~=~\prod_{i=1}^{\ell(Z)}\frac{1}{n_{i-1}}{\bf 1}_{\left\{\substack{\gamma_3(i-1)=\gamma_4(i-1)\\\gamma_3(i)=\gamma_4(i)}\right\}}\prod_{i=\ell(Z)+1}^d \frac{2^{2-\abs{\Gamma(i)}}}{n_{i-1}^2} \mu_4^{{\bf 1}_{\set{\abs{\Gamma(i-1)}=\abs{\Gamma(i)}=1}}}.\]
Thus, we have
\[\E{\Dwb}~\simeq ~\frac{\norm{x}_2^2}{n_0}\sum_{i=1}^d\sum_{Z\in Z^1}\sum_{E\in \Sigma_{Z', even}^4} P(E) T_{2,3}^i(E)\#\left\{\Gamma\in \Gamma_{Z', even}^4 \big| \substack{E^\Gamma = E\,\, \gamma_2(i)=\gamma_3(i)\\\gamma_2(i-1)=\gamma_3(i-1)}\right\} ,\]
where $T_{2,3}^i(E)$ is as in \eqref{E:T-def}, the sum is over unordered edge multisets $E$ (see \eqref{E:Sigma-def}), and we've set
\[Z'=(Z,Z,z(0,1),z(0,1)).\]
As in Lemma \ref{L:Jac-1}, the counting term satisfies
\[\#\left\{\Gamma\in \Gamma_{Z', even}^4 \big| \substack{E^\Gamma = E\,\, \gamma_2(i)=\gamma_3(i)\\\gamma_2(i-1)=\gamma_3(i-1)}\right\}~\simeq~ {\bf 1}_{\set{\ell(Z)<i}}C(E,\ell(Z),i)6^{\#\text{loops}(E)},\]
where $C(E, i,j)$ was defined in \eqref{E:C-def-E} and is the event that there exists a collision between layers $i,j$ (i.e. there exists $\ell=i,\ldots, j-1$ so that $\abs{R(E(\ell))}=1$). Proceeding now as in the derivation of $\E{\Dww}$ at the end of the proof of Proposition \ref{P:w-moments}, we find
\[\E{\Dwb}~\simeq~  \frac{\norm{x}_2^2}{n_0}\exp\lr{5\sum_{i=1}^d \frac{1}{n_i}}\left[\sum_{\substack{i,j=1\\j<i}}^d e^{-5\sum_{\alpha=1}^j \frac{1}{n_\alpha}}\sum_{\ell=j}^{i-1} \frac{1}{n_\ell} e^{-6\sum_{\alpha=j+1}^{\ell-1}\frac{1}{n_\alpha}}\right]\lr{1+O\lr{\sum_{i=1}^d \frac{1}{n_i^2}}}.\]
This completes the proof of Proposition \ref{P:wb-moments}.\hfill $\square$

\bibliographystyle{plain}
\bibliography{DL_bibliography}

\begin{thebibliography}{10}

\bibitem{belkin2018reconciling}
Mikhail Belkin, Daniel Hsu, Siyuan Ma, and Soumik Mandal.
\newblock Reconciling modern machine learning and the bias-variance trade-off.
\newblock {\em arXiv preprint arXiv:1812.11118}, 2018.

\bibitem{belkin2019two}
Mikhail Belkin, Daniel Hsu, and Ji~Xu.
\newblock Two models of double descent for weak features.
\newblock {\em arXiv preprint arXiv:1903.07571}, 2019.

\bibitem{bietti2019inductive}
Alberto Bietti and Julien Mairal.
\newblock On the inductive bias of neural tangent kernels.
\newblock {\em arXiv preprint arXiv:1905.12173}, 2019.

\bibitem{chizat2018note}
Lenaic Chizat and Francis Bach.
\newblock A note on lazy training in supervised differentiable programming.
\newblock {\em arXiv preprint arXiv:1812.07956}, 2018.

\bibitem{chizat2018global}
Lenaic Chizat and Francis Bach.
\newblock On the global convergence of gradient descent for over-parameterized
  models using optimal transport.
\newblock In {\em Advances in neural information processing systems}, pages
  3036--3046, 2018.

\bibitem{wei2018regularization}
Qiang~Liu Colin~Wei, Jason D.~Lee and Tengyu Ma.
\newblock Regularization matters: Generalization and optimization of neural
  nets v.s. their induced kernel.
\newblock {\em arXiv preprint arXiv:1810.05369}, 2018.

\bibitem{cybenko1989approximation}
George Cybenko.
\newblock Approximation by superpositions of a sigmoidal function.
\newblock {\em Mathematics of Control, Signals, and Systems (MCSS)},
  2(4):303--314, 1989.

\bibitem{dyer2018asymptotics}
Ethan Dyer and Guy Gur-Ari.
\newblock Asymptotics of wide networks from feynman diagrams.
\newblock In {\em ICML Workshop on Physics for Deep Learning}, 2018.

\bibitem{hanin2018neural}
Boris Hanin.
\newblock Which neural net architectures give rise to exploding and vanishing
  gradients?
\newblock In {\em Advances in Neural Information Processing Systems}, 2018.

\bibitem{hanin2018products}
Boris Hanin and Mihai Nica.
\newblock Products of many large random matrices and gradients in deep neural
  networks.
\newblock {\em arXiv preprint arXiv:1812.05994}, 2018.

\bibitem{hanin2018start}
Boris Hanin and David Rolnick.
\newblock How to start training: The effect of initialization and architecture.
\newblock In {\em Advances in Neural Information Processing Systems}, pages
  571--581, 2018.

\bibitem{hanin2019deep}
Boris Hanin and David Rolnick.
\newblock Deep relu networks have surprisingly few activation patterns.
\newblock In {\em Advances in Neural Information Processing Systems}, 2019.

\bibitem{hornik1989multilayer}
Kurt Hornik, Maxwell Stinchcombe, and Halbert White.
\newblock Multilayer feedforward networks are universal approximators.
\newblock {\em Neural networks}, 2(5):359--366, 1989.

\bibitem{jacot2018neural}
Arthur Jacot, Franck Gabriel, and Cl{\'e}ment Hongler.
\newblock Neural tangent kernel: Convergence and generalization in neural
  networks.
\newblock In {\em Advances in neural information processing systems}, pages
  8571--8580, 2018.

\bibitem{lee2019wide}
Jaehoon Lee, Lechao Xiao, Samuel~S Schoenholz, Yasaman Bahri, Jascha
  Sohl-Dickstein, and Jeffrey Pennington.
\newblock Wide neural networks of any depth evolve as linear models under
  gradient descent.
\newblock {\em arXiv preprint arXiv:1902.06720}, 2019.

\bibitem{li2019towards}
Yuanzhi Li, Colin Wei, and Tengyu Ma.
\newblock Towards explaining the regularization effect of initial large
  learning rate in training neural networks.
\newblock {\em arXiv preprint arXiv:1907.04595}, 2019.

\bibitem{mei2019mean}
Song Mei, Theodor Misiakiewicz, and Andrea Montanari.
\newblock Mean-field theory of two-layers neural networks: dimension-free
  bounds and kernel limit.
\newblock {\em arXiv preprint arXiv:1902.06015}, 2019.

\bibitem{mei2018mean}
Song Mei, Andrea Montanari, and Phan-Minh Nguyen.
\newblock A mean field view of the landscape of two-layers neural networks.
\newblock {\em arXiv preprint arXiv:1804.06561}, 2018.

\bibitem{rotskoff2018parameters}
Grant Rotskoff and Eric Vanden-Eijnden.
\newblock Parameters as interacting particles: long time convergence and
  asymptotic error scaling of neural networks.
\newblock In {\em Advances in neural information processing systems}, pages
  7146--7155, 2018.

\bibitem{rotskoff2018neural}
Grant~M Rotskoff and Eric Vanden-Eijnden.
\newblock Neural networks as interacting particle systems: Asymptotic convexity
  of the loss landscape and universal scaling of the approximation error.
\newblock {\em arXiv preprint arXiv:1805.00915}, 2018.

\bibitem{spigler2018jamming}
Stefano Spigler, Mario Geiger, St{\'e}phane d'Ascoli, Levent Sagun, Giulio
  Biroli, and Matthieu Wyart.
\newblock A jamming transition from under-to over-parametrization affects loss
  landscape and generalization.
\newblock {\em arXiv preprint arXiv:1810.09665}, 2018.

\bibitem{yang2019scaling}
Greg Yang.
\newblock Scaling limits of wide neural networks with weight sharing: Gaussian
  process behavior, gradient independence, and neural tangent kernel
  derivation.
\newblock {\em arXiv preprint arXiv:1902.04760}, 2019.

\end{thebibliography}

\end{document}